\title{
Better Uncertainty Calibration via Proper Scores for Classification and Beyond
}
\author{%
  Sebastian G. Gruber \\
  German Cancer Research Center (DKFZ), Germany \\
  German Cancer Consortium (DKTK), Frankfurt, Germany \\
  Goethe University Frankfurt, Germany \\
  \texttt{sebastian.gruber@dkfz.de} \\
    \And
    Florian Buettner \\
    German Cancer Research Center (DKFZ), Germany \\
    German Cancer Consortium (DKTK), Frankfurt, Germany \\
    Frankfurt Cancer Institute, Germany \\
    Goethe University Frankfurt, Germany \\
   \texttt{florian.buettner@dkfz.de} \\
}
\begin{document}

\theoremstyle{plain}
\newtheorem{theorem}{Theorem}[section]
\newtheorem{proposition}[theorem]{Proposition}
\newtheorem{lemma}[theorem]{Lemma}
\newtheorem{corollary}[theorem]{Corollary}
\theoremstyle{definition}
\newtheorem{definition}[theorem]{Definition}
\newtheorem{assumption}[theorem]{Assumption}
\theoremstyle{remark}
\newtheorem{remark}[theorem]{Remark}
\newtheorem{example}[theorem]{Example}

\maketitle

\begin{abstract}
With model trustworthiness being crucial for sensitive real-world applications, practitioners are putting more and more focus on improving the uncertainty calibration of deep neural networks.
Calibration errors are designed to quantify the reliability of probabilistic predictions but their estimators are usually biased and inconsistent.
In this work, we introduce the framework of \textit{proper calibration errors}, which relates every calibration error to a proper score and provides a respective upper bound with optimal estimation properties.
This relationship can be used to reliably quantify the model calibration improvement.
We theoretically and empirically demonstrate the shortcomings of commonly used estimators compared to our approach.
Due to the wide applicability of proper scores, this gives a natural extension of recalibration beyond classification.
\end{abstract}

\section{Introduction}

Deep learning became a dominant cornerstone of machine learning research in the last decade and deep neural networks can surpass human-level predictive performance on a wide range of tasks \citep{he2015delving, schrittwieser2020mastering, HAGGENMULLER2021202}.
However, \citet{guo2017calibration} have shown that for modern neural networks, better classification accuracy can come at the cost of systematic overconfidence in their predictions.
Practitioners in sensitive forecasting domains, such as cancer diagnostics \citep{HAGGENMULLER2021202}, genotype-based disease prediction \citep{KatsaouniTashkandiWieseSchulz+2021+871+885} or climate prediction \citep{yen2019application}, require for models to not only have high predictive power but also to reliably communicate uncertainty.
This raises the need to quantify and improve the quality of predictive uncertainty, ideally via a dedicated metric.
An uncertainty-aware model should give probabilistic predictions which represent the true likelihood of events depending on the very prediction.
To quantify the extend to which this condition is violated, calibration errors have been introduced.
In general, their estimators are usually biased \citep{roelofs2021mitigating} and inconsistent \citep{vaicenavicius2019evaluating}.
This, in turn, is highly problematic since we cannot quantify how reliable a model is if we do not know how reliable the metric is.
Especially the medical field is a domain that requires high model trustworthiness, but with low expert availability and/or disease frequency we often encounter a small data regime.
Resampling strategies can be viable options for optimization on small datasets but also reduce the evaluation set size even more.
We will discover that little data exacerbates the estimation bias and propose reliable alternatives for improving uncertainty calibration.

\begin{figure*}[t]
\vskip 0.2in
\centering
    \begin{subfigure}{.13\textwidth}
    \centering
    \includegraphics[width=\columnwidth]{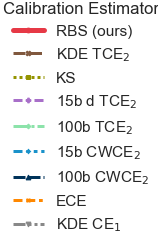}
    \end{subfigure}%
    \begin{subfigure}{.31\textwidth}
    \centering
    \includegraphics[width=\columnwidth]{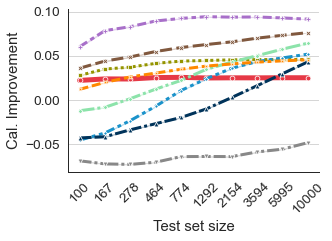}
    \caption{TS of DenseNet 40 on \\ CIFAR10}
    \end{subfigure}%
    \begin{subfigure}{.28\textwidth}
    \centering
    \includegraphics[width=\columnwidth]{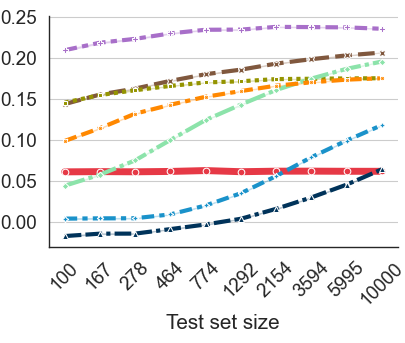}
    \caption{ETS of Wide ResNet 32 \\ on CIFAR100}
    \end{subfigure}%
    \begin{subfigure}{.28\textwidth}
    \centering
    \includegraphics[width=\columnwidth]{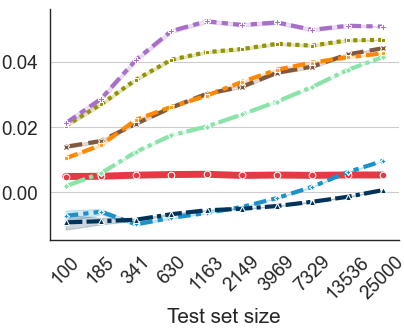}
    \caption{DIAG of DenseNet 161 on ImageNet}
    \end{subfigure}
\caption[]{
    Estimated calibration improvement for various settings. The calibration error is estimated before and after a recalibration method (TS / ETS / DIAG) is applied and the difference (i.e. calibration improvement) is shown for increasing test set size.
    All common calibration estimators are sensitive with respect to the test set size and can substantially over- or underestimate the effect of performing recalibration.\footnotemark Only RBS robustly estimates the improvement in calibration error for all test set sizes. 
}
\label{fig:RC_errors}
\vskip -0.2in
\end{figure*}

Since deep neural networks often yield uncalibrated confidence scores \citep{minderer2021revisiting}, a variety of different post-hoc recalibration approaches have been proposed \citep{doi:10.1080/01621459.1982.10477856, kumar2019verified}.\footnotetext{For consistency with other calibration estimators, we refer to ECE$^{\mathrm{KDE}}$ proposed by \citep{popordanoska2022} as KDE CE$_1$.}
These methods use the validation set to transform predictions returned by a trained neural network such that they become better calibrated.
A key desired property of recalibration methods is to not reduce the accuracy after the transformation.
Therefore, most modern approaches are restricted to accuracy preserving transformations of the model outputs \citep{Platt99probabilisticoutputs, guo2017calibration, zhang2020mix, rahimi2020intra}.
When recalibrating a model, it is crucial to have a reliable estimate of how much the chosen method improves the underlying model.
However, when using current estimators for calibration errors, their biased nature results in estimates that are highly sensitive to the number of samples in the test set that are used to compute the calibration error before and after recalibration (Fig. \ref{fig:RC_errors}; c.f. Section \ref{sec:exp}). 
The source code is openly available at \url{https://github.com/MLO-lab/better_uncertainty_calibration}.

Our \textbf{contributions} for better
uncertainty calibration are summarized in the following. We...

\begin{itemize}
    \item ... give an overview of current calibration error literature, place the errors into a taxonomy, and show which are insufficient for calibration quantification. This also includes several theoretical results, which highlight the shortcomings of current approaches.
    \item ... introduce the framework of \textbf{proper calibration errors}, which gives important guarantees and relates every element to a proper score. We can reliably estimate the improvement of an injective recalibration method w.r.t. a proper calibration error via its related proper score - even in non-classification settings.
    \item ... show that common calibration estimators are highly sensitive w.r.t. the test set size. We demonstrate that for commonly used estimators, the estimated improvement of recalibration methods is heavily biased and becomes monotonically worse with fewer test data.
\end{itemize}

\section{Related Work}

In this section, we give an extensive overview of published work regarding quantifying model calibration and model recalibration.
Important definitions will be directly given, while others are placed in Appendix \ref{app:defs}.
These will be the basis for our theoretical findings.
Further, we will motivate the definition of proper calibration errors, which are directly related to proper scores.
Consequently, we will also present important aspects of the framework around proper scores in later parts of this section.

\subsection{Calibration errors}

For clarity, we introduce shortly the notation used throughout this work.
Assume we have random variables $X$ and $Y$ corresponding to feature and target variable, and feature and target space $\mathcal{X}$ and $\mathcal{Y}$.
We have $\mathbb{P}_Y, \mathbb{P}_{Y \mid X} \in \mathcal{P}$, where $\mathbb{P}_{Y}$ refers to the distribution of $Y$, $\mathbb{P}_{Y \mid X}$ to the conditional distribution given $X$, and $\mathcal{P}$ a set of distributions on $\mathcal{Y}$.
Even though some approaches explore calibration for regression tasks \citep{song2019distribution, widmann2021calibration, chung2021pinball}, it is most dominantly considered for classification.
To distinguish between the general case and $n$-class classification, we refer to $\mathcal{P}_n$ as the $n$-dimensional simplex of corresponding categorical distributions.

A popular task is the calibration of the predicted top-label $C = \arg\max_k f_k \left( X \right)$ of a model $f \colon \mathcal{X} \to \mathcal{P}_n$ \citep{guo2017calibration, pmlr-v80-kumar18a, maddox2019simple, nixon2019measuring, vaicenavicius2019evaluating, Tomani_2021_CVPR, rahimi2020intra}.
Here, the top-label confidence should represent the accuracy of this very prediction.
Formally, we try to reach the condition 
$f_C \left( X \right) = \mathbb{P} \left(Y = C \mid f_C \left( X \right) \right)$.
However, the condition is weaker as one might expect, and referring to a model fulfilling this condition as (perfectly) calibrated can give a false sense of security \citep{vaicenavicius2019evaluating, widmann2019calibration}.
This holds especially in forecasting domains, where low probability estimates can still be highly relevant.
For example, assigning probability mass to an aggressive type of cancer can still trigger action even if it is not predicted as the most likely outcome.
Further, we might also be interested in predictive uncertainty for non-classification tasks.
Consequently, we use the stricter and more general condition that the complete prediction $f \left( X \right)$ should be equal to the complete conditional distribution $\mathbb{P}_{Y \mid f \left( X \right)}$ of the target given this very prediction as introduced by \citet{widmann2021calibration}.
More formally, we state:

\begin{definition}
    A model $f \colon \mathcal{X} \to \mathcal{P}$ is \textbf{calibrated} if and only if $f \left( X \right) = \mathbb{P}_{Y \mid f \left( X \right)}$.
\label{def:strongly_cal}
\end{definition}

Note that $\mathcal{P}$ can be a set of arbitrary distributions, which incorporates $\mathcal{P}_n$ (classification) as a special case.


One of the first metrics for assessing model calibration that is still widely used in recent literature is the Brier score (BS) \citep{ANewVectorPartitionoftheProbabilityScore, Platt99probabilisticoutputs, gupta2020calibration, rahimi2020intra}.
For a model $f \colon \mathcal{X} \to \mathcal{P}_n$ the \textbf{Brier score} \citep{VERIFICATIONOFFORECASTSEXPRESSEDINTERMSOFPROBABILITY} is defined as 
\begin{equation}
    \text{BS} \left( f \right) = \mathbb{E} \left[ \left\| f \left( X \right) - Y^\prime \right\|^2_2 \right],
\label{def:bs}
\end{equation}
where $Y^\prime$ is one-hot-encoded $Y$.
The estimator of the BS is equivalent to the mean squared error, illustrating that it does not purely capture model calibration.
Rather, the BS can be interpreted as a comprehensive measure of model performance, simultaneously capturing model fit and calibration.
This becomes more obvious via the canonical decomposition of the BS into a calibration and sharpness term \citep{ANewVectorPartitionoftheProbabilityScore}.
Based on this decomposition, we can derive the following error.
For $1 \leq p \in \mathbb{R}$, the \textbf{$L^p$ calibration error} (CE$_p$) of model $f \colon \mathcal{X} \to \mathcal{P}_n$ is defined as 
\begin{equation}
    \text{CE}_p \left( f \right) = \left( \mathbb{E} \left[ \left\| f \left( X \right) - \mathbb{P}_{Y \mid f \left( X \right)} \right\|^p_p \right] \right)^{\frac{1}{p}}.
\label{def:lp_ce}
\end{equation}
The BS decomposition only supports the squared case, but a general $L^p$ formulation became more common in recent years \citep{vaicenavicius2019evaluating, widmann2019calibration, zhang2020mix, popordanoska2022}.
\citet{popordanoska2022} proposed to estimate CE$_p$ via multivariate kernel density estimation.
In general, calibration estimation is difficult due to the term $\mathbb{P}_{Y \mid f \left( X \right)}$ since we never have samples of every possible prediction for continuous models.
This is in contrast to the original work of \citet{ANewVectorPartitionoftheProbabilityScore}, where only models with a finite prediction space are considered. 
To assess the calibration of a continuous binary model \citet{Platt99probabilisticoutputs} used histogram estimation, transforming the infinite prediction space to a finite one.
This is also referred to as equal width binning.
Similarly, \citet{nguyen2015posterior} introduced an equal mass binning scheme for continuous binary models.
Both, equal width and equal mass binning schemes, suffer from the requirement of setting a hyperparameter.
This can significantly influence the estimated value \citep{kumar2019verified} and there is no optimal default since every setting has a different bias-variance tradeoff \citep{nixon2019measuring}.
The first calibration estimator for a continuous one-vs-all multi-class model was given by \citet{10.5555/2888116.2888120} and is still the most commonly used measure to quantify calibration.
It is referred to as the \textbf{expected calibration error} (ECE) of model $f \colon \mathcal{X} \to \mathcal{P}_n$ and defined as
\begin{equation}
\begin{split}
    & \text{ECE} \left( f \right) = \sum_{i=1}^m p_i \left\lvert \text{conf}_i - \text{acc}_i \right\rvert
\label{def:ece}
\end{split}
\end{equation}
with the bin frequency $p_i = \mathbb{P} \left( f_C \left( X \right) \in B_i \right)$, the bin-wise mean confidence $\text{conf}_i = \mathbb{E} \left[ f_C \left( X \right) \mid f_C \left( X \right) \in B_i \right]$, and the bin-wise accuracy $\text{acc}_i = \mathbb{P} \left( Y = C \mid f_C \left( X \right) \in B_i \right)$, for $m \in \mathbb{N}$ bins $B_i \coloneqq \left( \frac{i - 1}{m}, \frac{i}{m} \right]$.
We can estimate $p_i$, $\text{conf}_i$, and $\text{acc}_i$ via the respective means.
These are then used in equation \eqref{def:ece} to estimate the ECE.
This estimator is biased \citep{kumar2019verified, vaicenavicius2019evaluating}.

\citet{kull2019beyond} and \citet{nixon2019measuring} independently introduced another calibration estimator, which also captures the extend to which the condition $\mathbb{P} \left(Y = k \mid f_k \left( X \right) \right) = f_k \left( X \right)$ is violated for each class $k \in \mathcal{Y}$.
They respectively use equal width and equal mass binning.
Similar to these estimators, \citet{kumar2019verified} introduced the \textbf{class-wise calibration error} (CWCE$_p$) and, similar to the ECE, the \textbf{top-label calibration error} (TCE$_p$).
They only formulated the squared case $p=2$ but suggested the extension of their definitions to general $p$-norms, which we provide in Appendix \ref{app:defs}.

Furthermore, \citet{kumar2019verified} and \citet{vaicenavicius2019evaluating} proved independently that using a fixed binning scheme for estimation leads to a lower bound of the expected error.
\citet{zhang2020mix} circumvent binning schemes by using kernel density estimation to estimate the TCE$_p$.

Orthogonal ways to quantify model miscalibration have been proposed to not depend on binning or kernel density estimation schemes.
\citet{gupta2020calibration} introduced the \textbf{Kolmogorov-Smirnov calibration error} (KS) (c.f. Appendix \ref{app:defs}), which is based on the Kolmogorov-Smirnov test between empirical cumulative distribution functions.
Its estimator does not require setting a hyperparameter but the authors did not provide further theoretical aspects.

Estimators of the TCE$_p$ and CWCE$_p$ are in general not differentiable.
Consequently, \citet{pmlr-v80-kumar18a} proposed the \textbf{Maximum mean calibration error} (MMCE) (c.f. Appendix \ref{app:defs}), which has a differentiable estimator.
It is a kernel-based error, comparing the top-label confidence and the conditional accuracy, similar to the ECE.

\citet{widmann2019calibration} argued that the MMCE is insufficient for quantifying calibration of a model, similar as the ECE.
They further proposed the \textbf{Kernel calibration error} (KCE) (c.f. Appendix \ref{app:defs}).
It is based on matrix-valued kernels and unlike the MMCE, which only uses the top-label prediction, includes the whole model prediction.
The squared KCE has an unbiased estimator based on a U-statistic with quadratic runtime complexity with respect to the data size.
However, even though the KCE is positive, the U-statistic estimator can give negative values.
To this end, the authors use the estimated value as a test statistic w.r.t. the null hypothesis that the model is calibrated.

Furthermore, \citet{widmann2019calibration} and \citet{widmann2021calibration} proposed to unify different definitions of calibration errors in a theoretical framework, which also includes variance regression calibration.
However, the framework allows calibration errors, which are zero even if the model is not calibrated at all.

\subsection{Recalibration}

A plethora of recalibration methods have been proposed to improve model calibration after training by transforming the model output probabilities \citep{hastie1998classification, Platt99probabilisticoutputs, zadrozny2001calib, 10.1145/775047.775151, 10.5555/2888116.2888120, nguyen2015posterior, pmlr-v54-kull17a, guo2017calibration, kull2019beyond, kumar2019verified, zhang2020mix, gupta2020calibration, rahimi2020intra}.
These methods are optimized on a specific calibration set, which is usually the validation set.
Key desiderata of these methods include for the algorithms to be accuracy-preserving and data-efficient \citep{zhang2020mix}, reflecting that typical use-cases include settings in sensitive domains where accuracy should remain unchanged and often little data is available to train and evaluate the models.
Such accuracy-preserving methods only adjust the probability estimate in such a way that the predicted top-label remains the same.
The most commonly used accuracy-preserving recalibration method is temperature scaling (TS) \citep{guo2017calibration}, where the model logits are divided by a single parameter $T \in \mathbb{R}_{>0}$ before computing the predictions via softmax.
A more expressive extension of TS is ensemble temperature scaling (ETS) \citep{zhang2020mix}, where a weighted ensemble of TS output, model output, and label smoothing is computed.
\citet{rahimi2020intra} proposed different classes of order-preserving transformations.
A specifically interesting one is the class of diagonal intra order-preserving functions (DIAG).
Here, the model logits are transformed elementwise with a scalar, monotonic, and continuous function, which is represented by neural networks of unconstrained monotonic functions \citep{wehenkel2019unconstrained}.

\subsection{Proper scores}

\citet{gneitingscores} give an extensive overview of proper scores.
Unfortunately, their presented definitions assume maximization as the model training objective.
To stay in line with recent machine learning literature, we flip the sign when it is required in the following definitions, similar as in \citep{Br_cker_2009}.
We specifically do not constrain ourselves to classification, which is a special case.
Assume we give a prediction in $\mathcal{P}$ for an event in $\mathcal{Y}$ and we want to score how good the prediction was.
A function $S \; \colon \; \mathcal{P} \times \mathcal{Y} \to \overline{\mathbb{R}}$ with $\overline{\mathbb{R}} \coloneqq \mathbb{R} \cup \left\{- \infty, \infty \right\}$ is called \textbf{scoring rule} or just \textbf{score}.
Examples are the Brier score and the log score for classification, or the Dawid-Sebastiani score (DSS), which extends the MSE for variance regression \citep{dawid1999coherent, gneiting2014probabilistic}.
It is defined as $S \left( P, y \right) = \frac{\left(\mu_P - y \right)^2}{\sigma_P^2} + \log \sigma_P^2$.
To compare distributions, we use the expected score $s_S \colon \mathcal{P} \times \mathcal{P} \to \mathbb{R}$ defined as $s_S \left(P, Q \right) = \mathbb{E}_{Y \sim Q} \left[ S \left(P, Y \right) \right]$. 
A scoring rule $S$ is defined to be \textbf{proper} if and only if $s_S \left(P, Q \right) \geq s_S \left(Q, Q \right)$ holds for all $P, Q \in \mathcal{P}$, 
and \textbf{strictly proper} if and only if $P \neq Q \implies s_S \left(P, Q \right) > s_S \left(Q, Q \right)$.
In other words, a score is proper if predicting the target distribution gives the best expected value and strictly proper if no other prediction can achieve this value.
Given a proper score $S$ and $P, Q \in \mathcal{P}$, the associated \textbf{divergence} $d_S \colon \mathcal{P} \times \mathcal{P} \to \mathbb{R}_{\geq 0}$ is defined as
 $d_S \left(P, Q \right) = s_S \left(P, Q \right) - s_S \left(Q, Q \right)$
and the associated \textbf{generalized entropy} $g_S \colon \mathcal{P} \to \mathbb{R}$ as
$g_S \left( Q \right) = s_S \left(Q, Q \right)$.
For strictly proper $S$, $d_S$ is only zero if $P = Q$; for (strictly) proper $S$, $g_S$ is (strictly) concave.
An example of a divergence-entropy combination is the Kullback-Leibler divergence and the Shannon entropy associated to the log score.

Associated entropies and divergences are used in the calibration-sharpness decomposition introduced by \citet{Br_cker_2009} for proper scores of categorical distributions.
In Lemma \ref{lemma:cal_sharp_decomp} we will prove that this result holds for proper scores of arbitrary distributions, as long as additional conditions are met.
Further, associated divergences will be the backbone for our definition of \textit{proper calibration errors} in Section \ref{sec:pce}.

\section{Theoretical analysis of calibration errors}

In this section, we present theoretical results regarding calibration errors used in current literature.
First, we place these calibration errors into a taxonomy, which we introduce in Theorem \ref{th:ce_relations}.
Next, we give an example of how much errors lower in the hierarchy can differ from CE$_1$ in Proposition \ref{prop:neg_example}.
Last, we analyse the ECE estimate with respect to the data size in Proposition \ref{prop:ece_estimator}.
This proposition is the basis for explaining the empirical (miss-)behaviour of the ECE observed in Section \ref{sec:exp}.
All proofs are presented in Appendix \ref{app:proofs}.

To the best of the authors' knowledge, other publications provided relations between at most two distinct calibration errors or none at all while introducing a new one.
The taxonomy in the following theorem is a novel contribution, improving overview of a whole body of work regarding quantifying calibration.

\begin{theorem}
    Given a model $f \; \colon \; \mathcal{X} \to \mathcal{P}_n$ and $1 \leq p \in \mathbb{R}$, we have
    \begin{equation*}
    \begin{split}
        \text{BS} \left( f \right) = 0
        \! \implies \!\!\! \begin{Bmatrix}
        \text{CE}_p \left( f \right) = 0 \\
        \text{KCE} \left( f \right) = 0 \\
        f \text{ calibrated}
        \end{Bmatrix} 
        \!\!\! \implies \! 
        \begin{cases}
            \text{CWCE}_p \left( f \right) = 0 
            \\ \quad \\
            \begin{Bmatrix}
            \text{TCE}_p \left( f \right) = 0 \\
            \text{MMCE} \left( f \right) = 0 \\
            \text{KS} \left( f \right) = 0 \\
            \end{Bmatrix} 
            \!\!\! \implies \! \text{ECE} \left( f \right) = 0,
        \end{cases}
    \label{implies_chain}
    \end{split}
    \end{equation*}
    where statements inside curly brackets $\begin{Bmatrix} \dots \end{Bmatrix}$ are equivalent. Further, we have
    \begin{equation*}
    \begin{split}
        n^{\frac{1}{p} - \frac{1}{2}} \sqrt{\text{BS} \left( f \right)} 
        \overset{*}{\geq} \text{CE}_p \left( f \right)
        \geq 
        \begin{cases}
            \text{CWCE}_p \left( f \right) \\
            \text{TCE}_p \left( f \right) 
            \geq \text{TCE}_1 \left( f \right) 
            \geq \begin{cases}
            \text{KS} \left( f \right) \\
            \text{ECE} \left( f \right) \\
            c \cdot \text{MMCE} \left( f \right)
            \end{cases}
            \geq 0,            
        \end{cases}
    \end{split}
    \label{unequalities}
    \end{equation*}

    where * only holds for $p \leq 2$. The kernel dependent constant $c \in \mathbb{R}$ is given in Appendix \ref{th:app:ce_relations} according to \citet{pmlr-v80-kumar18a}.
\label{th:ce_relations}
\end{theorem}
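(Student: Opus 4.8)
The plan is to first reduce the implication chain to the inequality chain plus a handful of biconditionals, and then prove the inequalities from left to right. Every quantity on the right of the displayed inequalities is nonnegative, so whenever an upper bound vanishes the lower quantity vanishes too; moreover $\text{BS}(f)=0$ forces $f(X)=Y^\prime$ almost surely, hence $\mathbb{P}_{Y\mid f(X)}=f(X)$ a.s., i.e.\ $f$ is calibrated, which by the inequalities kills every error in the diagram. Thus the implications follow once we establish (A) the brace equivalences and (B) the inequality chain.

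For (A): the equivalences $\text{CE}_p(f)=0\iff f$ calibrated, $\text{CWCE}_p(f)=0\iff f$ class-wise calibrated, and $\text{TCE}_p(f)=0\iff f$ top-label calibrated are direct unpackings of the definitions, since in each case the error is an $L^p$-norm of a nonnegative integrand that vanishes a.s.\ exactly when the corresponding defining identity holds; and ``$f$ calibrated'' implies class-wise and top-label calibration via $\text{CE}_p\ge\text{CWCE}_p\ge\text{TCE}_p$. The remaining equivalences $\text{KCE}(f)=0\iff f$ calibrated and $\text{MMCE}(f)=0\iff\text{KS}(f)=0\iff f$ top-label calibrated are precisely the characteristic-kernel / injective-statistic results of \citet{widmann2019calibration}, \citet{pmlr-v80-kumar18a} and \citet{gupta2020calibration}, which we invoke (this is why the KCE and MMCE definitions in Appendix~\ref{app:defs} are stated with a characteristic kernel).

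For (B), working left to right: (i) $n^{1/p-1/2}\sqrt{\text{BS}(f)}\ge\text{CE}_p(f)$ for $p\le2$ follows from the calibration--sharpness decomposition $\text{BS}(f)=\mathbb{E}\lVert f(X)-q(X)\rVert_2^2+\mathbb{E}\lVert q(X)-Y^\prime\rVert_2^2$ with $q(X)=\mathbb{E}[Y^\prime\mid f(X)]=\mathbb{P}_{Y\mid f(X)}$ (the cross term vanishes since $f(X)-q(X)$ is $\sigma(f(X))$-measurable; cf.\ \citet{ANewVectorPartitionoftheProbabilityScore} and Lemma~\ref{lemma:cal_sharp_decomp}), which gives $\text{CE}_2(f)^2\le\text{BS}(f)$, combined with the dimension-dependent bound $\lVert v\rVert_p\le n^{1/p-1/2}\lVert v\rVert_2$ on $\mathbb{R}^n$ for $1\le p\le2$ and Jensen for the concave map $t\mapsto t^{p/2}$. (ii) $\text{CE}_p(f)\ge\text{CWCE}_p(f)$: by the tower property $\mathbb{E}[Y^\prime_k\mid f_k(X)]=\mathbb{E}[\mathbb{E}[Y^\prime_k\mid f(X)]\mid f_k(X)]$, so $f_k(X)-\mathbb{E}[Y^\prime_k\mid f_k(X)]=\mathbb{E}[f_k(X)-\mathbb{E}[Y^\prime_k\mid f(X)]\mid f_k(X)]$, and conditional Jensen followed by summation over $k$ gives the claim. (iii) $\text{CWCE}_p(f)\ge\text{TCE}_p(f)$ (see below). (iv) $\text{TCE}_p(f)\ge\text{TCE}_1(f)$ is the power-mean inequality $(\mathbb{E}Z^p)^{1/p}\ge\mathbb{E}Z$ for $p\ge1$. (v) $\text{TCE}_1(f)\ge\text{ECE}(f)$ and $\text{TCE}_1(f)\ge\text{KS}(f)$: writing $h(X)=f_C(X)-\mathbb{E}[\mathbf 1\{Y=C\}\mid f_C(X)]$, coarsening $f_C(X)$ to its bin and applying Jensen together with the tower property gives $\mathbb{E}\lvert h(X)\rvert\ge\text{ECE}(f)$, while $\text{KS}(f)=\sup_t\lvert\mathbb{E}[h(X)\mathbf 1\{f_C(X)\le t\}]\rvert\le\mathbb{E}\lvert h(X)\rvert$. (vi) $\text{TCE}_1(f)\ge c\cdot\text{MMCE}(f)$ with the kernel-dependent $c$ is the RKHS bound of \citet{pmlr-v80-kumar18a} recorded in Appendix~\ref{th:app:ce_relations}; nonnegativity is immediate. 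For $p>2$ the implication $\text{BS}(f)=0\Rightarrow\text{CE}_p(f)=0$ does not use (i): $\text{BS}(f)=0$ already gives $f(X)=Y^\prime$ a.s., which makes the $\text{CE}_p$ integrand vanish.

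The main obstacle is step (iii). The natural route is to partition by the predicted class $A_k=\{C(X)=k\}$ and write $\mathbb{E}[\mathbf 1\{Y=C\}\mid f_C(X)=v]=\sum_k\mathbb{P}(A_k\mid f_C(X)=v)\,\mathbb{E}[\mathbf 1\{Y=k\}\mid A_k,f_k(X)=v]$, so that conditional Jensen applied to this convex combination plus the tower property reduces $\text{TCE}_p(f)^p$ to $\sum_k\mathbb{E}[\mathbf 1_{A_k}\lvert f_k(X)-\mathbb{E}[\mathbf 1\{Y=k\}\mid A_k,f_k(X)]\rvert^p]$; one then has to bound this by $\text{CWCE}_p(f)^p=\sum_k\mathbb{E}[\lvert f_k(X)-\mathbb{E}[\mathbf 1\{Y=k\}\mid f_k(X)]\rvert^p]$ by expressing the unrestricted recalibration map $\mathbb{E}[\mathbf 1\{Y=k\}\mid f_k(X)]$ as a convex combination of its restrictions to $A_k$ and $A_k^c$ and applying convexity of $\lvert\cdot\rvert^p$ -- crucially keeping, rather than discarding, the $A_k^c$ contributions, and handling ties in the $\arg\max$. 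Unlike the other inequalities, this step does not collapse to a single Jensen application, and getting that bookkeeping right is where the real work lies; the remaining details are routine and deferred to Appendix~\ref{app:proofs}.
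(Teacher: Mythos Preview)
Your argument tracks the paper's proof almost step for step on (i), (ii), (iv), (v), and (vi): same calibration--sharpness decomposition for the Brier bound, same tower-plus-Jensen for $\text{CE}_p\ge\text{CWCE}_p$, same power-mean for $\text{TCE}_p\ge\text{TCE}_1$, same conditional Jensen against the binning $\sigma$-algebra for the ECE bound, and the same citations to \citet{widmann2019calibration} and \citet{pmlr-v80-kumar18a} for the kernel errors.

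The one genuine divergence is step (iii), $\text{CWCE}_p\ge\text{TCE}_p$, which you single out as ``the main obstacle'' and attack via a partition over $A_k=\{C=k\}$ followed by iterated Jensen on convex combinations of restricted and unrestricted recalibration maps. The paper does none of this: it writes
\[
\text{CWCE}_p(f)^p=\mathbb{E}\Bigl[\sum_{k=1}^n\bigl|f_k(X)-\mathbb{P}(Y=k\mid f_k(X))\bigr|^p\Bigr]
\]
and simply discards every nonnegative summand except the one indexed by $k=C$, declaring the surviving term equal to $\text{TCE}_p(f)^p$. That is the whole argument. Your more elaborate route is not unmotivated---the paper's one-liner silently identifies $\mathbb{P}(Y=k\mid f_k(X))\big|_{k=C}$ with $\mathbb{P}(Y=C\mid f_C(X))$, i.e.\ conditioning on $f_k(X)$ for a \emph{fixed} $k$ versus conditioning on $\max_j f_j(X)$, which is precisely the subtlety your $A_k$-partition addresses---but the paper does not treat this as an obstacle at all.

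One small mismatch: your formula $\text{KS}(f)=\sup_t|\mathbb{E}[h(X)\mathbf 1\{f_C(X)\le t\}]|$ is not the paper's definition, which takes an outer expectation over $C$ of a per-class Kolmogorov--Smirnov distance (Definition~\ref{def:ks_error}); the paper's proof of $\text{TCE}_1\ge\text{KS}$ first splits on $C$ via Tonelli before applying the same $\bigl|\int\cdot\bigr|\le\int|\cdot|$ bound you use.
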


From this theorem follows that it is ambiguous to refer to \textit{perfect calibration} just because there exists a calibration error which is zero for a model.
The proof of this theorem also contains $n^{\frac{1}{p} - \frac{1}{q}} \text{CE}_q \left( f \right) \geq \text{CE}_p \left( f \right)$ for $p \leq q$, which is a contradiction to Theorem 1 in \citep{wenger2020non}.
Next, we demonstrate how large the gap between calibration errors can be.

\begin{proposition}
Assume the model $f \colon \mathcal{X} \to \mathcal{P}_n$ is surjective.
There exists a joint distribution of $X$ and $Y$ such that for all $E \in \left\{ \text{MMCE}, \text{KS}, \text{ECE}, \text{TCE}_p, \text{CWCE}_p \mid 1 \leq p \in \mathbb{R} \right\} \colon$
\begin{equation*}
\begin{split}
    E \left( f \right) = 0 \quad \text{and} \quad \text{CE}_2 \left( f \right) = \sqrt{1 - \frac{1}{n-1}} 0.49.
\end{split}
\end{equation*}
\label{prop:neg_example}
\end{proposition}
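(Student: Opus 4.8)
The plan is to reduce the statement to a single strong calibration property via Theorem~\ref{th:ce_relations}, identify the extremal behaviour one must target, and then exhibit a near‑extremal example. For any fixed $p\ge 1$, $\mathrm{CWCE}_p(f)=0$ is equivalent to $f$ being \emph{classwise calibrated}, i.e.\ $\mathbb P(Y=k\mid f_k(X))=f_k(X)$ a.s.\ for every $k$, and this then holds for all $p$ at once; by the implication chain of Theorem~\ref{th:ce_relations}, classwise calibration forces $\mathrm{TCE}_p(f)=\mathrm{MMCE}(f)=\mathrm{KS}(f)=\mathrm{ECE}(f)=0$. So it suffices to pick the joint law of $(X,Y)$ so that $f$ is classwise calibrated while $\mathrm{CE}_2(f)^2=\mathbb E\|f(X)-\mathbb P_{Y\mid f(X)}\|_2^2\ge 0.99-\tfrac1n$.

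To see what to target, I would first record an identity: expanding the square and using classwise calibration to rewrite $\mathbb E[f_k(X)\mathbbm 1\{Y=k\}]=\mathbb E\big[f_k(X)\,\mathbb E[\mathbbm 1\{Y=k\}\mid f_k(X)]\big]=\mathbb E[f_k(X)^2]$ coordinatewise, one gets
\begin{equation*}
\mathrm{CE}_2(f)^2=\mathbb E\big\|\mathbb P_{Y\mid f(X)}\big\|_2^2-\mathbb E\|f(X)\|_2^2\ \le\ 1-\tfrac1n ,
\end{equation*}
using $\|\mathbb P_{Y\mid f(X)}\|_2^2\le 1$ and $\|f(X)\|_2^2\ge\tfrac1n$. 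Equality is approached precisely when $f(X)$ is close to the uniform prediction $\tfrac1n\mathbf 1$ a.s.\ and $\mathbb P_{Y\mid f(X)}$ is close to a point mass a.s. So the threshold $0.99-\tfrac1n$ lies just below the theoretical maximum, and the example must be nearly extremal on both counts; note $n\ge 3$ is needed, since for $n=2$ classwise calibration coincides with Definition~\ref{def:strongly_cal} and no gap exists.

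The construction exploits that surjectivity of $f$ lets me freely prescribe the law of $f(X)$ on $\mathcal P_n$ (choose $x_v\in f^{-1}(v)$ for each value $v$ used) and, given $X$, the law of $Y$. The idea is to make $f(X)$ range over a family of near‑uniform vectors that \emph{secret‑share} a latent label $\psi\in\{1,\dots,n\}$: the full vector $f(X)$ determines $\psi$, yet each single coordinate $f_k(X)$ is only as informative about $\psi$ as classwise calibration allows, namely $\mathbb P(\psi=k\mid f_k(X)=t)=t$. Setting $Y:=\psi$ makes $\mathbb P_{Y\mid f(X)}=e_\psi$ (a unit basis vector), so $\mathrm{CE}_2(f)^2=\mathbb E\|f(X)-e_\psi\|_2^2=\|\tfrac1n\mathbf 1-e_1\|_2^2+o(1)=(1-\tfrac1n)+o(1)$, the $o(1)$ shrinking as the value family concentrates near $\tfrac1n\mathbf 1$; choosing it tight enough yields $\mathrm{CE}_2(f)^2\ge 0.99-\tfrac1n$. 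A fully explicit instance already covers $n\ge 11$: take $J$ uniform on $\mathbb Z/n$, $f(X):=\tfrac1{n-1}(\mathbf 1-e_J)$ and $Y:=J+1\bmod n$; then $f_k(X)\in\{0,\tfrac1{n-1}\}$, a direct check gives $\mathbb P(Y=k\mid f_k(X)=0)=0$ and $\mathbb P(Y=k\mid f_k(X)=\tfrac1{n-1})=\tfrac1{n-1}$ (hence $f$ classwise calibrated), $\mathbb P_{Y\mid f(X)}=e_{J+1}$, and $\mathrm{CE}_2(f)^2=\tfrac{n-2}{n-1}=1-\tfrac1{n-1}$, which is $\ge 0.99-\tfrac1n$ whenever $n(n-1)\ge 100$.

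The hard part will be the secret‑sharing step for the remaining (small) $n$: producing prediction vectors arbitrarily close to $\tfrac1n\mathbf 1$ whose full value pins down $\psi$ while each coordinate leaks only the calibrated amount — equivalently, a measurable partition $A_1,\dots,A_n$ of a small neighbourhood of $\tfrac1n\mathbf 1$ in $\mathcal P_n$ with $\mathbb P(f(X)\in A_k\mid f_k(X)=t)=t$ for all $k,t$. Any ``legible'' encoding is self‑defeating (a coordinate that unambiguously reveals $\psi$ destroys classwise calibration on that coordinate), so a genuinely non‑trivial, e.g.\ perturbative, construction of this coloring is needed; I expect this to be the main obstacle. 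The other ingredients — measurability, the choice of atoms of $X$ realizing surjectivity, and the $o(1)$ estimate — are routine.
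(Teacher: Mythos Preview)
Your approach is the same as the paper's: use Theorem~\ref{th:ce_relations} to reduce everything to $\mathrm{CWCE}_p(f)=0$, derive under classwise calibration the identity $\mathrm{CE}_2(f)^2=\mathbb E\bigl\|\mathbb P_{Y\mid f(X)}\bigr\|_2^2-\mathbb E\|f(X)\|_2^2$, and then build a joint law where $f(X)$ is near the uniform vector while $Y$ is a \emph{deterministic} function of $f(X)$ (so that $\|\mathbb P_{Y\mid f(X)}\|_2^2=1$). The paper carries out exactly this computation, arriving at $\mathrm{CE}_2(f)^2=1-\mathbb E\|f(X)\|_2^2\ge 1-\tfrac1n-\epsilon$.

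Where you differ is only in rigour on the construction. The paper simply writes ``choose $\mathbb P_{X,Y}$ such that $\mathbb E\|f(X)\|_2^2\le\tfrac1n+\epsilon$ and $\mathbb P(Y=k\mid f(X))\in\{0,1\}$, the value $1$ occurring with probability $f_k(X)$'', and justifies this only by surjectivity of $f$ and of $\|\cdot\|_2^2$ --- which covers the near-uniform requirement but \emph{not} the existence of the measurable ``secret-sharing'' map $\psi$ you isolate (a coloring of a neighbourhood of $\tfrac1n\mathbf 1$ with $\mathbb P(\psi(F)=k\mid F_k=t)=t$). So the step you flag as the main obstacle is precisely the step the paper asserts without proof; your explicit $\mathbb Z/n$-shift example for $n\ge 11$ and your observation that $n\ge 3$ is necessary already go beyond what the paper supplies. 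In short: your plan is correct, it matches the paper, and it is in fact more careful; the small-$n$ completion you worry about is a genuine (shared) gap, not a defect of your strategy.
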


\begin{wrapfigure}{l}{0.35\textwidth}
    \begin{center}
    \includegraphics[width=0.35\textwidth]{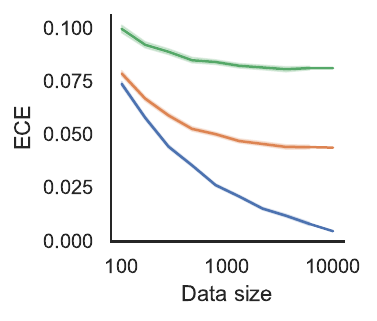}
    \caption{
    Estimated ECE of simulated models with perfect calibration (\textbf{blue}), mediocre calibration (\textbf{orange}), and bad calibration (\textbf{green}).
    Better calibration exacerbates ECE bias with respect to the data size, leading to unreliably calibration improvement quantification.
    }
    \label{fig:ece_sim}
    \end{center}
\vskip -0.2in
\end{wrapfigure}

Recall that $\text{CE}_2 \left( f \right) = 0$ if and only if $f$ is calibrated.
In other words, most used calibration errors can be zero, but the model is still far from calibrated.
An example of a model transformation with perfect ECE but uncalibrated outputs is given in Proposition \ref{prop:transform}.

Among all calibration errors, the ECE is still the most commonly used \citep{Joo2020BeingBA, Kristiadi2020BeingBE, rahimi2020intra, Tomani_2021_CVPR, minderer2021revisiting, Tian2021AGP, Islam2021ABS, Menon2021ASP, Moraleslvarez2021ActivationlevelUI, Gupta2021BERTF, Wang2021BeCT, Fan2022AcceleratingBN}, even though its estimator is knowingly biased \citep{kumar2019verified, roelofs2021mitigating}.
Let $\hat{\text{ECE}}_{\left(n \right)}$ denote the ECE estimator for $n$ data instances as originally defined in \citet{guo2017calibration}.
The following gives an analysis how this estimate behaves approximately with respect to $n$ and how this is further impacted by the ground truth ECE value.
For simplicity, we omit the fixed model from the notation.

\begin{proposition}
    For 
    $\mu_{\left(n\right)} \approx \mathbb{E} \left[ \hat{\text{ECE}}_{\left(n \right)} \right] \geq \text{ECE}$
    defined in Appendix \ref{app:sec:mu_n}, we have
    \begin{equation*}
    \begin{split}
        \frac{\mathrm{d} \mu_{\left(n\right)}}{\mathrm{d} n} < 0 \;, \quad \frac{\mathrm{d}^2 \mu_{\left(n\right)}}{\left(\mathrm{d} n \right)^2} > 0 \; \text{, and} \quad \frac{\mathrm{d}^2 \mu_{\left(n\right)}}{\mathrm{d} n \; \mathrm{d} \text{ECE}} > 0.
    \end{split}
    \end{equation*}
    
\label{prop:ece_estimator}
\end{proposition}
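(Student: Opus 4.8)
The plan is to reduce the whole statement to a calculus check on the closed form of $\mu_{(n)}$ given in Appendix~\ref{app:sec:mu_n}. Writing $\Delta_i = \text{conf}_i - \text{acc}_i$ for the per-bin gap and modelling the empirical gap $\widehat{\text{conf}}_i - \widehat{\text{acc}}_i$ inside bin $i$ as approximately $\mathcal{N}(\Delta_i, \sigma_{0,i}^2/n)$ — the $1/n$ arising from $n_i \approx n p_i$ samples per bin — the expected estimator is approximated by
\begin{equation*}
\mu_{(n)} \;=\; \sum_i p_i\, h\!\left(|\Delta_i|,\ \sigma_{0,i}/\sqrt{n}\right), \qquad h(\mu,\sigma) \;:=\; \mathbb{E}\bigl|\mathcal{N}(\mu,\sigma^2)\bigr| \;=\; \sigma\sqrt{\tfrac{2}{\pi}}\,e^{-\mu^2/(2\sigma^2)} + \mu\,\operatorname{erf}\!\left(\tfrac{\mu}{\sigma\sqrt{2}}\right),
\end{equation*}
the folded-normal mean. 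Here $\text{ECE} = \sum_i p_i|\Delta_i|$, and $\mu_{(n)} \ge \text{ECE}$ follows termwise from Jensen's inequality applied to $|\cdot|$. After extending $n$ to a positive real, all three assertions are statements about signs of derivatives of this function.

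The key simplification is that the scale derivative of $h$ collapses: a one-line computation shows that the $\mu^2/\sigma^2$ contributions obtained from differentiating the exponential prefactor and the $\operatorname{erf}$ term cancel exactly, leaving
\begin{equation*}
\frac{\partial h}{\partial\sigma}(\mu,\sigma) \;=\; \sqrt{\tfrac{2}{\pi}}\; e^{-\mu^2/(2\sigma^2)} \;>\; 0 .
\end{equation*}
Since $n \mapsto \sigma_{0,i}/\sqrt{n}$ is strictly decreasing and strictly convex, the chain rule gives, termwise, $\frac{\mathrm{d}}{\mathrm{d}n}h(|\Delta_i|,\sigma_{0,i}/\sqrt{n}) = -\tfrac{\sigma_{0,i}}{2}\sqrt{\tfrac{2}{\pi}}\,n^{-3/2}e^{-n\Delta_i^2/(2\sigma_{0,i}^2)} < 0$, and differentiating once more in $n$ multiplies the magnitude of this expression by $\tfrac{3}{2n} + \tfrac{\Delta_i^2}{2\sigma_{0,i}^2} > 0$ and restores a positive sign, so each summand is convex. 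Summing over $i$ preserves both signs, which are exactly the first two claims.

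For the mixed derivative one must first make precise what it means to vary $\text{ECE}$ with the bin structure fixed. If the appendix writes $\mu_{(n)}$ directly as the scalar-gap folded-normal mean $h(\text{ECE},\sigma_0/\sqrt{n})$, then from $\frac{\mathrm{d}\mu_{(n)}}{\mathrm{d}n} = -\tfrac{\sigma_0}{2}\sqrt{2/\pi}\,n^{-3/2}e^{-n\,\text{ECE}^2/(2\sigma_0^2)}$ one differentiates the (only) $\text{ECE}$-dependent exponential and obtains $\frac{\mathrm{d}^2\mu_{(n)}}{\mathrm{d}n\,\mathrm{d}\text{ECE}} = \tfrac{\text{ECE}}{2\sigma_0}\sqrt{2/\pi}\,n^{-1/2}e^{-n\,\text{ECE}^2/(2\sigma_0^2)} > 0$ for $\text{ECE} > 0$. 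If instead the full multi-bin sum is used, I would vary $\text{ECE}$ along the one-parameter family $\Delta_i \mapsto \lambda\Delta_i$, so that $\text{ECE} = \lambda\sum_i p_i|\Delta_i|$ is increasing and linear in $\lambda$; then $\frac{\mathrm{d}^2\mu_{(n)}}{\mathrm{d}n\,\mathrm{d}\lambda}$ is a sum of strictly positive terms $\tfrac{\lambda\Delta_i^2}{2\sigma_{0,i}}\sqrt{2/\pi}\,n^{-1/2}e^{-\lambda^2\Delta_i^2 n/(2\sigma_{0,i}^2)}$, and dividing by the positive constant $\mathrm{d}\text{ECE}/\mathrm{d}\lambda$ gives the claim. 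The intuition is that a larger gap makes $e^{-n\Delta_i^2/(2\sigma_{0,i}^2)}$ decay faster in $n$, so the downward correction is front-loaded at small $n$ — equivalently, better calibration makes the bias vanish more slowly and thus worse at moderate sample sizes.

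The genuine obstacle is not the differentiations, which are routine, but pinning down $\mu_{(n)}$: justifying the normal approximation of the bin-wise sample means together with $n_i \approx n p_i$, and — for the mixed derivative — that the within-bin variance $\sigma_{0,i}^2$ may be treated as not varying with $\Delta_i$ (legitimate to leading order, since it is dominated by $\text{acc}_i(1-\text{acc}_i)$ and, in the regime of interest, $\text{conf}_i \approx \text{acc}_i$). The remaining delicate point is phrasing the $\text{ECE}$-derivative so that it is well-posed given that $\text{ECE}$ is an aggregate of many bin gaps; the scaling family above is the natural resolution and makes every sign unambiguous.
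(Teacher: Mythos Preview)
Your approach matches the paper's: the same folded-normal closed form for $\mu_{(n)}$, the same key identity $\partial_\sigma h = \sqrt{2/\pi}\,e^{-\mu^2/(2\sigma^2)}$ (the paper's equation \eqref{eq:fn_deriv}), and the same chain-rule argument through $\sigma_i \propto n^{-1/2}$ for the signs of the first two derivatives. The only divergence is in how the ill-posed $\mathrm{d}/\mathrm{d}\,\text{ECE}$ is made precise: the paper varies a single bin gap $\mu_i$ while holding the others fixed and inverts via $\mathrm{d}\mu_i^2/\mathrm{d}\,\text{ECE} = 2|\mu_i|/p_i$, whereas you propose the uniform-scaling family $\Delta_i \mapsto \lambda\Delta_i$; both are ad hoc but legitimate parameterizations that yield the positive sign, and your explicit flagging of this ambiguity is actually more careful than the paper's treatment.
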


In words, the ECE estimator can be approximated by a differentiable function, which is strictly convex and monotonically decreasing in the data size.
The smaller the data size, the larger the change of the function.
Further, this change is also influenced by the true ECE value, such that, for low ground truth ECE, the function changes even more rapidly with the data size.
Analogous results can be found for CWCE$_1$, CWCE$_2$, and TCE$_2$ with binning estimators as used in \citet{kull2019beyond}, \citet{nixon2019measuring} and \citet{kumar2019verified}.

To confirm the goodness of the approximation, we evaluated the ECE estimator on simulated models in Figure \ref{fig:ece_sim}.
The models are designed such that their true level of calibration is known.
Including the results of Figure \ref{fig:RC_errors}, the empirical curves are consistent with Proposition \ref{prop:ece_estimator}.
Further details on the simulation are given in Appendix \ref{app:plots}.

The results in this section motivate a formal framework of proper calibration errors which are zero if and only if the model is calibrated and with robust estimators.

\section{Proper calibration errors}
\label{sec:pce}

In this section, we introduce the definition of \textit{proper calibration errors}.
We provide an easy-to-estimate upper bound and investigate some properties.
As a preliminary step, we generalize a proper score decomposition.
Again, all proofs are presented in Appendix \ref{app:proofs}.

\citet{Br_cker_2009} introduced a calibration-sharpness decomposition of proper scores w.r.t. categorical distributions.
We extend this decomposition to proper scores of arbitrary distributions.

\begin{lemma}
    Let $\mathcal{P}$ be a set of arbitrary distributions for which exists a proper score $S$ with some mild conditions. For random variables $Q$ and $Y$ with $Q, \mathbb{P}_Y, \mathbb{P}_{Y \mid Q} \in \mathcal{P}$, we have the decomposition 
    \begin{equation*}
        \mathbb{E} \left[ S \left(Q, Y \right) \right] = \underbrace{g_S \left(\mathbb{P}_Y \right)}_\text{generalized entropy} + \underbrace{\mathbb{E} \left[ d_S \left(Q, \mathbb{P}_{Y \mid Q} \right) \right]}_\text{calibration} - \underbrace{\mathbb{E} \left[ d_S \left(\mathbb{P}_Y, \mathbb{P}_{Y \mid Q} \right) \right]}_\text{sharpness}.
    \end{equation*}
\label{lemma:cal_sharp_decomp}
\end{lemma}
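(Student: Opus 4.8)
The plan is to reduce everything to the tower property of conditional expectation together with the bookkeeping identities $d_S(P,R) = s_S(P,R) - g_S(R)$ and $g_S(R) = s_S(R,R)$, applied twice: once with a random first argument ($Q$) and once with a deterministic first argument ($\mathbb{P}_Y$).

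First I would condition on the prediction $Q$. Since, given $Q$, the target $Y$ is distributed according to the regular conditional distribution $\mathbb{P}_{Y \mid Q}$, the mild conditions ensure $\mathbb{E}\!\left[ S(Q,Y) \mid Q \right] = s_S\!\left(Q, \mathbb{P}_{Y \mid Q}\right)$ almost surely, so that $\mathbb{E}\!\left[ S(Q,Y) \right] = \mathbb{E}\!\left[ s_S\!\left(Q, \mathbb{P}_{Y \mid Q}\right) \right]$. Writing $s_S(Q, \mathbb{P}_{Y \mid Q}) = d_S(Q, \mathbb{P}_{Y \mid Q}) + g_S(\mathbb{P}_{Y \mid Q})$ and using linearity of expectation (which needs an integrability assumption, part of the mild conditions) gives $\mathbb{E}\!\left[ S(Q,Y) \right] = \mathbb{E}\!\left[ d_S(Q, \mathbb{P}_{Y \mid Q}) \right] + \mathbb{E}\!\left[ g_S(\mathbb{P}_{Y \mid Q}) \right]$. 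This already isolates the calibration term, and it remains to show $\mathbb{E}\!\left[ g_S(\mathbb{P}_{Y \mid Q}) \right] = g_S(\mathbb{P}_Y) - \mathbb{E}\!\left[ d_S(\mathbb{P}_Y, \mathbb{P}_{Y \mid Q}) \right]$. For that, I would again expand via the definition of $d_S$, now with first argument the fixed distribution $\mathbb{P}_Y$: $g_S(\mathbb{P}_{Y \mid Q}) = s_S(\mathbb{P}_Y, \mathbb{P}_{Y \mid Q}) - d_S(\mathbb{P}_Y, \mathbb{P}_{Y \mid Q})$, hence $\mathbb{E}\!\left[ g_S(\mathbb{P}_{Y \mid Q}) \right] = \mathbb{E}\!\left[ s_S(\mathbb{P}_Y, \mathbb{P}_{Y \mid Q}) \right] - \mathbb{E}\!\left[ d_S(\mathbb{P}_Y, \mathbb{P}_{Y \mid Q}) \right]$. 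The crucial observation is $\mathbb{E}\!\left[ s_S(\mathbb{P}_Y, \mathbb{P}_{Y \mid Q}) \right] = g_S(\mathbb{P}_Y)$: because $\mathbb{P}_Y$ is not random, $s_S(\mathbb{P}_Y, \mathbb{P}_{Y \mid Q}) = \mathbb{E}\!\left[ S(\mathbb{P}_Y, Y) \mid Q \right]$, so the tower property yields $\mathbb{E}\!\left[ s_S(\mathbb{P}_Y, \mathbb{P}_{Y \mid Q}) \right] = \mathbb{E}\!\left[ S(\mathbb{P}_Y, Y) \right] = s_S(\mathbb{P}_Y, \mathbb{P}_Y) = g_S(\mathbb{P}_Y)$. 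Substituting back gives exactly the claimed three-term decomposition.

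The main obstacle is measure-theoretic rather than algebraic, and is presumably what the "mild conditions" absorb: one needs a regular conditional distribution $\mathbb{P}_{Y \mid Q}$ that takes values in $\mathcal{P}$ so that $S(Q,\cdot)$, $s_S$, $d_S$, $g_S$ are all well-defined along it; one needs $(P,y) \mapsto S(P,y)$ to be jointly measurable enough that $Q \mapsto s_S(Q, \mathbb{P}_{Y \mid Q})$ and $Q \mapsto g_S(\mathbb{P}_{Y \mid Q})$ are measurable; and one needs enough integrability (e.g. $\mathbb{E}\lvert S(Q,Y)\rvert < \infty$ and $\mathbb{E}\lvert S(\mathbb{P}_Y,Y)\rvert < \infty$) to invoke the tower property and to split expectations of sums without meeting an $\infty - \infty$. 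In the categorical setting of \citet{Br_cker_2009} these hold automatically, whereas in the general case they must be imposed as hypotheses; no additional idea beyond the two-step conditioning computation above is required. I would therefore state these assumptions as the hypotheses of the lemma and then present the computation exactly in the order sketched.
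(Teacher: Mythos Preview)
Your proposal is correct and follows essentially the same route as the paper: both reduce $\mathbb{E}[S(Q,Y)]$ to $\mathbb{E}[s_S(Q,\mathbb{P}_{Y\mid Q})]$, add and subtract $g_S(\mathbb{P}_{Y\mid Q})$ to peel off the calibration term, and then identify $\mathbb{E}[s_S(\mathbb{P}_Y,\mathbb{P}_{Y\mid Q})]=g_S(\mathbb{P}_Y)$ to obtain the sharpness term. The only cosmetic difference is that the paper writes out the integrals and invokes Fubini explicitly (inserting $\int \mathrm{d}\mathbb{P}_{Q\mid Y=y}=1$) where you invoke the tower property, and its stated ``mild conditions'' are exactly the finiteness assumptions you anticipated, namely $g_S(\mathbb{P}_Y),\ \mathbb{E}[g_S(\mathbb{P}_{Y\mid Q})],\ \mathbb{E}|S(Q,Y)|,\ \mathbb{E}|S(\mathbb{P}_Y,Y)|<\infty$.
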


Substituting $Q$ with $f \left( X \right)$ and $S$ with the Brier score, the calibration term equals the previously defined CE$_2$ of a model $f$.
Lemma \ref{lemma:cal_sharp_decomp} motivates the following definition, which we introduce:

\begin{definition}
    Given a model $f \colon \mathcal{X} \to \mathcal{P}$, we say $$\text{CE}_S \left( f \right) \coloneqq \mathbb{E} \left[ d_S \left(f \left( X \right), \mathbb{P}_{Y \mid f \left( X \right)} \right) \right]$$ is a \textbf{(strictly) proper calibration error} if and only if $d_S$ is a divergence associated with a (strictly) proper score $S$.
\label{def:pce}
\end{definition}

This gives CE$_\text{BS} \equiv$ CE$_2^2$ as an example of a strictly proper calibration error for classification since the Brier score is a strictly proper score on $\mathcal{P}_n$.
Strictly proper calibration errors have the highly desired property: $\text{CE}_S \left( f \right) = 0 \overset{a.s.}{\iff} f$ is calibrated.
Since proper scores are not restricted to classification, the above definition gives a natural extension of calibration errors beyond classification.

Additionally, by generalizing the definition of proper scores, we can show that the squared KCE is a strictly proper calibration error (Appendix \ref{app:u-scores}).
But, in general, there does not exist an unbiased estimator of a proper calibration error, since we cannot estimate $\mathbb{E} \left[ g_S \left( \mathbb{P}_{Y \mid f \left( X \right)} \right) \right]$ in an unbiased manner.
Because we do not want lower bounds for errors used in sensitive applications, we introduce the following theorem about how to construct an upper bound.

\begin{theorem}
    For all proper calibration errors with $\inf_{P \in \mathcal{P}} g_S \left( P \right) \in \mathbb{R}$, there exists an associated \textbf{calibration upper bound} $$ \mathcal{U}_S \left( f \right) \geq \text{CE}_S \left( f \right)$$
    defined as $\mathcal{U}_S \left( f \right) = \mathbb{E} \left[ S \left( f \left( X \right), Y \right) \right] - \inf_{P \in \mathcal{P}} g_S \left(P\right)$.
    Under a classification setting and further mild conditions, 
    we have $ \lim_{\text{ACC} \left(f\right) \to 1} \mathcal{U}_S \left( f \right) - \text{CE}_S \left( f \right) = 0. $
\label{th:ub}
\end{theorem}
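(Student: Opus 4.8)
The plan is to prove the two assertions of Theorem~\ref{th:ub} separately: first the inequality $\mathcal{U}_S(f) \ge \mathrm{CE}_S(f)$, then the limiting statement as accuracy tends to $1$.

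\textbf{Step 1: The upper bound inequality.} I would start from the calibration--sharpness decomposition of Lemma~\ref{lemma:cal_sharp_decomp} applied with $Q = f(X)$, which gives
\begin{equation*}
    \mathbb{E}\left[ S\left(f(X), Y\right) \right] = g_S\left(\mathbb{P}_Y\right) - \mathbb{E}\left[ d_S\left(\mathbb{P}_Y, \mathbb{P}_{Y \mid f(X)}\right) \right] + \mathrm{CE}_S(f).
\end{equation*}
Rearranging, $\mathcal{U}_S(f) - \mathrm{CE}_S(f) = g_S(\mathbb{P}_Y) - \mathbb{E}[d_S(\mathbb{P}_Y, \mathbb{P}_{Y \mid f(X)})] - \inf_{P} g_S(P)$. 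Now $g_S(\mathbb{P}_Y) = s_S(\mathbb{P}_Y,\mathbb{P}_Y) = \mathbb{E}_{Q'}[s_S(\mathbb{P}_Y, Q')]$ where $Q' = \mathbb{P}_{Y\mid f(X)}$ (using the tower property, since averaging $\mathbb{P}_{Y\mid f(X)}$ over the prediction recovers $\mathbb{P}_Y$ and $s_S$ is affine in its second argument). Subtracting the sharpness term turns this into $\mathbb{E}[s_S(\mathbb{P}_Y, Q') - d_S(\mathbb{P}_Y, Q')] = \mathbb{E}[s_S(Q', Q')] = \mathbb{E}[g_S(\mathbb{P}_{Y\mid f(X)})]$, by definition of the divergence. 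Hence
\begin{equation*}
    \mathcal{U}_S(f) - \mathrm{CE}_S(f) = \mathbb{E}\left[ g_S\left(\mathbb{P}_{Y \mid f(X)}\right) \right] - \inf_{P \in \mathcal{P}} g_S(P) \ge 0,
\end{equation*}
since the integrand is pointwise at least the infimum. This identity is clean and is also exactly what I need for Step~2.

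\textbf{Step 2: The limit as $\mathrm{ACC}(f) \to 1$.} In the classification setting $\mathcal{Y}$ is finite, so the vertices of the simplex $e_1,\dots,e_n$ are in $\mathcal{P}_n$ and $\inf_P g_S(P) = \min_k g_S(e_k)$ is attained (continuity of $g_S$ on the compact simplex under the mild conditions — this attainment should be one of those conditions). By symmetry of the classification losses typically in play, or more carefully by the ``mild conditions'' the theorem invokes, one expects $g_S(e_k)$ to be the same at every vertex and equal to $\inf_P g_S(P)$; call it $g_0$. From Step~1 it suffices to show $\mathbb{E}[g_S(\mathbb{P}_{Y\mid f(X)})] \to g_0$ as $\mathrm{ACC}(f) \to 1$. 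The idea is that $\mathrm{ACC}(f) = \mathbb{P}(Y = C) \to 1$ forces $\mathbb{P}_{Y\mid f(X)}$ to concentrate on a single label, in the sense that $\mathbb{E}[1 - \max_k (\mathbb{P}_{Y\mid f(X)})_k] \to 0$; indeed the calibrated conditional distribution's mismatch with the one-hot prediction is controlled by the accuracy. Then, using that $g_S$ is (uniformly) continuous on the simplex and $g_S$ equals $g_0$ on the vertices, a $\max_k(\mathbb{P}_{Y\mid f(X)})_k$ close to $1$ forces $g_S(\mathbb{P}_{Y\mid f(X)})$ close to $g_0$; conclude by dominated convergence (the integrand is bounded since $g_S$ is continuous on a compact set).

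\textbf{Main obstacle.} The delicate point is making rigorous the claim that high accuracy forces $\mathbb{P}_{Y\mid f(X)}$ to be near a vertex. Accuracy is $\mathbb{P}(Y=C)$ with $C = \arg\max_k f_k(X)$, whereas the relevant conditional distribution is $\mathbb{P}_{Y\mid f(X)}$; one needs a short argument (Markov-type inequality plus tower property, or a union bound over labels) that $\mathbb{E}[(\mathbb{P}_{Y\mid f(X)})_{C}] \ge \mathrm{ACC}(f)$ cannot hold without $\max_k(\mathbb{P}_{Y\mid f(X)})_k$ being large on a set of large measure. I would also need to pin down precisely which ``mild conditions'' are required: continuity of $g_S$ on $\mathcal{P}_n$, attainment of the infimum at the vertices, and possibly boundedness of $S$ near the vertices so that dominated convergence applies. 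Once those hypotheses are isolated, both steps are short; the decomposition identity in Step~1 does essentially all the work.
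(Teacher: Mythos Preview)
Your proposal is correct and matches the paper's approach: both arguments hinge on the identity $\mathcal{U}_S(f) - \mathrm{CE}_S(f) = \mathbb{E}[g_S(\mathbb{P}_{Y\mid f(X)})] - \inf_P g_S(P)$, then use continuity of $g_S$, the assumption $g_S(e_1)=\cdots=g_S(e_n)$, and concavity to conclude the infimum is attained at the vertices. The only cosmetic difference is that you reach the identity by unwinding the full three-term decomposition of Lemma~\ref{lemma:cal_sharp_decomp}, whereas the paper invokes a shorter helper lemma giving $\mathrm{CE}_S(f) = \mathbb{E}[S(f(X),Y)] - \mathbb{E}[g_S(\mathbb{P}_{Y\mid f(X)})]$ directly; your ``main obstacle'' paragraph is in fact more careful than the paper, which simply interchanges the limit with the expectation and asserts that perfect accuracy makes $\mathbb{P}_{Y\mid f(X)}$ a vertex.
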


In other words, we can always construct a non-trivial upper bound of a proper calibration error as long as the generalized entropy function has a finite infimum.
The calibration upper bound approaches the true calibration error for models with high accuracy.
Our proposed calibration upper bounds are provably reliable to use since they all have a minimum-variance unbiased estimator.
In the following example, we derive the calibration upper bound $\mathcal{U}_\text{BS}$ of the Brier Score.

\begin{example}


The scoring rule induced by the Brier score is defined as $S_\text{BS} \left( f \left( X \right), Y \right) = \left\| f \left( X \right) - Y^\prime \right\|^2$, where $Y^\prime$ is the one-hot encoding of $Y$.
Using the definition of the associated entropy gives us
$g_{\text{BS}} \left( Q \right) = \mathbb{E}_{Y \sim Q} \left[ S_\text{BS} \left( Q, Y \right) \right] = \mathbb{E}_{Y \sim Q} \left[ \left\| Q - Y^\prime \right\|^2 \right]$.
To find its infimum, note that $\left\| . \right\|^2 \geq 0$ and 
$g_{\text{BS}} \left( \left(1, 0, \dots, 0 \right)^\intercal \right) = 0$.
Thus, $\inf_{P \in \mathcal{P}} g_{\text{BS}} \left(P \right) = 0$, which gives $\mathcal{U}_\text{BS} \left( f \right) = \mathbb{E} \left[ \left\| f \left( X \right) - Y^\prime \right\|^2 \right] = \mathrm{BS} \left(f\right)$.
This makes the Brier score itself an upper bound of its induced calibration error.
\end{example}

Additionally, Theorem \ref{th:ce_relations} motivates the usage of $\sqrt{\mathcal{U}_\text{BS} \left( f \right)}$.
Given a dataset $\left\{ \left(X_1, Y_1 \right), \dots, \left(X_n, Y_n \right) \right\}$ and a model $f$, we will estimate this quantity via $\sqrt{\mathcal{U}_{\text{BS}} \left( f \right)} \approx \sqrt{\frac{1}{n} \sum_{i=1}^n \left\| f \left( X_i \right) - Y^\prime_i \right\|^2}$.
In general, any unbiased estimator $\hat{\theta}$ becomes biased after a non-linear transformation $t$, since $\mathbb{E} \left[ t \left( \hat{\theta} \right) \right] \neq t \left( \mathbb{E} \left[ \hat{\theta} \right] \right)$.
But, if $t$ is continuous, our estimator is still asymptotically unbiased and consistent \citep{takeshi1985advanced}.\footnote{follows from Continuous Mapping Theorem and Theorem 3.2.6 of \citet{takeshi1985advanced}}
We will further investigate the empirical robustness w.r.t. data size in Section \ref{sec:exp} with $t$ as the square root and $\sqrt{\mathcal{U}_\text{BS} \left( f \right)}$ as the root calibration upper bound (RBS).\\

Furthermore, $\mathcal{U}_S$ has the following properties, which are helpful for the application of recalibration method optimization and selection.

\begin{proposition}
    Given injective functions $h, h' \; \colon \; \mathcal{P} \to \mathcal{P}$ we have 
    $$ \mathcal{U}_S \left( h \circ f \right) - \mathcal{U}_S \left( f \right) = \text{CE}_S \left( h \circ f \right) - \text{CE}_S \left( f \right) \quad \text{, }$$
    $$ \mathcal{U}_S \left( h \circ f \right) > \mathcal{U}_S \left( h' \circ f \right) \iff \text{CE}_S \left( h \circ f \right) > \text{CE}_S \left( h' \circ f \right)$$
    and (assuming $S$ is differentiable)
    $$\frac{\mathrm{d} \mathcal{U}_S \left( h \circ f \right)}{\mathrm{d} h} = \frac{\mathrm{d} \text{CE}_S \left( h \circ f \right)}{\mathrm{d} h}.$$
\label{prop:ub_grad}
\end{proposition}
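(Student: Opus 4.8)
The plan is to exploit the fact, already established in Theorem~\ref{th:ub} and its proof structure, that $\mathcal{U}_S$ and $\text{CE}_S$ differ by a quantity that depends only on the \emph{distribution of the target}, not on the model output map. Concretely, combining the definition of the calibration upper bound with the decomposition of Lemma~\ref{lemma:cal_sharp_decomp} applied to $Q = (h\circ f)(X)$, we get
\begin{equation*}
    \mathcal{U}_S(h\circ f) = \mathbb{E}\bigl[S\bigl((h\circ f)(X), Y\bigr)\bigr] - \inf_{P\in\mathcal{P}} g_S(P) = g_S(\mathbb{P}_Y) - \mathbb{E}\bigl[d_S(\mathbb{P}_Y, \mathbb{P}_{Y\mid (h\circ f)(X)})\bigr] + \text{CE}_S(h\circ f) - \inf_{P\in\mathcal{P}} g_S(P).
\end{equation*}
So the first step is to show that the ``offset'' $g_S(\mathbb{P}_Y) - \mathbb{E}[d_S(\mathbb{P}_Y, \mathbb{P}_{Y\mid (h\circ f)(X)})] - \inf_{P} g_S(P)$ is in fact independent of the post-processing map. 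This is where injectivity of $h$ enters: the $\sigma$-algebra generated by $(h\circ f)(X)$ equals that generated by $f(X)$ when $h$ is injective (and measurable with measurable inverse on its range — a measurability caveat I would state explicitly), so $\mathbb{P}_{Y\mid (h\circ f)(X)} = \mathbb{P}_{Y\mid f(X)}$ almost surely, and the sharpness term $\mathbb{E}[d_S(\mathbb{P}_Y, \mathbb{P}_{Y\mid (h\circ f)(X)})]$ coincides with the corresponding term for $f$ alone. Likewise $g_S(\mathbb{P}_Y)$ and $\inf_P g_S(P)$ do not involve $h$ at all.

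Given that, the first identity follows immediately: writing $\mathcal{U}_S(h\circ f) = K + \text{CE}_S(h\circ f)$ and $\mathcal{U}_S(f) = K + \text{CE}_S(f)$ with the \emph{same} constant $K = g_S(\mathbb{P}_Y) - \mathbb{E}[d_S(\mathbb{P}_Y,\mathbb{P}_{Y\mid f(X)})] - \inf_P g_S(P)$, subtraction cancels $K$. For the second (equivalence of orderings), I apply the same representation with $h$ and with $h'$ — both yield the same additive constant $K$, since both share the conditioning $\sigma$-algebra $\sigma(f(X))$ — so $\mathcal{U}_S(h\circ f) - \mathcal{U}_S(h'\circ f) = \text{CE}_S(h\circ f) - \text{CE}_S(h'\circ f)$, and the iff on strict inequalities is then trivial. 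For the third (gradient) claim, I differentiate the identity $\mathcal{U}_S(h\circ f) = K + \text{CE}_S(h\circ f)$ with respect to $h$; since $K$ has zero derivative in $h$, the two derivatives agree. Here I would be slightly careful to note that $\tfrac{\mathrm d}{\mathrm d h}$ is to be read as a Gateaux/parametric derivative along whatever finite-dimensional parametrization of the recalibration family is in use (as with TS/ETS/DIAG), and that differentiating under the expectation requires the usual domination hypothesis, which I would fold into the ``$S$ is differentiable'' assumption plus an integrability condition.

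The main obstacle I anticipate is the measure-theoretic justification that injectivity of $h$ genuinely forces $\mathbb{P}_{Y\mid (h\circ f)(X)} = \mathbb{P}_{Y\mid f(X)}$ a.s.: a set-theoretic injection need not be bimeasurable, so strictly speaking one wants $h$ to be a measurable injection with measurable inverse on its image (which holds for the concrete recalibration maps considered, as they are continuous injections between nice spaces, where the Lusin–Souslin theorem applies). I would handle this by stating the precise measurability hypothesis once and then noting the $\sigma$-algebra equality $\sigma((h\circ f)(X)) = \sigma(f(X))$ is immediate from it; everything downstream is then bookkeeping. A secondary, minor subtlety is that the additive constant $K$ must be finite for the subtractions to be meaningful — but this is exactly the hypothesis $\inf_{P}g_S(P)\in\mathbb{R}$ inherited from Theorem~\ref{th:ub}, together with $g_S(\mathbb{P}_Y)<\infty$, so I would simply invoke that.
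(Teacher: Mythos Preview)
Your proposal is correct and follows essentially the same route as the paper: both hinge on the observation that injectivity of $h$ forces $\mathbb{P}_{Y\mid h\circ f(X)}=\mathbb{P}_{Y\mid f(X)}$ almost surely, so the term $\mathbb{E}[g_S(\mathbb{P}_{Y\mid h\circ f(X)})]$ (equivalently, the sharpness) is independent of $h$, after which everything is subtraction/differentiation of a constant. The only cosmetic difference is that the paper works directly with the two-term identity of Lemma~\ref{le:cal_decomp}, namely $\text{CE}_S=\mathbb{E}[S]-\mathbb{E}[g_S(\mathbb{P}_{Y\mid\cdot})]$, rather than the full three-term decomposition of Lemma~\ref{lemma:cal_sharp_decomp}, and it does not spell out the bimeasurability and differentiation-under-the-integral caveats you (rightly) flag.
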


This is a generalization of Proposition 4.2 presented in \citep{zhang2020mix}.
It tells us that we can reliably estimate the improvement of any injective recalibration method via the upper bound.
Furthermore, we get access to the calibration gradient and can compare different transformations.
At first, injectivity seems like a significant restriction.
But, we argue in the following that injectivity - rather than being accuracy-preserving - is a desired property of general recalibration methods.
For example, we can construct a recalibration method, which is calibrated and accuracy-preserving,
but only predicts a finite set of distinct values (see Appendix \ref{app:recal_trafos}).
Specifically, we would only predict two distinct values for any input in binary classification.
To exclude such naive solutions which substantially reduce model sharpness, we restrict ourselves to injective transformations of $\mathcal{P}_n \to \mathcal{P}_n$.
These do provably not impact the model sharpness and preserve, at least partly, the continuity of the output space.
Examples of injective transformations are TS, ETS, and DIAG.
These state-of-the-art methods show very competitive performances even when compared to non-injective recalibration methods \citep{zhang2020mix, rahimi2020intra}.
Further, Proposition \ref{prop:ub_grad} also holds when replacing $\mathcal{U}_S$ with the expected score $s_S$ without further conditions.
This is useful when $\mathcal{U}_S$ does not exist, but we still want to perform recalibration as in Section \ref{sec:exp} in the case of the DSS.

\section{Experiments}
\label{sec:exp}

\begin{figure*}[t]
\vskip 0.2in
\centering
    \begin{subfigure}{.5\textwidth}
    \centering
    \includegraphics[width=\columnwidth]{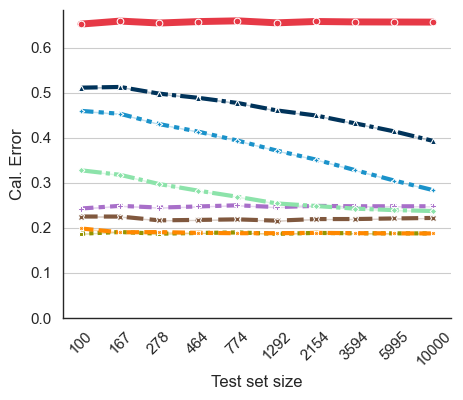}
    \end{subfigure}%
    \begin{subfigure}{.5\textwidth}
    \centering
    \includegraphics[width=\columnwidth]{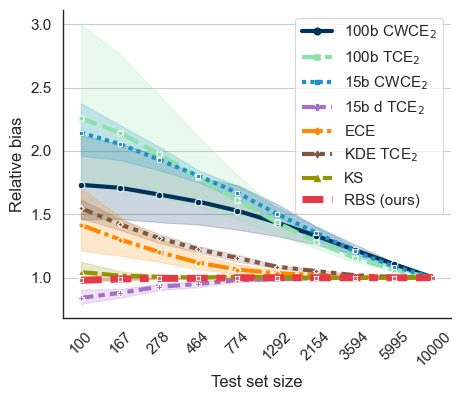}
    \end{subfigure}
\caption{
    \textbf{Left:} Different calibration error estimates versus the test set size of ResNet Wide 32 and CIFAR100. The red line corresponds to the square root of the Brier score which is an upper bound of the $\text{CE}_2$. The other errors are lower bounds.
    \textbf{Right:} Relative change versus data size with respect to error at full size. Averaging across a multitude of models shows a systematic trend. An unbiased estimator would give a flat line.
}
\label{fig:ce_bias}
\vskip -0.2in
\end{figure*}

In the following, we investigate the behavior of calibration error estimators in three settings.\\
First, we use varying test set sizes for the estimators and compare their values.
This will show how well the inequalities in Theorem \ref{th:ce_relations} hold in practical settings and how robust the estimators are.
Second, we explore what the estimated improvements of several recalibration methods are.
This is done after the recalibration methods are already optimized on a given validation set; we only vary the size of the test set and compute calibration errors on these test sets before and after recalibration.
In both settings, the straighter a line is, the more robust and, consequently, trustworthy is the estimator for practical applications.
Third, we investigate how our framework can be used to improve calibration for tasks beyond classification by performing probabilistic regression with subsequent recalibration.

In all experiments we evaluate the following estimators: CWCE$_2$ with 15 equal width bins ('15b CWCE$_2$'), CWCE$_2$ with 100 equal width bins ('100b CWCE$_2$'), 
ECE with 15 equal width bins ('ECE'), 
TCE$_2$ with 100 equal width bins ('100b TCE$_2$'), TCE$_2$ with 15 equal mass bins and debias term ('15b d TCE$_2$'), TCE$_2$ with kernel density estimation ('KDE TCE$_2$'), KS ('KS') and the root calibration upper bound $\sqrt{\mathcal{U}_{\text{BS}}}$ ('RBS (ours)').
The bin amounts are chosen based on past literature \citep{kumar2019verified, minderer2021revisiting}.
We also evaluate the KDE estimator of CE$_1$ ('KDE CE$_1$') with automatic bandwidth selection based on \citep{popordanoska2022} for CIFAR10.
The experiments are conducted across several model-dataset combinations, for which logit sets are openly accessible \citep{kull2019beyond, rahimi2020intra}.\footnote{\url{https://github.com/markus93/NN_calibration/} and \url{https://github.com/AmirooR/IntraOrderPreservingCalibration}}
This includes the models Wide ResNet 32 \citep{zagoruyko2016wide}, DenseNet 40, and DenseNet 161 \citep{huang2017densely} and the datasets CIFAR10, CIFAR100 \citep{krizhevsky2009learning},
and ImageNet \citep{deng2009imagenet}.
We did not conduct model training ourselves and refer to \citep{kull2019beyond} and \citep{rahimi2020intra} for further details.
We include TS, ETS, and DIAG as injective recalibration methods.
Further details and results on additional models and datasets are reported in the Appendix \ref{app:plots}.

\paragraph{Robustness of calibration errors to test set size}
We illustrate the estimated values of our introduced upper bound and the other errors, which are lower bounds of the unknown CE$_2$ on the left of Figure \ref{fig:ce_bias}.
On the right, we aggregate across several models to show the systematic drop-off according to Proposition \ref{prop:ece_estimator}.
The relative bias is computed by $Error(n) / Error (10000)$ and allows an aggregation of models with different calibration levels.
Included models are DenseNet 40, Wide ResNet 32, ResNet 110 SD, ResNet 110, and LeNet 5, all trained on CIFAR10.
All values represent the calibration of the given model without recalibration transformation.
Only our proposed upper bound and KS are stable, and Appendix \ref{app:plots} shows this holds across a wide range of different settings.
The theoretically highest lower bound (CWCE$_2$ with 100 bins) is also constantly the highest estimated lower bound, but it is sensitive to the test set size.
Results for further settings presented in Appendix \ref{app:plots} show similar results.

\paragraph{Quantifying recalibration improvement}
Next, we assessed how well all estimators were able to quantify the improvement in calibration error after applying different injective recalibration methods (Fig. \ref{fig:RC_errors}).
Only our proposed upper bound estimator RBS is again robust throughout all settings.
According to Proposition \ref{prop:ub_grad} and since RBS is asymptotically unbiased and consistent, it can be regarded as a reliable approximation of the real improvement of the presented recalibration methods.
For all other estimators, there is a general trend to estimate recalibration improvement higher for large test set sizes. 
In other settings, especially for small test set sizes, calibration improvement is underestimated to the extent that negative improvements (poorer calibration than before) are suggested. 
Results on other settings presented in Appendix \ref{app:plots} show similar results.
Taken together, these experiments demonstrate the unreliability of existing calibration estimators, in particular, when used to evaluate recalibration methods.  In contrast, our proposed upper bound estimator is stable across different settings.

\paragraph{Variance regression calibration}

We consider variance regression to demonstrate the usefulness of proper calibration errors outside the classification setting.
To this end, we predict sales prices with an uncertainty estimate in the UCI dataset \textit{Residential Building}, which consists of a high feature (107) to data instances (372) ratio \citep{rafiei2016novel}.
Our model of choice is a fully-connected mixture density network predicting mean and variance \citep{bishop1994mixture}.
Similar to classification, we are interested in recalibration of the predicted variance to adjust possible under- or overconfidence.
We use our proposed framework to derive a proper calibration error induced by the proper score DSS for recalibration.
Further, we compare DSS \citep{gneiting2014probabilistic} to squared KCE (SKCE) \citep{widmann2021calibration} and analyze the average predicted variance throughout model training.
We use Platt scaling ($x \mapsto wx + b$ with parameters $w,b \in \mathbb{R}$ \citep{Platt99probabilisticoutputs}) on the predicted variance in each training iteration to show how recalibration benefits uncertainty awareness.
We expect high uncertainty awareness at the start of the model training with a drop-off at later iterations.
As can be seen in Figure \ref{fig:recal_regr}, recalibration is able to adjust the uncertainty estimate of a model as desired.
Further, the DSS estimate, which captures predicted mean and variance correctness, directly communicates the improved variance fit.
Contrary, the SKCE estimate appears more erratic between iteration steps and seemingly ignores the changes in variance and the recalibration improvement.

\begin{figure*}[t]
\vskip 0.2in
\centering
    \begin{subfigure}{.33\textwidth}
    \centering
    \includegraphics[width=\columnwidth]{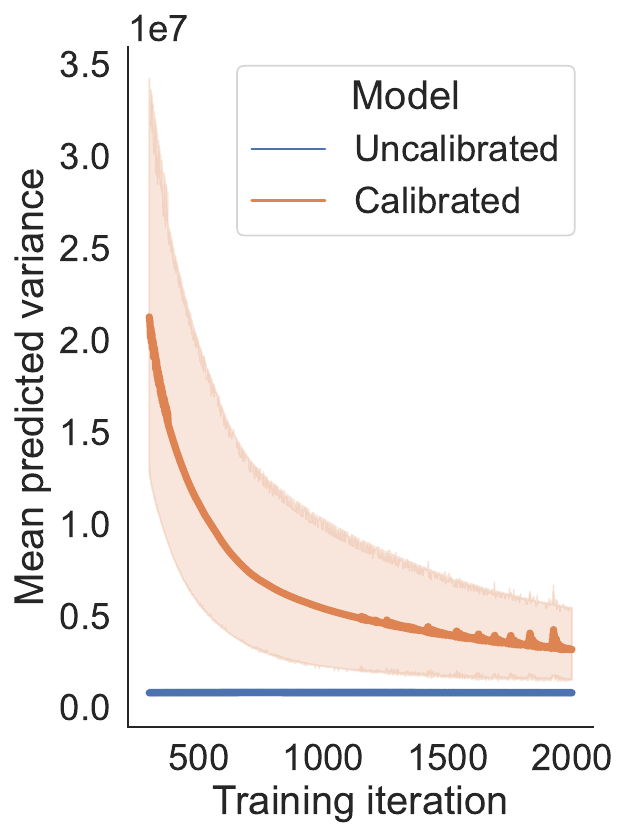}
    \end{subfigure}%
    \begin{subfigure}{.34\textwidth}
    \centering
    \includegraphics[width=\columnwidth]{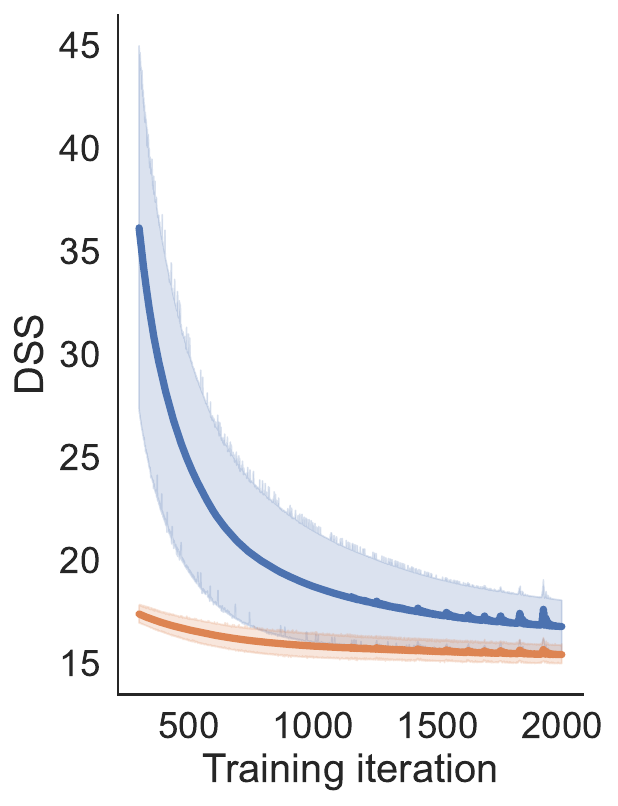}
    \end{subfigure}%
    \begin{subfigure}{.32\textwidth}
    \centering
    \includegraphics[width=\columnwidth]{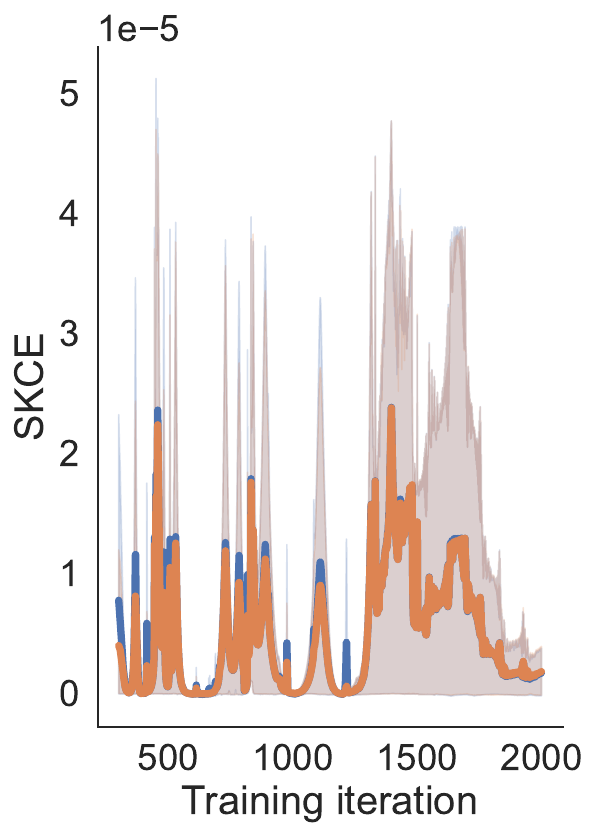}
    \end{subfigure}
\caption{
    \textbf{Left:} Average predicted variance throughout model training before and after recalibration.
    Initially, due to a bad fit, recalibration adjusts the variance accordingly for better communicated uncertainty.
    Once the model fit improves, the predicted variance requires less adjustment due to less uncertainty in each prediction.
    \textbf{Middle:} DSS communicates reasonably changes in the variance due to recalibration.
    \textbf{Right:} SKCE fails to capture the variance trend and behaves erratically.
}
\label{fig:recal_regr}
\vskip -0.2in
\end{figure*}

One might also be interested at how the predicted variance corresponds to the mean squared error throughout model training.
As we can see in Table \ref{tab:varregr1}, only after calibration using a proper calibration error, the average predicted variance (Avg Var) corresponds to the mean squared error.

\begin{table}
  \caption{Comparing the mean squared error (MSE) with the average predicted variance (Avg Var) before and after recalibration for various training iterations.
  Recalibration gives a better average match between prediction and real error.}
  \label{tab:varregr1}
  \centering
  \begin{tabular}{llllllll}
    \toprule
    Iteration              &   500 &   750 &   1000 &   1250 &   1500 &   1750 &   2000 \\
    \midrule
    MSE                    & 10.48 &  5.87 &   4.3  &   3.51 &   3.12 &   2.74 &   2.57 \\
    Avg Var (Calibrated)   & 11.04 &  6.89 &   5.4  &   4.55 &   4.11 &   3.63 &   3.32 \\
    Avg Var (Uncalibrated) &  0.83 &  0.83 &   0.83 &   0.83 &   0.83 &   0.82 &   0.82 \\
    \bottomrule
  \end{tabular}
\end{table}

Next, to assess how well the predicted variance corresponds to the instance-level error, we compute the ratio between the squared error of the predictive mean $\mu$ and the predicted variance $\sigma^2$ (SE Var Ratio) for each individual sample via $\frac{1}{n} \sum_{i=1}^n \frac{\left( \mu_i - y_i \right)^2}{\sigma^2_i}$.
An SE Var Ratio of '1' for a given instance means that the predictive uncertainty (variance) exactly matches the squared error, an SE Var Ratio of '10' means that the model is overconfident and the squared error is ten times as large as the predicted variance.
As we can see in Table \ref{tab:varregr2}, recalibration through our framework gives consistently conservative estimates on the squared error, whereas the uncalibrated uncertainties are highly overconfident (with errors more than ten times larger than the prediction).

\begin{table}
  \caption{Instance level ratio between the squared error and the predicted variance before and after recalibration for various training iterations.
  Recalibration improves the instance level prediction of the squared error.}

  \label{tab:varregr2}
  \centering
  \begin{tabular}{lllll}
    \toprule
    Iteration              &   500 &   1000 &   1500 &   2000 \\
    \midrule
    SE Var Ratio (Calibrated)     &  0.82 ± 2.17 &  0.79 ± 2.43 &   0.79 ± 2.56 &   0.79 ± 2.59 \\
    SE Var Ratio (Uncalibrated) & 11.33 ± 30.72 &   5.36 ± 15.09 &   4.07 ± 12.13 &   3.51 ± 11.22 \\
    \bottomrule
  \end{tabular}
\end{table}

We perform further variance regression experiments in Appendix \ref{app:plots}.

\section{Conclusion}

In this work, we address the problem of reliably quantifying the effect of recalibration on predictive uncertainty for classification and other probabilistic tasks.
This is critical for adjusting under- or overconfidence via recalibration.
To this end, we first provide a taxonomy of existing calibration errors.
We discover that most errors are lower bounds of a proper calibration error and fail to assess if a model is calibrated. This motivates our definition of \textit{proper calibration errors}, which provides a general class of errors for arbitrary probabilistic predictions.
Since proper calibration errors cannot be estimated in the general case, we introduce upper bounds, which directly measure the calibration change for injective transformations.
This allows us to reliably adjust model uncertainty via recalibration.
We demonstrate theoretically and empirically that the estimated calibration improvement can be highly misleading for commonly used estimators, including the ECE.
In stark contrast, our upper bound is robust to changes in data size and estimates robustly the improvement via injective recalibration.
We further show in additional experiments that our approach can be applied successfully to variance regression.



\medskip

{
\small

\bibliography{main}
\bibliographystyle{icml2022}
}

\section*{Checklist}

\begin{enumerate}

\item For all authors...
\begin{enumerate}
  \item Do the main claims made in the abstract and introduction accurately reflect the paper's contributions and scope?
    \answerYes{}
  \item Did you describe the limitations of your work?
    \answerYes{We mention required conditions for the proofs either directly or refer to Appendix \ref{app:proofs}.}
  \item Did you discuss any potential negative societal impacts of your work?
    \answerNo{We see positive societal impacts from improved uncertainty awareness via recalibration.}
  \item Have you read the ethics review guidelines and ensured that your paper conforms to them?
    \answerYes{}
\end{enumerate}

\item If you are including theoretical results...
\begin{enumerate}
  \item Did you state the full set of assumptions of all theoretical results?
    \answerYes{Major conditions are directly stated; minor conditions in Appendix \ref{app:proofs} as referred.}
        \item Did you include complete proofs of all theoretical results?
    \answerYes{In Appendix \ref{app:proofs}.}
\end{enumerate}

\item If you ran experiments...
\begin{enumerate}
  \item Did you include the code, data, and instructions needed to reproduce the main experimental results (either in the supplemental material or as a URL)?
    \answerYes{Full code is located in the supplementary material and further experimental details are in Appendix \ref{app:plots}.}
  \item Did you specify all the training details (e.g., data splits, hyperparameters, how they were chosen)?
    \answerYes{Relevant training details are located in Appendix \ref{app:plots}. A lot of models are not trained by ourselves. Instead, we loaded their results from publicly accessible URLs. We refer to each URL and the original work, where the training was conducted.}
    \item Did you report error bars (e.g., with respect to the random seed after running experiments multiple times)?
    \answerYes{We repeated experiments until the error bars are barely visible or not visible anymore. Only exception is Figure \ref{fig:recal_regr}, where aggregations hinder visibility due to high variances in training different seeds.}
        \item Did you include the total amount of compute and the type of resources used (e.g., type of GPUs, internal cluster, or cloud provider)?
    \answerYes{See Appendix \ref{app:plots}.}
\end{enumerate}

\item If you are using existing assets (e.g., code, data, models) or curating/releasing new assets...
\begin{enumerate}
  \item If your work uses existing assets, did you cite the creators?
    \answerYes{We used assets from \citep{kull2019beyond} and \citet{rahimi2020intra} located in https://github.com/markus93/NN\_calibration/ and https://github.com/AmirooR/IntraOrderPreservingCalibration .}
  \item Did you mention the license of the assets?
    \answerNo{Each lincense can be viewed in the respective github repository.}
  \item Did you include any new assets either in the supplemental material or as a URL?
    \answerYes{We include the code for reproducing the experiments and figures in the supplementary material.}
  \item Did you discuss whether and how consent was obtained from people whose data you're using/curating?
    \answerNo{The data is publicly accessible and we cited each respective source.}
  \item Did you discuss whether the data you are using/curating contains personally identifiable information or offensive content?
    \answerNo{The data is often abstract in nature (we used pretrained logits provided by a publicly accessible source) or downloaded the dataset from the UCI dataset database.}
\end{enumerate}

\item If you used crowdsourcing or conducted research with human subjects...
\begin{enumerate}
  \item Did you include the full text of instructions given to participants and screenshots, if applicable?
    \answerNA{No crowdsourcing or human subjects}
  \item Did you describe any potential participant risks, with links to Institutional Review Board (IRB) approvals, if applicable?
    \answerNA{No crowdsourcing or human subjects}
  \item Did you include the estimated hourly wage paid to participants and the total amount spent on participant compensation?
    \answerNA{No crowdsourcing or human subjects}
\end{enumerate}

\end{enumerate}

\newpage

\appendix

\section{Overview}
In this appendix we 
\begin{itemize}
    \item Introduce some notation in section \ref{app:notation} that we will use throughout the appendix.
    \item Give rigorous definitions of calibration errors omitted in the main paper in Section \ref{app:defs}
    \item Provide proofs for all claims that we make in the main text in Section \ref{app:proofs}.
    \item Provide details for specific recalibration transformation that illustrate the shortcomings of existing approaches (Section \ref{app:recal_trafos}).
    \item Give a detailed overview of proper U-scores that can be used to further generalize our proposed framework of proper calibration errors (Section \ref{app:u-scores}).
    \item Give more experimental details and report results from additional experiments (Section \ref{app:plots}).
\end{itemize}

\section{Notation}
\label{app:notation}

The following is implied throughout the appendix.
We will use 
\begin{itemize}
    \item The underlying probability space $\left( \Omega, \mathcal{F}, \mathbb{P} \right)$, $\mathcal{X}$ the feature space, and $\mathcal{Y}$ the target space.
    \item Random variables $X \colon \Omega \to \mathcal{X}$ and $Y \colon \Omega \to \mathcal{Y}$.
    \item $\mathbb{P}_{Y \mid X=x} \left( y \right) \coloneqq \frac{\mathbb{P} \left( \left\{ \omega \in \Omega \mid X \left( \omega \right) = x \land Y \left( \omega \right) = y \right\} \right)}{\mathbb{P} \left( \left\{ \omega \in \Omega \mid X \left( \omega \right) = x \right\} \right)}$ and $\mathbb{P}_{Y} \left( y \right) \coloneqq \mathbb{P} \left( \left\{ \omega \in \Omega \mid Y \left( \omega \right) = y \right\} \right)$ for $x \in \mathcal{X}$ and $y \in \mathcal{Y}$.
    \item $\mathbb{P}_{Y}, \mathbb{P}_{Y \mid X=x} \in \mathcal{P}_n$ with $\mathcal{P}_n = \left\{ p \in \left[0, 1 \right]^n \mid \sum_k p_k = 1 \right\}$, and $\mathcal{Y} = \left\{1, \dots, n \right\}$ for categorical $Y$ with $n \in \mathbb{N}$ classes.
    \item The index '$-k$' on a finite vector to denote the removal of index $k$.
    \item The random variable $C \colon \Omega \to \mathcal{Y}$ defined as $C \coloneqq \arg\max_k f_k(X)$ for $f \colon \mathcal{X} \to \mathcal{P}_n$. It can be regarded as the top-label prediction of $f$.
\end{itemize}

The notation regarding the (conditional) probability measures will be used for arbitrary random variables.

\section{Definitions}
\label{app:defs}

A systematic overview of the multitude of calibration errors proposed in the recent literature requires a common notation that can be used to harmonize definitions.
For the sake of clarity, we use formulations close to the notation introduced in \citet{kumar2019verified} and adjust the other errors accordingly, while retaining the notation of the original work whenever possible.

We follow \citet{kumar2019verified} and define top-label and class-wise calibration errors in expectation:

\begin{definition}
    The \textbf{top-label calibration error} of model $f \colon \mathcal{X} \to \mathcal{P}_n$ is defined as $$ \text{TCE}_p \left( f \right) = \left( \mathbb{E} \left[ \left\lvert f_C \left( X \right) - \mathbb{P} \left( Y = C \mid f_C \left( X \right) \right) \right\rvert^p \right] \right)^{\frac{1}{p}} $$
    with $C \coloneqq \arg\max_k f_k \left( X \right)$ and the \textbf{class-wise calibration error} is defined as $$ \text{CWCE}_p \left( f \right) = \left( \sum_{k \in \mathcal{Y}} \mathbb{E} \left[ \left\lvert f_k \left( X \right) - \mathbb{P} \left( Y = k \mid f_k \left( X \right) \right) \right\rvert^p \right] \right)^{\frac{1}{p}} $$
    for $1 \leq p \in \mathbb{R}$.
\label{def:cw_tce}
\end{definition}

Note that we removed the weighting factors from the original definition in \citet{kumar2019verified} for easier comparison with the other errors and a fixed upper limit (we will show that CWCE$_p \leq 2^{\frac{1}{p}}$).

\begin{definition}
    The \textbf{Kolmogorov-Smirnov calibration error} \citep{gupta2020calibration} of model $f \colon \mathcal{X} \to \mathcal{P}_n$ 
    is given by $$\text{KS} \left( f \right) = \mathbb{E} \left[ \text{KS} \left( f, C \right) \right],$$ where $C = \arg\max_k f_k \left( X \right)$ and $\text{KS} \left( f, k \right) = \max_{\sigma \in \left[0, 1 \right]} \left\lvert \int_{\left[0, \sigma\right]} z - \mathbb{P} \left( Y = k \mid f_k \left( X \right) = z \right) \mathrm{d} \mathbb{P}_{f_k \left(X\right)} \left( z \right) \right\rvert.$
\label{def:ks_error}
\end{definition}

\begin{definition}
    Given a reproducing kernel Hilbert space $\mathcal{H}$ with kernel $k \colon \left[0, 1\right] \times \left[0, 1\right] \to \mathbb{R}$ the \textbf{maximum mean calibration error} \citep{pmlr-v80-kumar18a} of model $f \colon \mathcal{X} \to \mathcal{P}_n$ is 
    \begin{equation*}
    \begin{split}
    & \text{MMCE} \left( f \right) = \\
    & \quad \quad \left\| \mathbb{E} \left[ \left( f_C \left( X \right) - \mathbb{P} \left( Y = C \mid f_C \left( X \right) \right) \right) k \left( f_C \left( X \right), . \right) \right] \right\|_{\mathcal{H}}.
    \end{split}
    \end{equation*}
\label{def:mmce}
\end{definition}

\begin{definition}
    Given a reproducing kernel Hilbert space $\mathcal{H}$ with kernel $k \colon \mathcal{P}_n \times \mathcal{P}_n \to \mathbb{R}^n \times \mathbb{R}^n$ 
    the \textbf{kernel calibration error} \citep{widmann2019calibration} of model $f \colon \mathcal{X} \to \mathcal{P}_n$ is 
    \begin{equation*}
    \begin{split}
    & \text{KCE} \left( f \right) = 
    \left\| \mathbb{E} \left[ \left( f \left( X \right) - \mathbb{P}_{Y \mid f \left( X \right)} \right) k \left( f \left( X \right), . \right) \right] \right\|_{\mathcal{H}}.
    \end{split}
    \end{equation*}
\label{def:kce}
\end{definition}

We also need the following score related definitions in the proofs.
These are simply a repetition from the main paper.

\begin{definition}
Given a proper score $S$ and $P, Q \in \mathcal{P}$, the expected score $s_S \colon \mathcal{P} \times \mathcal{P} \to \mathcal{R}$ is defined as $s_S \left(P, Q \right) = \mathbb{E}_{Y \sim Q} \left[ S \left(P, Y \right) \right] = \int_\mathcal{Y} S \left(P, y \right) \mathrm{d} Q \left( y \right)$.
\label{def:s_S}
\end{definition}

\begin{definition}
Given a proper score $S$ and $P, Q \in \mathcal{P}$, the associated \textbf{divergence} $d_S \colon \mathcal{P} \times \mathcal{P} \to \mathbb{R}_{\geq 0}$ is defined as
 $d_S \left(P, Q \right) = s_S \left(P, Q \right) - s_S \left(Q, Q \right)$
and the associated \textbf{generalized entropy} $g_S \colon \mathcal{P} \to \mathbb{R}$ as
$g_S \left( Q \right) = s_S \left(Q, Q \right)$.
\label{def:div_ent}
\end{definition}

\section{Proofs}
\label{app:proofs}

\subsection{Helpers}

The following will be of use in several proofs.

\begin{lemma}
    Assume that $S$ is a proper score for which CE$_S$ exists, then we have $$\text{CE}_S \left( f \right) = \mathbb{E} \left[ S \left( f \left( X \right), Y \right) \right] - \mathbb{E} \left[ g_S \left( \mathbb{P}_{Y \mid f \left( X \right)} \right) \right].$$
\label{le:cal_decomp}
\end{lemma}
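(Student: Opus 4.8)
The plan is to unfold the definition of $\text{CE}_S$ and rewrite the inner divergence using the definitions of $d_S$ and $g_S$, then pull the tower property of conditional expectation through the resulting expression. First I would write
\[
\text{CE}_S \left( f \right) = \mathbb{E} \left[ d_S \left( f \left( X \right), \mathbb{P}_{Y \mid f \left( X \right)} \right) \right]
= \mathbb{E} \left[ s_S \left( f \left( X \right), \mathbb{P}_{Y \mid f \left( X \right)} \right) - g_S \left( \mathbb{P}_{Y \mid f \left( X \right)} \right) \right],
\]
using Definition \ref{def:div_ent}. By linearity of expectation this splits into $\mathbb{E} \left[ s_S \left( f \left( X \right), \mathbb{P}_{Y \mid f \left( X \right)} \right) \right] - \mathbb{E} \left[ g_S \left( \mathbb{P}_{Y \mid f \left( X \right)} \right) \right]$, so the whole content of the lemma reduces to showing that the first term equals $\mathbb{E} \left[ S \left( f \left( X \right), Y \right) \right]$.

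For that identity I would invoke the definition $s_S \left( P, Q \right) = \mathbb{E}_{Y \sim Q} \left[ S \left( P, Y \right) \right]$ (Definition \ref{def:s_S}) with $P = f \left( X \right)$ and $Q = \mathbb{P}_{Y \mid f \left( X \right)}$. The key observation is that the conditional law of $Y$ given the sigma-algebra generated by $f \left( X \right)$ is exactly $\mathbb{P}_{Y \mid f \left( X \right)}$, so
\[
s_S \left( f \left( X \right), \mathbb{P}_{Y \mid f \left( X \right)} \right) = \mathbb{E} \left[ S \left( f \left( X \right), Y \right) \;\middle|\; f \left( X \right) \right]
\]
almost surely. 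Taking expectations of both sides and applying the tower property, $\mathbb{E} \left[ \mathbb{E} \left[ S \left( f \left( X \right), Y \right) \mid f \left( X \right) \right] \right] = \mathbb{E} \left[ S \left( f \left( X \right), Y \right) \right]$, which is precisely what is needed. Combining with the split above yields the claim.

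The main obstacle — more a point of care than a genuine difficulty — is justifying the measurability/integrability needed to identify $s_S \left( f \left( X \right), \mathbb{P}_{Y \mid f \left( X \right)} \right)$ with the conditional expectation $\mathbb{E} \left[ S \left( f \left( X \right), Y \right) \mid f \left( X \right) \right]$: one must note that $f \left( X \right)$ is $\sigma \left( f \left( X \right) \right)$-measurable and that $\mathbb{P}_{Y \mid f \left( X \right)}$ is a regular conditional distribution, so that plugging the measurable argument into the (jointly measurable) map $S$ and integrating against the conditional kernel indeed reproduces the conditional expectation. The hypothesis that $\text{CE}_S \left( f \right)$ exists guarantees the relevant integrals are well-defined, so no additional assumptions are required. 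The rest is bookkeeping with linearity of expectation.
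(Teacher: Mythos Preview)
Your proof is correct and matches the paper's own argument: both unfold $d_S$ and $g_S$, split by linearity, and then identify $\mathbb{E}\left[s_S\left(f(X),\mathbb{P}_{Y\mid f(X)}\right)\right]$ with $\mathbb{E}\left[S\left(f(X),Y\right)\right]$. The only cosmetic difference is that the paper writes out the disintegration $\int\!\!\int S(z,y)\,\mathrm{d}\mathbb{P}_{Y\mid f(X)=z}(y)\,\mathrm{d}\mathbb{P}_{f(X)}(z)=\int S(z,y)\,\mathrm{d}\mathbb{P}_{Y,f(X)}(y,z)$ explicitly, whereas you phrase the same step as the tower property of conditional expectation.
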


\begin{proof}
\begin{equation}
\begin{split}
    \text{CE}_S \left( f \right) \overset{\text{def }\ref{def:pce}}&{=} \mathbb{E} \left[ d_S \left( f \left( X \right), \mathbb{P}_{Y \mid f \left( X \right)} \right) \right] \\
    \overset{\text{def }\ref{def:div_ent}}&{=} \mathbb{E} \left[ s_S \left( f \left( X \right), \mathbb{P}_{Y \mid f \left( X \right)} \right) - s_S \left( \mathbb{P}_{Y \mid f \left( X \right)}, \mathbb{P}_{Y \mid f \left( X \right)} \right) \right] \\
    \overset{\text{def }\ref{def:div_ent}}&{=} \mathbb{E} \left[ s_S \left( f \left( X \right), \mathbb{P}_{Y \mid f \left( X \right)} \right) \right] - \mathbb{E} \left[ g_S \left( \mathbb{P}_{Y \mid f \left( X \right)} \right) \right] \\
    & = \int s_S \left( z, \mathbb{P}_{Y \mid f \left( X \right) = z} \right) \mathrm{d} \mathbb{P}_{f \left( X \right)} \left(z \right) - \mathbb{E} \left[ g_S \left( \mathbb{P}_{Y \mid f \left( X \right)} \right) \right] \\
    \overset{\text{def }\ref{def:s_S}}&{=} \int \int S \left(z, y \right) \mathrm{d} \mathbb{P}_{Y \mid f \left( X \right) = z} \left(y \right) \mathrm{d} \mathbb{P}_{f \left( X \right)} \left(z \right) - \mathbb{E} \left[ g_S \left( \mathbb{P}_{Y \mid f \left( X \right)} \right) \right] \\
    & = \int S \left(z, y \right) \mathrm{d} \mathbb{P}_{Y, f \left( X \right)} \left(y, z \right) - \mathbb{E} \left[ g_S \left( \mathbb{P}_{Y \mid f \left( X \right)} \right) \right] \\
    & = \mathbb{E} \left[ S \left( f \left( X \right), Y \right) \right] - \mathbb{E} \left[ g_S \left( \mathbb{P}_{Y \mid f \left( X \right)} \right) \right] \\
\label{eq:cal_decomp}
\end{split}
\end{equation}
\end{proof}

\subsection{Theorem \ref{th:ce_relations}}
\label{th:app:ce_relations}

    Given a model $f \; \colon \; \mathcal{X} \to \mathcal{P}_n$ and the above defined errors, we have
    \begin{equation}
    \begin{split}
        & \text{BS} \left( f \right) = 0 \\
        \implies & \text{CE}_p \left( f \right) = 0 
        \iff \text{KCE} \left( f \right) = 0 
        \iff f \text{ is calibrated} \\
        \implies & 
        \begin{cases}
            \text{CWCE}_p \left( f \right) = 0 \\
            \text{TCE}_p \left( f \right) = 0 
            \iff \text{MMCE} \left( f \right) = 0 
            \iff \text{KS} \left( f \right) = 0 
            \implies \text{ECE} \left( f \right) = 0 
        \end{cases}
    \label{eq:app:implies_chain}
    \end{split}
    \end{equation}
    and
    \begin{equation}
    \begin{split}
        n^{\frac{1}{p} - \frac{1}{2}} \sqrt{2} & \geq n^{\frac{1}{p} - \frac{1}{2}} \sqrt{\text{BS} \left( f \right)} \\
        \overset{*}&{\geq} \text{CE}_p \left( f \right) \\
        & \geq 
        \begin{cases}
            \text{CWCE}_p \left( f \right) \\
            \text{TCE}_p \left( f \right) \geq \text{TCE}_1 \left( f \right) \geq \begin{cases}
            \text{KS} \left( f \right) \\
            \text{ECE} \left( f \right) \\
            c \cdot \text{MMCE} \left( f \right)
            \end{cases}
            \geq 0            
        \end{cases}
    \end{split}
    \label{eq:app:unequalities}
    \end{equation}
    
    for $1 \leq p \in \mathbb{R}$.
    * BS is only included for $p \leq 2$. We define $c = \sqrt{\max_r k \left( r, r\right)}$ as given in Theorem 3 of \citet{pmlr-v80-kumar18a}.

\begin{proof}

\textbf{Regarding }
BS$\left( f \right) = 0 \implies$ CE$_p \left( f \right) =0 \iff \text{KCE} \left( f \right) = 0 \iff f \text{ is calibrated}$:

\begin{equation}
\begin{split}
    \text{BS} \left( f \right) = 0
    & \iff \mathbb{P}_{Y \mid X } \overset{\text{a.s.}}{=} f \left( X \right) \\
    & \implies \mathbb{P}_{Y \mid f \left( X \right)} \overset{\text{a.s.}}{=} f \left( X \right) \\
    & \iff \begin{cases}
        \text{CE}_p \left( f \right) = 0 \\
        \text{KCE} \left( f \right) = 0 \\
        f \text{ is calibrated}
    \end{cases}
\label{BS==>CE0}
\end{split}
\end{equation}

The last equivalence follows from Definition \ref{def:strongly_cal} and \ref{def:lp_ce}, and according to \citet{widmann2019calibration}.
Since the equivalence in the last line holds for each, it follows CE$_p \left( f \right) =0 \iff \text{KCE} \left( f \right) = 0 \iff f \text{ is calibrated}$.
Example sketch for BS$\left( f \right) = 0 \centernot\impliedby$ CE$_p \left( f \right) =0$:
Set $f \left(.\right) = \mathbb{P}_Y$, then $f \left( X \right) = \mathbb{P}_Y = \mathbb{P}_{Y \mid \mathbb{P}_Y} = \mathbb{P}_{Y \mid f \left(X \right)}$, but BS$\left( f \right) > 0$.

\textbf{Regarding }
CE$_p \left( f \right) = 0 \implies$ CWCE$_p \left( f \right) =0$:

\begin{equation}
\begin{split}
    \text{CE}_p \left( f \right) = 0
    & \iff \mathbb{P}_{Y \mid f \left( X \right)} \overset{\text{a.s.}}{=} f \left( X \right) \\
    & \iff \mathbb{P} \left(Y = k \mid f \left( X \right) \right) \overset{\text{a.s.}}{=} f_k \left( X \right) \quad \forall k \\
    & \implies \mathbb{E}_{f_{-k} \left( X \right)} \left[ \mathbb{P} \left(Y = k \mid f \left( X \right) \right) \mid f_k \left(X \right) \right] \overset{\text{a.s.}}{=} \mathbb{E}_{f_{-k}} \left[ f_k \left( X \right) \mid f_k \left(X \right) \right] \quad \forall k \\
    & \iff \mathbb{P} \left(Y = k \mid f_k \left( X \right) \right) \overset{\text{a.s.}}{=} f_k \left( X \right) \quad \forall k \\
    & \iff \sum_{k \in \mathcal{Y}} \mathbb{E} \left[ \left( \mathbb{P} \left( Y = k \mid f_k \left( X \right) \right) - f_k \left( X \right) \right)^p \right] = 0 \\
    & \iff \text{CWCE}_p \left( f \right) = 0
\label{CE0==>CWCE0}
\end{split}
\end{equation}

An example for CE$_p \left( f \right) = 0 \centernot\impliedby$ CWCE$_p \left( f \right) =0$ is given in the proof of Proposition \ref{prop:neg_example} located in Appendix \ref{app:prop:neg_example}.

\textbf{Regarding }
CE$_p \left( f \right) = 0 \implies$ TCE$_p \left( f \right) = 0$:

\begin{equation}
\begin{split}
    \text{CE}_p \left( f \right) = 0
    & \iff \mathbb{P}_{Y \mid f \left( X \right)} \overset{\text{a.s.}}{=} f \left( X \right) \\
    & \iff \mathbb{P} \left(Y = k \mid f \left( X \right) \right) \overset{\text{a.s.}}{=} f_k \left( X \right) \quad \forall k \\
    & \implies \mathbb{E} \left[ \mathbb{P} \left(Y = k \mid f \left( X \right) \right) \mid f_C \left( X \right) \right] \overset{\text{a.s.}}{=} f_k \left( X \right) \quad \forall k \\
    & \iff \mathbb{P} \left(Y = k \mid f_C \left( X \right) \right) \overset{\text{a.s.}}{=} f_k \left( X \right) \quad \forall k \\
    & \implies \mathbb{P} \left(Y = C \mid f_C \left( X \right) \right) \overset{\text{a.s.}}{=} f_C \left( X \right) \\
    & \iff \mathbb{E} \left[ \left( \mathbb{P} \left( Y = C \mid f_C \left( X \right) \right) - f_C \left( X \right) \right)^p \right] = 0 \\
    & \iff \text{TCE}_p \left( f \right) = 0
\label{CE0==>TCE0}
\end{split}
\end{equation}

\textbf{Regarding }
CWCE$_p \left( f \right) =0 \centernot\implies$ TCE$_p \left( f \right) =0 $:

In the original version of this work, we claimed CWCE$_p \left( f \right) =0 \implies$ TCE$_p \left( f \right) =0$, which is a false statement as demonstrated by \citet{chen2024on} in their Appendix F.1 via an example.







\textbf{Regarding }
TCE$_p \left( f \right) =0 \iff \text{MMCE} \left( f \right) = 0 $:

See Theorem 1 in \citet{pmlr-v80-kumar18a} and ther appendix for the proof.
Note that their claim that MMCE is a proper score is contradictive to the definition of proper scores.
By definition, a proper score can be evaluated with only a single target observation, while the MMCE needs at least two target observations.

\textbf{Regarding }
TCE$_p \left( f \right) = 0 \iff \text{KS} \left( f \right) = 0 $:

\begin{equation}
\begin{split}
    & \text{TCE}_p \left( f \right) = 0 \\
    & \iff \mathbb{P} \left( Y = \arg\max_k f_k \left( X \right) \mid \max_l f_l \left( X \right) \right) \overset{\text{a.s.}}{=} \max_m f_m \left( X \right) \\
    & \iff \mathbb{P} \left( Y = C \mid f_C \left( X \right) \right) \overset{\text{a.s.}}{=} f_C \left( X \right) \\
    \overset{(i)}&{\iff} \int_{\sigma'} \mathbb{P} \left( Y = C \mid f_C \left( X \right) = z \right) \mathrm{d} \mathbb{P}_{f_C \left( X \right) \mid C} \left( z \right) \overset{\text{a.s.}}{=} \int_{\sigma'} z \mathrm{d} \mathbb{P}_{f_C \left( X \right) \mid C} \left( z \right), \quad \forall \sigma' \subset \left[0, 1 \right] \\
    & \iff \int_{\left[0, \sigma \right]} \mathbb{P} \left( Y = C \mid f_C \left( X \right) = z \right) \mathrm{d} \mathbb{P}_{f_C \left( X \right) \mid C} \left( z \right) \overset{\text{a.s.}}{=} \int_{\left[0, \sigma \right]} z \mathrm{d} \mathbb{P}_{f_C \left( X \right) \mid C} \left( z \right), \quad \forall \sigma \in \left[0, 1 \right] \\
    & \iff \mathbb{E} \left[ \max_{\sigma \in \left[0, 1 \right]} \left\lvert \int_{\left[0, \sigma\right]} z - \mathbb{P} \left( Y = C \mid f_C \left( X \right) = z \right) \mathrm{d} \mathbb{P}_{f_C \left(X\right) \mid C} \left( z \right) \right\rvert \right] = 0 \\
    & \iff \mathbb{E} \left[ \text{KS} \left( f, C \right) \right] = 0 \\
    & \iff \text{KS} \left( f\right) = 0 \\
\label{TCE0==>KS}
\end{split}
\end{equation}

(i) according to Theorem 4.22 of \citet{capinski2004measure}.

\textbf{Regarding }
TCE$_p \left( f \right) = 0 \implies \text{ECE} \left( f \right) = 0 $:

\begin{equation}
\begin{split}
    & \text{TCE}_p \left( f \right) = 0 \\
    & \iff \mathbb{P} \left( Y = C \mid f_C \left( X \right) \right) \overset{\text{a.s.}}{=} f_C \left( X \right) \\
    \overset{\text{(i)}}&{\implies} \forall i = 1, \dots, m \colon \; \mathbb{P} \left( Y = C \mid f_C \left( X \right) \in B_i \right) \overset{\text{a.s.}}{=} \mathbb{E} \left[ f_C \left( X \right) \mid f_C \left( X \right) \in B_i \right] \\
    \overset{\text{def }\ref{def:ece}}&{\iff} \text{ECE} \left( f \right) = 0
\label{TCE0==>ECE0}
\end{split}
\end{equation}

(i) with $B_i$ defined as in definition \ref{def:ece}; follows since $\mathbb{P} \left( Y = C \mid f_C \left( X \right) \in B_i \right) = \int_{B_i} \mathbb{P} \left( Y = C \mid f_C \left( X \right) = z \right) \mathrm{d} \mathbb{P}_{f_C \left( X \right)} \left( z \right) \overset{\text{a.s.}}{=} \int_{B_i} f_C \left( X \right) \mathrm{d} \mathbb{P}_{f_C \left( X \right)} \left( z \right) = \mathbb{E} \left[ f_C \left( X \right) \mid f_C \left( X \right) \in B_i \right]$.

An intuition of why TCE$_1 \left( f \right) = 0 \centernot\impliedby \text{ECE} \left( f \right) = 0$ is given in example 3.2 of \citet{kumar2019verified}.

\textbf{Regarding }
$2 \geq \text{BS} \left( f \right) \geq \left( \text{CE}_2 \left( f \right) \right)^2$:

\begin{equation}
\begin{split}
    & 2 = \left\| e_1 - e_2 \right\|^2_2 \\
    & \geq \mathbb{E} \left[\left\| f \left( X \right) - e_Y \right\|^2_2 \right] \\
    \overset{\text{def }\ref{def:bs}}&{=} \text{BS} \left( f \right) \\
    \overset{\text{(i)}}&{\geq} \text{BS} \left( f \right) - \mathbb{E} \left[ g_\text{BS} \left( \mathbb{P}_{Y \mid f \left(X \right)} \right) \right] \\
    \overset{\text{le }\ref{le:cal_decomp}}&{=} \text{CE}_{\text{BS}} \left( f \right) \\
    \overset{\text{(ii)}}&{=} \left( \text{CE}_2 \left( f \right) \right)^2 \\
\label{eq:BS>CE22}
\end{split}
\end{equation}
(i) $g_\text{BS}$ non-negative, follows from definition \ref{def:div_ent}. \\
(ii) compare definition \ref{def:lp_ce} with the squared calibration term in \citep{ANewVectorPartitionoftheProbabilityScore}. \\

\textbf{Regarding }
$n^{\frac{1}{p} - \frac{1}{q}} \text{CE}_q \left( f \right) \geq \text{CE}_p \left( f \right)$ for $0 < p \leq q < \infty$:

We use $\Delta \coloneqq f \left( X \right) - \mathbb{P}_{Y \mid f \left( X \right)}$ for shorter equations.
Further, we use the $p$-norm inequality $n^{\frac{1}{p} - \frac{1}{q}} \left\| x \right\|_q \geq \left\| x \right\|_p$ for a vector $x \in \mathbb{R}^n$ and the $L^p$ space inequality $\left( \mathbb{E} \left[ \left\lvert X \right\rvert^q \right] \right)^{\frac{1}{q}} \geq \left( \mathbb{E} \left[ \left\lvert X \right\rvert^p \right] \right)^{\frac{1}{p}}$ for $X \in L^q \subset L^p$ \citep{capinski2004measure}.

\begin{equation}
\begin{split}
    & \text{CE}_p \left( f \right) \\
    \overset{\text{def }\ref{def:lp_ce}}{=} & \left( \mathbb{E} \left[ \left\| \Delta \right\|^p_p \right] \right)^{\frac{1}{p}} \\
    \leq & \left( \mathbb{E} \left[ \left( n^{\frac{1}{p} - \frac{1}{q}} \left\| \Delta \right\|_q \right)^p \right] \right)^{\frac{1}{p}} \\
    = & n^{\frac{1}{p} - \frac{1}{q}} \left( \mathbb{E} \left[ \left\| \Delta \right\|_q^p \right] \right)^{\frac{1}{p}} \\
    \leq & n^{\frac{1}{p} - \frac{1}{q}} \left( \mathbb{E} \left[ \left\| \Delta \right\|_q^q \right] \right)^{\frac{1}{q}} \\
    \overset{\text{def }\ref{def:lp_ce}}{=} & n^{\frac{1}{p} - \frac{1}{q}} \text{CE}_q \left( f \right) \\
\label{eq:CEq<CEp}
\end{split}
\end{equation}

Note that this result is a direct contradiction to Theorem 1 of \citep{wenger2020non} since $n^{\frac{1}{p} - \frac{1}{q}} > 1$.

Further, the name '$L^p$ calibration error' is unambiguous for canonical calibration since the following holds. 
Let $\left\lVert . \right\rVert_{\mathbb{R}^n,p}$ be the vector $p$-norm and $\left\lVert . \right\rVert_{L^{p}}$ the norm of the $L^p$ space.
Then we have
\begin{equation}
\text{CE}_p \left( f \right) = \left\lVert \left\lVert \Delta \right\rVert_{\mathbb{R}^n,p} \right\rVert_{L^{p}} = \left\lVert \left(\left\lVert \Delta_1 \right\rVert_{L^{p}}, \dots, \left\lVert \Delta_n \right\rVert_{L^{p}} \right)^\intercal \right\rVert_{\mathbb{R}^n,p}.
\end{equation}
Thus, there is no ambiguity if we first compute the vector norm or the $L^p$ norm and there cannot be another $L^p$ calibration error with a different norm order.

\textbf{Regarding }
$n^{\frac{1}{p} - \frac{1}{2}} \sqrt{\text{BS} \left( f \right) } \geq \text{CE}_p \left( f \right)$ for $0 < p \leq 2$:

Combining equations \eqref{eq:BS>CE22} and \eqref{eq:CEq<CEp} (with $q=2$) gives the result.

\textbf{Regarding }
$\text{CE}_p \left( f \right) \geq \text{CWCE}_p \left( f \right)$:

In the following, we will use Tonelli's theorem to split the expectation into two and the Jensen's inequality for the convex function $\lvert.\rvert^p$.

\begin{equation}
\begin{split}
    \left( \text{CE}_p \left( f \right) \right)^p & = \mathbb{E} \left[ \left\| f \left( X \right) - \mathbb{P}_{Y \mid f \left( X \right)} \right\|^p_p \right] \\
    & = \sum_{k \in \mathcal{Y}} \mathbb{E} \left[ \left\lvert f_k \left( X \right) - \mathbb{P} \left( Y=k \mid f \left( X \right) \right) \right\rvert^p \right] \\
    \overset{\text{Tonelli}}&{=} \sum_{k \in \mathcal{Y}} \mathbb{E}_{f_k \left( X \right)} \left[ \mathbb{E}_{f_{-k} \left( X \right)} \left[ \left\lvert f_k \left( X \right) - \mathbb{P} \left( Y=k \mid f \left( X \right) \right) \right\rvert^p \mid f_k \left( X \right) \right] \right] \\
    \overset{\text{Jensen}}&{\geq} \sum_{k \in \mathcal{Y}} \mathbb{E}_{f_k \left( X \right)} \left[ \left\lvert \mathbb{E}_{f_{-k} \left( X \right)} \left[ f_k \left( X \right) - \mathbb{P} \left( Y=k \mid f \left( X \right) \right) \mid f_k \left( X \right) \right] \right\rvert^p \right] \\
    & = \sum_{k \in \mathcal{Y}} \mathbb{E}_{f_k \left( X \right)} \left[ \left\lvert f_k \left( X \right) - \mathbb{P} \left( Y=k \mid f_k \left( X \right) \right) \right\rvert^p \right] \\
    \overset{\text{def }\ref{def:cw_tce}}&{=} \left( \text{CWCE}_p \left( f \right) \right)^p
\end{split}
\end{equation}

\textbf{Regarding }
$\text{CE}_p \left( f \right) \geq \text{TCE}_p \left( f \right)$:

We will use $F \coloneqq f \left( X \right)$ for shorter notation.
Further, for a vector $v \in \mathbb{R}^n$ of length $n \in \mathbb{N}$ and an index integer $i \in \left\{1, \dots n \right\}$ define $(i)_v$ as the index of the $i$-th largest element in $v$.
From these definitions follows that $(n)_F = (n)_{f \left( X \right)} = \arg\max_{i=1 \dots n} f_i \left( X \right) = C$ with $n = \lvert \mathcal{Y} \rvert$.

\begin{equation}
\begin{split}
    \left( \text{CE}_p \left( f \right) \right)^p & = \mathbb{E} \left[ \left\| f \left( X \right) - \mathbb{P}_{Y \mid f \left( X \right)} \right\|^p_p \right] \\
    & = \mathbb{E} \left[ \left\| F - \mathbb{P}_{Y \mid F} \right\|^p_p \right] \\
    & = \mathbb{E} \left[ \sum_{k \in \mathcal{Y}} \left\lvert F_k - \mathbb{P} \left( Y=k \mid F \right) \right\rvert^p \right] \\
    \overset{\text{(i)}}&{=} \mathbb{E} \left[ \sum_{k \in \mathcal{Y}} \left\lvert F_{(k)_F} - \mathbb{P} \left( Y=(k)_F \mid F \right) \right\rvert^p \right] \\
    \overset{\text{}}&{\geq} \mathbb{E} \left[ \left\lvert F_{(n)_F} - \mathbb{P} \left( Y=(n)_F \mid F \right) \right\rvert^p \right] \\
    \overset{\text{}}&{=} \mathbb{E} \left[ \left\lvert F_C - \mathbb{P} \left( Y=C \mid F \right) \right\rvert^p \right] \\
    \overset{\text{Tonelli}}&{=} \mathbb{E}_{F_C, C} \left[ \mathbb{E}_{F_{-C}} \left[ \left\lvert F_C - \mathbb{P} \left( Y=C \mid F \right) \right\rvert^p \mid F_C, C \right] \right] \\
    \overset{\text{Jensen}}&{\geq} \mathbb{E}_{F_C, C} \left[ \left\lvert \mathbb{E}_{F_{-C}} \left[ F_C - \mathbb{P} \left( Y=C \mid F \right) \mid F_C, C \right] \right\rvert^p \right] \\
    & = \mathbb{E}_{F_C, C} \left[ \left\lvert F_C - \mathbb{P} \left( Y=C \mid F_C \right) \right\rvert^p \right] \\
    & = \mathbb{E} \left[ \left\lvert F_C - \mathbb{P} \left( Y=C \mid F_C \right) \right\rvert^p \right] \\
    & = \mathbb{E} \left[ \left\lvert f_C \left( X \right) - \mathbb{P} \left( Y=C \mid f_C \left( X \right) \right) \right\rvert^p \right] \\
    \overset{\text{def }\ref{def:cw_tce}}&{=} \left( \text{TCE}_p \left( f \right) \right)^p
\end{split}
\end{equation}

(i) we re-order the sum according to the values of $F$

\textbf{Regarding }
$\text{TCE}_p \left( f \right) \geq \text{TCE}_1 \left( f \right)$:

Let $p \geq q \geq 1$.
This makes $\left( . \right)^{\frac{p}{q}}$ a convex function for positive arguments.
We will show the more general $\text{TCE}_p \left( f \right) \geq \text{TCE}_q \left( f \right)$.
From this directly follows $ \text{TCE}_p \left( f \right) \geq \text{TCE}_1 \left( f \right)$.

\begin{equation}
\begin{split}
    & \text{TCE}_p \left( f \right) \\
    & = \left( \mathbb{E} \left[ \left\lvert f_C \left( X \right) - \mathbb{P} \left( Y=C \mid f_C \left( X \right) \right) \right\rvert^p \right] \right)^{\frac{1}{p}} \\
    & = \left( \mathbb{E} \left[ \left\lvert f_C \left( X \right) - \mathbb{P} \left( Y=C \mid f_C \left( X \right) \right) \right\rvert^{q\frac{p}{q}} \right] \right)^{\frac{1}{p}} \\
    \overset{\text{Jensen}}&{\geq} \left( \mathbb{E} \left[ \left\lvert f_C \left( X \right) - \mathbb{P} \left( Y=C \mid f_C \left( X \right) \right) \right\rvert^q \right] \right)^{\frac{1}{p}\frac{p}{q}} \\
    & = \left( \mathbb{E} \left[ \left\lvert f_C \left( X \right) - \mathbb{P} \left( Y=C \mid f_C \left( X \right) \right) \right\rvert^q \right] \right)^{\frac{1}{q}} \\
    & = \text{TCE}_q \left( f \right) \\
\end{split}
\end{equation}

\textbf{Regarding }
$\text{TCE}_1 \left( f \right) \geq \text{KS} \left( f \right)$:

We will show the more general $\text{TCE}_p \left( f \right) \geq \text{KS} \left( f \right)$, from which $\text{TCE}_1 \left( f \right) \geq \text{KS} \left( f \right)$ follows.

We will make use of the indicator function for a set $A$ defined as $\mathbbm{1}_A \left( a \right) = \begin{cases}
1, & a \in A\\
0, & \text{ else.}
\end{cases}$

\begin{equation}
\begin{split}
    \left( \text{TCE}_p \left( f \right) \right)^p & = \mathbb{E} \left[ \left\lvert f_C \left( X \right) - \mathbb{P} \left( Y=C \mid f_C \left( X \right) \right) \right\rvert^p \right] \\
    \overset{\text{Tonelli}}&{=} \mathbb{E}_C \left[ \mathbb{E}_{f_C \left( X \right)} \left[ \left\lvert f_C \left( X \right) - \mathbb{P} \left( Y=C \mid f_C \left( X \right) \right) \right\rvert^p \mid C \right] \right] \\
    & = \mathbb{E}_C \left[ \mathbb{E}_{f_C \left( X \right)} \left[ \mathbbm{1}_{\left[0, 1 \right]} \left( f_C \left( X \right) \right) \left\lvert f_C \left( X \right) - \mathbb{P} \left( Y=C \mid f_C \left( X \right) \right) \right\rvert^p \mid C \right] \right] \\
    \overset{\text{(i)}}&{=} \mathbb{E}_C \left[ \max_{\sigma \in \left[ 0, 1 \right]} \mathbb{E}_{f_C \left( X \right)} \left[ \mathbbm{1}_{\left[0, \sigma \right]} \left( f_C \left( X \right) \right) \left\lvert f_C \left( X \right) - \mathbb{P} \left( Y=C \mid f_C \left( X \right) \right) \right\rvert^p \mid C \right] \right] \\
    & = \mathbb{E}_C \left[ \max_{\sigma \in \left[ 0, 1 \right]} \mathbb{E}_{f_C \left( X \right)} \left[ \left\lvert \mathbbm{1}_{\left[0, \sigma \right]} \left( f_C \left( X \right) \right) \left(f_C \left( X \right) - \mathbb{P} \left( Y=C \mid f_C \left( X \right) \right) \right) \right\rvert^p \mid C \right] \right] \\
    \overset{\text{Jensen}}&{\geq} \mathbb{E}_C \left[ \max_{\sigma \in \left[ 0, 1 \right]} \left\lvert \mathbb{E}_{f_C \left( X \right)} \left[ \mathbbm{1}_{\left[0, \sigma \right]} \left( f_C \left( X \right) \right) \left(f_C \left( X \right) - \mathbb{P} \left( Y=C \mid f_C \left( X \right) \right) \right) \mid C \right] \right\rvert^p \right] \\
    & = \mathbb{E}_C \left[ \max_{\sigma \in \left[ 0, 1 \right]} \left\lvert \int_{\left[0, 1 \right]} \mathbbm{1}_{\left[0, \sigma \right]} \left( z \right) \left(z - \mathbb{P} \left( Y=C \mid f_C \left( X \right) = z \right) \right) \mathrm{d} \mathbb{P}_{f_C \left( X \right) \mid C} \left( z \right) \right\rvert^p \right] \\
    & = \mathbb{E}_C \left[ \max_{\sigma \in \left[ 0, 1 \right]} \left\lvert \int_{\left[0, \sigma \right]} z - \mathbb{P} \left( Y=C \mid f_C \left( X \right) = z \right) \mathrm{d} \mathbb{P}_{f_C \left( X \right) \mid C} \left( z \right) \right\rvert^p \right] \\
    \overset{\text{Jensen}}&{\geq} \left( \mathbb{E}_C \left[ \max_{\sigma \in \left[ 0, 1 \right]} \left\lvert \int_{\left[0, \sigma \right]} z - \mathbb{P} \left( Y=C \mid f_C \left( X \right) = z \right) \mathrm{d} \mathbb{P}_{f_C \left( X \right) \mid C} \left( z \right) \right\rvert \right] \right)^p \\
    \overset{\text{def }\ref{def:ks_error}}&{=} \left( \mathbb{E}_C \left[ \text{KS} \left(f, C \right) \right] \right)^p \\
    \overset{\text{def }\ref{def:ks_error}}&{=} \left( \text{KS} \left(f \right) \right)^p
\end{split}
\end{equation}

(i) $\sigma \geq \sigma' \implies \mathbbm{1}_{\left[0, \sigma \right]} \left( f_C \left( X \right) \right) \left\lvert f_C \left( X \right) - \mathbb{P} \left( Y=C \mid f_C \left( X \right) \right) \right\rvert^p \geq \mathbbm{1}_{\left[0, \sigma' \right]} \left( f_C \left( X \right) \right) \left\lvert f_C \left( X \right) - \mathbb{P} \left( Y=C \mid f_C \left( X \right) \right) \right\rvert^p \geq 0$.

\textbf{Regarding }
$\text{TCE}_1 \left( f \right) \geq c \cdot \text{MMCE} \left( f \right)$:

This is given in the proof of Theorem 3 of \citet{pmlr-v80-kumar18a}.
Note that \citet{pmlr-v80-kumar18a} used ECE in their theorem, but their proof is actually given for TCE$_1$.
Since ECE$\left( f \right) = 0 \centernot\implies \text{MMCE} \left( f \right) = 0$, we have $\text{ECE} \left( f \right) \centernot\geq c \cdot \text{MMCE} \left( f \right)$.

\textbf{Regarding }
$\text{TCE}_1 \left( f \right) \geq \text{ECE} \left( f \right)$:

A similar statement for binary models is given in Proposition 3.3 of \citet{kumar2019verified} or for general models in Theorem 2 of \citet{vaicenavicius2019evaluating}.
Since our formulations differ, we provide an independent proof.

We will write $B_i \coloneqq \left( \frac{i - 1}{m}, \frac{i}{m} \right]$.
Let $\mathcal{B} \coloneqq \sigma \left( \left\{ B_1, \dots, B_m \right\} \right)$ be the $\sigma$-algebra generated by the binning scheme of size $m \in \mathbb{N}$ used for the ECE.

\begin{equation}
\begin{split}
    \text{TCE}_1 \left( f \right) & = \mathbb{E} \left[ \left\lvert f_C \left( X \right) - \mathbb{P} \left( Y=C \mid f_C \left( X \right) \right) \right\rvert \right] \\
    & = \mathbb{E} \left[ \mathbb{E} \left[ \left\lvert f_C \left( X \right) - \mathbb{P} \left( Y=C \mid f_C \left( X \right) \right) \right\rvert \mid \mathcal{B} \right] \right] \\
    \overset{\text{(i)}}&{\geq} \mathbb{E} \left[ \left\lvert \mathbb{E} \left[ f_C \left( X \right) - \mathbb{P} \left( Y=C \mid f_C \left( X \right) \right) \mid \mathcal{B} \right] \right\rvert \right] \\ 
    & = \mathbb{E} \left[ \left\lvert \mathbb{E} \left[ f_C \left( X \right) \mid \mathcal{B} \right] - \mathbb{P} \left( Y=C \mid \mathbb{E} \left[ f_C \left( X \right) \mid \mathcal{B} \right] \right) \right\rvert \right] \\
    & = \sum_{i=1}^m \mathbb{P} \left( f \left( X \right) \in B_i \right) \cdot \\
    & \quad \quad \left\lvert \mathbb{E} \left[ f_C \left( X \right) \mid f \left( X \right) \in B_i  \right] - \mathbb{P} \left( Y=C \mid f \left( X \right) \in B_i \right) \right\rvert \\
    \overset{\text{def }\ref{def:ece}}&{=} \text{ECE} \left( f \right) \\
\end{split}
\end{equation}

(i) We use conditional Jensen's inequality \citep{capinski2004measure}.

\end{proof}

\subsection{Proposition \ref{prop:neg_example}}
\label{app:prop:neg_example}

For all $1 > \alpha > 0.5$ and surjective $f \colon \mathcal{X} \to \mathcal{P}_n$ there exists a joint distribution $\mathbb{P}_{X,Y}$ such that for all $E \in \left\{ \text{MMCE}, \text{KS}, \text{ECE}, \text{TCE}_p, \text{CWCE}_p \mid 1 \leq p \in \mathbb{R} \right\} \colon$
\begin{equation*}
\begin{split}
    E \left( f \right) = 0 \quad \land \quad \text{CE}_2 \left( f \right) = \sqrt{1 - \frac{1}{n-1}}(1 - \alpha ).
\end{split}
\end{equation*}

\begin{proof}
Based on Theorem \ref{th:ce_relations} we only have to find a $\mathbb{P}_{X,Y}$ such that $\operatorname{CWCE}_p \left( f \right) = 0$ and $\operatorname{TCE}_p \left( f \right) = 0$ while $\operatorname{CE}_p \left( f \right) \neq 0$. \\
For $\alpha \in (0.5, 1)$ and $\beta := \frac{1 - \alpha}{n-1}$ with $3 \leq n \in \mathbb{N}$ define the $n$-dimensional probability vectors

\begin{equation}
\begin{split}
    p_1 & = 
    \begin{pmatrix}
        \alpha & \beta & \beta & \cdots  & \beta & \beta
    \end{pmatrix}^\intercal \\
    p_2 & = 
    \begin{pmatrix}
        \beta & \alpha & \beta & \cdots  & \beta & \beta
    \end{pmatrix}^\intercal \\
    p_3 & = 
    \begin{pmatrix}
        \beta & \beta & \alpha & \cdots  & \beta & \beta
    \end{pmatrix}^\intercal \\
    & \vdots \\
    p_{n-1} & = 
    \begin{pmatrix}
        \beta & \beta & \beta & \cdots & \alpha & \beta
    \end{pmatrix}^\intercal \\
    p_n & = 
    \begin{pmatrix}
        \beta & \beta & \beta & \cdots & \beta & \alpha
    \end{pmatrix}^\intercal    
\end{split}
\end{equation}

Since $f$ is surjective, we can choose $\mathbb{P}_{X,Y}$ such that 
\begin{equation}
    \forall k \in \{1 \dots n \} \colon \quad \mathbb{P} \left( f \left( X \right) = p_k \right) = \frac{1}{n}
\end{equation}

and with $\gamma \coloneqq 1 - \alpha$

\begin{equation}
\begin{split}
    \mathbb{P}_{Y \mid f \left( X \right) = p_1} & = 
    \begin{pmatrix}
        \alpha & 0 & 0 & \cdots & 0 & 0 & \gamma
    \end{pmatrix}^\intercal \\
    \mathbb{P}_{Y \mid f \left( X \right) = p_2} & = 
    \begin{pmatrix}
        \gamma & \alpha & 0 & \cdots & 0 & 0 & 0
    \end{pmatrix}^\intercal \\
    \mathbb{P}_{Y \mid f \left( X \right) = p_3} & = 
    \begin{pmatrix}
        0 & \gamma & \alpha & \cdots & 0 & 0 & 0
    \end{pmatrix}^\intercal \\
    & \vdots \\
    \mathbb{P}_{Y \mid f \left( X \right) = p_{n-1}} & = 
    \begin{pmatrix}
        0 & 0 & 0 & \cdots & \gamma & \alpha & 0
    \end{pmatrix}^\intercal \\
    \mathbb{P}_{Y \mid f \left( X \right) = p_n} & = 
    \begin{pmatrix}
        0 & 0 & 0 & \cdots & 0 & \gamma & \alpha
    \end{pmatrix}^\intercal.
\end{split}
\end{equation}

Note that $\left[\mathbb{P}_{Y \mid f \left( X \right) = p} \right]_i = \mathbb{P} \left( Y = i \mid f \left( X \right) = p \right)$ for any $i=1 \dots n$ and $p \in \Delta^n$.

Further, it follows that $\mathbb{P} \left( f_C \left( X \right) = \alpha \right) = 1$ and

\begin{equation}
\begin{split}
\mathbb{P} \left( Y = C \mid f_C \left( X \right) = \alpha \right) & = \mathbb{P} \left( Y = C \right) \\
& = \mathbb{E}_{f\left(X\right)} \left[ \mathbb{P} \left( Y = C \mid f \left( X \right) \right) \right] \\
& = \mathbb{E}_{f\left(X\right)} \left[ \mathbb{P} \left( Y = \arg\max_i f_i \left( X \right) \mid f \left( X \right) \right) \right] \\
& = \frac{1}{n} \sum_j \mathbb{P} \left( Y = \arg\max_i \left[p_j\right]_i \mid f \left( X \right) = p_j \right) \\
& = \frac{1}{n} \sum_j \mathbb{P} \left( Y = j \mid f \left( X \right) = p_j \right) \\
& = \frac{1}{n} \sum_j \alpha \\
& = \alpha,
\end{split} 
\end{equation}
which gives $\operatorname{TCE}_p \left( f \right) = 0$.

For the CWCE case, we also have \begin{equation}
\begin{split}
\mathbb{P} \left( Y = k \mid f_k \left( X \right) = \alpha \right) & = \mathbb{P} \left( Y = k \mid f \left( X \right) = p_k \right) \\
& = \alpha,
\end{split} 
\end{equation}
and
\begin{equation}
\begin{split}
\mathbb{P} \left( Y = k \mid f_k \left( X \right) = \beta \right) & = \mathbb{P} \left( Y = k \mid f_k \left( X \right) \neq \alpha \right) \\
& = \mathbb{P} \left( Y = k \mid f \left( X \right) \neq p_k \right) \\
& = \mathbb{E}_{f \left( X \right) \mid f \left( X \right) \neq p_k} \left[ \mathbb{P} \left( Y = k \mid f \left( X \right) \right) \right] \\
& = \frac{1}{n-1} \sum_{i \neq k}\mathbb{P} \left( Y = k \mid f \left( X \right) = p_i \right) \\
& = \frac{1}{n-1} \left( \gamma + \sum_{i=1}^{n-2} 0 \right) \\
& = \beta.
\end{split} 
\end{equation}

We also have $\mathbb{P} \left( f_k \left( X \right) = \alpha \right) = \mathbb{P} \left( f \left( X \right) = p_k \right) = \frac{1}{n}$ and $\mathbb{P} \left( f_k \left( X \right) = \beta \right) = \mathbb{P} \left( f_k \left( X \right) \neq \alpha \right) = \mathbb{P} \left( f \left( X \right) \neq p_k \right) = \frac{n-1}{n}$,
from which follows altogether that $\operatorname{CWCE}_p \left( f \right) = 0$.

Finally, we also have
\begin{equation}
\begin{split}
    \operatorname{CE}_2 \left( f \right)
    & = \sqrt{ \mathbb{E} \left[ \| f \left( X \right) - \mathbb{P}_{Y \mid f \left( X \right)}\|^2_2 \right]} \\
    & = \sqrt{ \frac{1}{n} \sum_{i=1}^n \| p_i - \mathbb{P}_{Y \mid f \left( X \right)=p_i}\|^2_2} \\
    & = \sqrt{ \frac{1}{n} \sum_{i=1}^n \left( \left( \alpha - \alpha \right)^2 + \left( \beta - \gamma \right)^2 + (n-2)\left( \beta - 0 \right)^2 \right)} \\
    & = \sqrt{\left( \beta - \gamma \right)^2 + (n-2)\beta^2} \\
    & = \sqrt{(n-1)\beta^2 - 2\beta\gamma + \gamma^2} \\
    & = \sqrt{(n-1)\frac{(1-\alpha)^2}{(n-1)^2} - 2\frac{1-\alpha}{n-1}(1-\alpha) + (1-\alpha)^2} \\
    & = \sqrt{\frac{(1-\alpha)^2}{n-1} - 2\frac{(1-\alpha)^2}{n-1} + \frac{(n-1)(1-\alpha)^2}{n-1}} \\
    & = \sqrt{\frac{(n-1)(1-\alpha)^2 - (1-\alpha)^2}{n-1}} \\
    & = \sqrt{\frac{(n-2)(1-\alpha)^2}{n-1}} \\
    & = \sqrt{\frac{n-2}{n-1}} (1-\alpha) \\
    & = \sqrt{1 - \frac{1}{n-1}} (1-\alpha). \\
\end{split}
\end{equation}

\end{proof}

\subsection{Proposition \ref{prop:transform}}

Proposition \ref{prop:neg_example} tells us about the existence of settings such that common errors are insufficient to capture miscalibration.
We might still wonder how likely it is to encounter such a situation in practice.
Indeed, we can come up with a recalibration transformation that is \textit{perfect} according to these errors and accuracy-preserving but not calibrated.
For this, assume that $f \colon \mathcal{X} \to \mathcal{P}_n$ is a trained model.
Define $t^f \colon \mathcal{P}_n \to \mathcal{P}_n$ to replace the largest entry in its input with the accuracy of model $f$.
The other entries are set such that the output is a unit vector.
A more formal definition is provided in the proof.

\begin{proposition}
    For all models $f \colon \mathcal{X} \to \mathcal{P}_n$ and $E \in \left\{ \text{MMCE}, \text{KS}, \text{ECE}, \text{TCE}_p \mid 1 \leq p \in \mathbb{R} \right\}$
    we have
    \begin{equation*}
    \begin{split}
        E \left( t^f \circ f \right) = 0 \quad \text{and} \quad \text{ACC} \left( t^f \circ f \right) = \text{ACC} \left( f \right).
    \end{split}
    \end{equation*}
    But, 
    $t^f \circ f$ is not calibrated in general.
\label{prop:transform}
\end{proposition}

\begin{proof}
    Assume we are given a model $f \colon \mathcal{X} \to \mathcal{P}_n$.
    
    Define $\sigma \colon \mathcal{P}_n \times \mathcal{P}_n \to \mathcal{P}_n$ to order the entries of its second input according to the values given in the first input.
    Let $\sigma^{-1} \colon \mathcal{P}_n \times \mathcal{P}_n \to \mathcal{P}_n$ revert the ordering in the second input according to the entries of its first input.
    For easier notation, we will write $\sigma_u \left( v \right) \coloneqq \sigma \left(u, v \right)$ and $\sigma^{-1}_u \left( v \right) \coloneqq \sigma^{-1} \left(u, v \right)$, which gives $\forall u,v \in \mathcal{P} \colon \; \sigma^{-1}_u \circ \sigma_u \left( v \right) = v$. I.e. $\sigma^{-1}_u$ is the inverse of $\sigma_u$ given $u$.
    
    
    
    Define $c_f \coloneqq \left( \text{ACC} \left( f \right), \frac{1 - \text{ACC} \left( f \right)}{n - 1}, \dots, \frac{1 - \text{ACC} \left( f \right)}{n - 1} \right)^\intercal \in \mathcal{P}_n$.
    
    Now, we can give a formal definition of $t^f$, which is defined as $t^f \left( p \right) = \sigma^{-1}_p \left( c_f \right)$.

    \textbf{Regarding accuracy:}
    
    We will use $[.]_k$ to denote entry with index $k$ of the expression inside the brackets.
    Since we can assume $\text{ACC} \left( f \right) > \frac{1 - \text{ACC} \left( f \right)}{n - 1}$ in every practical setting, we have 
    \begin{equation}
    \begin{split}
        & \arg\max_k t^f_k \circ f \left( X \right) \\
        & = \arg\max_k \left[\sigma^{-1}_{f \left( X \right)} \left( c_f \right) \right]_k \\
        \overset{(i)}&{=} \arg\max_k \left[\sigma^{-1}_{f \left( X \right)} \circ \sigma_{f \left( X \right)} \left( f \left( X \right) \right) \right]_k \\
        & = \arg\max_k \left[ f \left( X \right) \right]_k \\
        & = \arg\max_k f_k \left( X \right).
    \end{split}
    \end{equation}
    (i) $c_f$ and $\sigma_{f \left( X \right)} \left( f \left( X \right) \right)$ have their largest entry at index $k=1$.
    
    This states that $t^f$ is accuracy-preserving.
    
    \textbf{Regarding zero TCE:}

    Note that ACC$\left( f \right) = \mathbb{P} \left( Y = \arg\max_k f_k \left( X \right) \right)$.
    Using this, we have $\mathbb{P} \left( Y = \arg\max_k t^f_k \circ f \left( X \right) \mid \max_k t^f_k \circ f \left( X \right) \right) = \mathbb{P} \left( Y = \arg\max_k f_k \left( X \right) \mid \text{ACC} \left( f \right) \right) = \mathbb{P} \left( Y = \arg\max_k f_k \left( X \right) \right) = \text{ACC} \left( f \right) = \max_k t^f_k \circ f \left( X \right)$.
    It follows TCE$_p \left( t^f \circ f \right) = 0$. \\
    
    Proof for the other errors follows from Theorem \ref{th:ce_relations}.

    
\end{proof}

Even though $t^f$ is the perfect transformation according to ECE and accuracy, it is not the correct choice if the whole model prediction is relevant and supposed to be calibrated.

\subsection{Proposition \ref{prop:ece_estimator}}
\label{app:sec:mu_n}

We will write $\hat{Y} = \arg\max_k f_k \left( X \right)$ for the top-label prediction of classifier $f$.

Define 
\begin{equation}
\mu_{\left(n\right)} = \sum_{i=1}^m p_i \left\{ \sqrt{\frac{2}{\pi}} \sigma_i \exp \left( - \frac{\mu_i^2}{2 \sigma_i^2} \right) + \mu_i \left[ 1 - 2 \Phi \left( - \frac{\mu_i}{\sigma_i} \right) \right] \right\}
\end{equation}
with
\begin{equation}
\mu_i = \underbrace{\mathbb{E} \left[ f_C \left( X \right) \mid f_C \left( X \right) \in B_i \right]}_{= \text{conf}_i} - \underbrace{\mathbb{P} \left( Y = \hat{Y} \mid f_C \left( X \right) \in B_i \right)}_{= \text{acc}_i}
\label{eq:mu_i}
\end{equation}
as the true unknown difference between model confidence and model accuracy in bin $i$,
\begin{equation}
\sigma_i^2 = \frac{1}{n_i} \underbrace{\mathbb{V} \left[ f_C \left( X \right) \mid f_C \left( X \right) \in B_i \right]}_{V^\text{conf}_i \coloneqq } + \frac{1}{n_i} \underbrace{\text{acc}_i \left( 1 - \text{acc}_i \right)}_{V^\text{acc}_i \coloneqq }
\label{eq:sigma_i}
\end{equation}
as the combined model and accuracy sample variance in bin $i$,
and $\Phi$ as the cumulative distribution function (cdf) of a standard normal distribution.

The ECE for data size $n$ and $m$ bins is estimated by

\begin{equation}
\begin{split}
    \hat{\text{ECE}}_{\left(n \right)} = \sum_{i=1}^m \hat{p}_b \left\lvert \hat{\text{acc}}_b - \hat{\text{conf}}_b \right\rvert
\end{split}
\end{equation}

where $\hat{p}_i = \frac{n_i}{n}$ is the estimated bin frequency, $\hat{\text{acc}}_i = \frac{1}{n_i} \sum_j \mathbbm{1} \left\{ \hat{Y}_j = Y_j \land \hat{Y}_j \in B_i \right\}$ the estimated bin accuracy, $\hat{\text{conf}}_i = \frac{1}{n_i} \sum_j \hat{Y}_j \mathbbm{1} \left\{ \hat{Y}_j \in B_i \right\}$ the estimated bin confidence, and $n_i = \sum_j \mathbbm{1} \left\{ \hat{Y}_j \in B_i \right\}$ is the number of data instances in bin $i$.
We assume equal width binning, i.e. $B_i = \left( \frac{i - m}{m}, \frac{i}{m} \right]$.

We have
\begin{equation*}
\begin{split}
    \mathbb{E} \left[ \hat{\text{ECE}}_{\left(n \right)} \right] \approx \mu_{\left(n\right)} \geq \text{ECE} \quad, \quad
    \frac{\mathrm{d} \mu_{\left(n\right)}}{\mathrm{d} n} < 0 \quad, \quad \frac{\mathrm{d}^2 \mu_{\left(n\right)}}{\left(\mathrm{d} n \right)^2} > 0 \quad \text{and} \quad \frac{\mathrm{d}^2 \mu_{\left(n\right)}}{\mathrm{d} n \; \mathrm{d} \text{ECE}} > 0.
\end{split}
\end{equation*}

\begin{proof}

First,

\begin{equation}
\begin{split}
    & \mathbb{E} \left[ \hat{\text{ECE}}_{\left(n \right)} \right] \\
    \overset{\text{def}}&{=} \mathbb{E} \left[ \sum_{i=1}^m \hat{p}_i \left\lvert \hat{\text{acc}}_i - \hat{\text{conf}}_i \right\rvert \right] \\
    \overset{\text{def}}&{=} \mathbb{E} \left[ \sum_{i=1}^m \frac{1}{n} \sum_{j=1}^n \mathbbm{1} \left\{ \hat{Y}_j \in B_i \right\} \left\lvert \hat{\text{acc}}_i - \hat{\text{conf}}_i \right\rvert \right] \\
    & = \frac{1}{n} \sum_{j=1}^n \mathbb{E} \left[ \sum_{i=1}^m \mathbbm{1} \left\{ \hat{Y}_j \in B_i \right\} \left\lvert \hat{\text{acc}}_i - \hat{\text{conf}}_i \right\rvert \right] \\
    & = \frac{1}{n} \sum_{i=1}^n \mathbb{E} \left[ \sum_{i=1}^m \mathbbm{1} \left\{ \hat{Y}_j \in B_i \right\} \mathbb{E} \left[ \left\lvert \hat{\text{acc}}_i - \hat{\text{conf}}_i \right\rvert \mid \hat{Y}_j \right] \right] \\
    \overset{\text{(i)}}&{\approx} \frac{1}{n} \sum_{j=1}^n \sum_{i=1}^m \mathbb{E} \left[ \mathbbm{1} \left\{ \hat{Y}_j \in B_i \right\} \right] \mathbb{E} \left[ \left\lvert \hat{\text{acc}}_i - \hat{\text{conf}}_i \right\rvert \right] \\
    \overset{\text{iid}}&{=} \sum_{i=1}^m \mathbb{P} \left( \hat{Y} \in B_i \right) \mathbb{E} \left[ \left\lvert \hat{\text{acc}}_i - \hat{\text{conf}}_i \right\rvert \right] \\
\label{eq:exp_ece_1}
\end{split}
\end{equation}

(i) 'knowing' a single summand in a mean estimator does not change much. 

As one can see, the ECE estimator approximately consists of several $\mathbb{E} \left[ \left\lvert \hat{\text{acc}}_i - \hat{\text{conf}}_i \right\rvert \right]$.
According to the central limit theorem (CLT), we have $\lim_{n_i \to \infty} \left( \frac{\hat{\text{acc}}_i - \text{acc}_i}{\sqrt{\sfrac{ V^\text{acc}_i}{n_i}}} \right) \sim \mathcal{N} \left( 0, 1 \right)$ and $\lim_{n_i \to \infty} \left( \frac{\hat{\text{conf}}_i - \text{conf}_i }{\sqrt{\sfrac{ V^\text{conf}_i }{n_i}}} \right) \sim \mathcal{N} \left( 0, 1 \right)$.
We assume $\hat{\text{acc}}_i$ and $\hat{\text{conf}}_i$ approximately follow the normal distributions given by the CLT, i.e. $\hat{\text{acc}}_i \sim \mathcal{N} \left( \text{acc}_i, \frac{V^\text{acc}_i}{n_i} \right)$ and $\hat{\text{conf}}_i \sim \mathcal{N} \left( \text{conf}_i, \frac{ V^\text{conf}_i}{n_i} \right)$. \\
This gives $\hat{\text{acc}}_i - \hat{\text{conf}}_i \sim \mathcal{N} \left( \text{acc}_i - \text{conf}_i, \frac{V^\text{conf}_i + V^\text{acc}_i}{n_i} \right)$. \footnote{http://www.stat.ucla.edu/$\sim$nchristo/introstatistics/introstats\_normal\_linear\_combinations.pdf}
If $X \sim \mathcal{N} \left( \mu, \sigma^2 \right)$, then $\left\lvert X \right\rvert$ is a folded normal distribution (FN) with $\mathbb{E} \left[ \left\lvert X \right\rvert \right] = \sqrt{\frac{2}{\pi}} \sigma \exp \left( - \frac{\mu^2}{2 \sigma^2} \right) + \mu \left[ 1 - 2 \Phi \left( - \frac{\mu}{\sigma} \right) \right]$ with $\Phi$ the cdf of a standard normal distribution \citep{Tsagris2014}. 
We also have 
\begin{equation}
    \sqrt{\frac{2}{\pi}} \sigma \exp \left( - \frac{\mu^2}{2 \sigma^2} \right) + \mu \left[ 1 - 2 \Phi \left( - \frac{\mu}{\sigma} \right) \right] = \mathbb{E} \left[ \left\lvert X \right\rvert \right] \geq \left\lvert \mathbb{E} \left[ X \right] \right\rvert = \left\lvert \mu \right\rvert
\label{eq:mu_ub_ece}
\end{equation}

(by Jensen's inequality) and

\begin{equation}
\begin{split}
    \frac{\mathrm{d}}{\mathrm{d} \sigma} \mathbb{E} \left[ \left\lvert X \right\rvert \right]
    & = \frac{\mathrm{d}}{\mathrm{d} \sigma} \left( \sqrt{\frac{2}{\pi}} \sigma \exp \left( - \frac{\mu^2}{2 \sigma^2} \right) + \mu \left[ 1 - 2 \Phi \left( - \frac{\mu}{\sigma} \right) \right] \right) \\ 
    & = \sqrt{\frac{2}{\pi}} \exp \left( - \frac{\mu^2}{2 \sigma^2} \right) + \sqrt{\frac{2}{\pi}} \sigma \exp \left( - \frac{\mu^2}{2 \sigma^2} \right) \frac{\mu^2}{\sigma^3} - \mu 2 \phi \left( - \frac{\mu}{\sigma} \right)\frac{\mu}{\sigma^2} \\ 
    & = \sqrt{\frac{2}{\pi}} \exp \left( - \frac{\mu^2}{2 \sigma^2} \right) + \sqrt{\frac{2}{\pi}} \exp \left( - \frac{\mu^2}{2 \sigma^2} \right) \frac{\mu^2}{\sigma^2} - \frac{2}{\sqrt{2 \pi}} \exp \left( - \frac{\mu^2}{2 \sigma^2} \right)\frac{\mu^2}{\sigma^2} \\ 
    & = \sqrt{\frac{2}{\pi}} \exp \left( - \frac{\mu^2}{2 \sigma^2} \right).
\label{eq:fn_deriv}
\end{split}
\end{equation}

Consequently, $\left\lvert \hat{\text{acc}}_i - \hat{\text{conf}}_i \right\rvert$ follows approximately a folded normal distribution with 

\begin{equation}
\begin{split}
    \mathbb{E} \left[ \left\lvert \hat{\text{acc}}_i - \hat{\text{conf}}_i \right\rvert \right] \approx \sqrt{\frac{2}{\pi}} \sigma_i \exp \left( - \frac{\mu_i^2}{2 \sigma_i^2} \right) + \mu_i \left[ 1 - 2 \Phi \left( - \frac{\mu_i}{\sigma_i} \right) \right]
\label{eq:abs_acc_conf}
\end{split}
\end{equation}

where $\mu_i$ and $\sigma_i$ are defined as above in equations \eqref{eq:mu_i} and \eqref{eq:sigma_i}. \\
Consequently, by combining equations  \eqref{eq:mu_ub_ece}, \eqref{eq:abs_acc_conf} and \eqref{eq:exp_ece_1} we get the first result:

\begin{equation}
\begin{split}
    \mathbb{E} \left[ \hat{\text{ECE}}_{\left(n \right)} \right] & \approx \underbrace{\sum_{i=1}^m p_i \left\{ \sqrt{\frac{2}{\pi}} \sigma_i \exp \left( - \frac{\mu_i^2}{2 \sigma_i^2} \right) + \mu_i \left[ 1 - 2 \Phi \left( - \frac{\mu_i}{\sigma_i} \right) \right] \right\}}_{= \mu_{\left(n\right)}} \\
    & \geq \sum_{i=1}^m p_i \left\lvert \mu_i \right\rvert \\
    & = \sum_{i=1}^m p_i \left\lvert \text{acc}_i - \text{conf}_i \right\rvert \\
    & = \text{ECE}
\end{split}    
\end{equation}

As we can see, the average outcome depends on $\sigma_i$, which further depends on $n_i$, i.e. the data size influences our expected result.
To get the next result, which shows the trend of this influence, we calculate the first derivative:

\begin{equation}
\begin{split}
    & \frac{\mathrm{d}}{\mathrm{d} n} \mu_{\left( n \right)} \\
    & = \frac{\mathrm{d}}{\mathrm{d} n} \sum_{i=1}^m p_i \left\{ \sqrt{\frac{2}{\pi}} \sigma_i \exp \left( - \frac{\mu_i^2}{2 \sigma_i^2} \right) + \mu_i \left[ 1 - 2 \Phi \left( - \frac{\mu_i}{\sigma_i} \right) \right] \right\} \\
    & = \sum_{j=1}^m \frac{\mathrm{d} n_j}{\mathrm{d} n} \frac{\mathrm{d} \sigma_j}{\mathrm{d} n_j} \frac{\mathrm{d}}{\mathrm{d} \sigma_j} \sum_{i=1}^m p_i \left\{ \sqrt{\frac{2}{\pi}} \sigma_i \exp \left( - \frac{\mu_i^2}{2 \sigma_i^2} \right) + \mu_i \left[ 1 - 2 \Phi \left( - \frac{\mu_i}{\sigma_i} \right) \right] \right\} \\
    & = \sum_{j=1}^m \frac{\mathrm{d} n_j}{\mathrm{d} n} \frac{\mathrm{d} \sigma_j}{\mathrm{d} n_j} \frac{\mathrm{d}}{\mathrm{d} \sigma_j} p_j \left\{ \sqrt{\frac{2}{\pi}} \sigma_j \exp \left( - \frac{\mu_j^2}{2 \sigma_j^2} \right) + \mu_j \left[ 1 - 2 \Phi \left( - \frac{\mu_j}{\sigma_j} \right) \right] \right\} \\
    \overset{\eqref{eq:fn_deriv}}&{=} \sum_{j=1}^m \frac{\mathrm{d} n_j}{\mathrm{d} n} \frac{\mathrm{d} \sigma_j}{\mathrm{d} n_j} p_j \sqrt{\frac{2}{\pi}} \exp \left( - \frac{\mu_j^2}{2 \sigma_j^2} \right) \\
    & = \sum_{j=1}^m \frac{\mathrm{d} n_j}{\mathrm{d} n} \frac{- \sqrt{V^\text{conf}_j + V^\text{acc}_j }}{2 n_j^{\sfrac{3}{2}}} p_j \sqrt{\frac{2}{\pi}} \exp \left( - \frac{\mu_j^2}{2 \sigma_j^2} \right) \\
    & = \sum_{j=1}^m 1 \cdot \underbrace{\frac{- \sqrt{V^\text{conf}_j + V^\text{acc}_j }}{2 n_j^{\sfrac{3}{2}}}}_{< 0} \underbrace{p_j \sqrt{\frac{2}{\pi}} \exp \left( - \frac{\mu_j^2}{2 \sigma_j^2} \right)}_{> 0}.
\label{eq:mu_1st_deriv}
\end{split}
\end{equation}

This means $\mu_{\left( n \right)}$ is monotonically decreasing with growing data size.

Next, we have

\begin{equation}
\begin{split}
    & \frac{\mathrm{d}^2}{\left( \mathrm{d} n \right)^2} \mu_{\left( n \right)} \\
    \overset{\eqref{eq:mu_1st_deriv}}&{=} \frac{\mathrm{d}}{ \mathrm{d} n } \sum_{j=1}^m \frac{- \sqrt{V^\text{conf}_j + V^\text{acc}_j }}{2 n_j^{\sfrac{3}{2}}}p_j \sqrt{\frac{2}{\pi}} \exp \left( - \frac{\mu_j^2}{2 \sigma_j^2} \right) \\
    & = \frac{\mathrm{d}}{ \mathrm{d} n } \sum_{j=1}^m \frac{- \sqrt{V^\text{conf}_j + V^\text{acc}_j }}{2 n_j^{\sfrac{3}{2}}}p_j \sqrt{\frac{2}{\pi}} \exp \left( - \frac{\mu_j^2 n_j}{2 V^\text{conf}_j + 2 V^\text{acc}_j} \right) \\
    & = \sum_{i=1}^m \frac{\mathrm{d} n_i}{ \mathrm{d} n } \frac{\mathrm{d}}{ \mathrm{d} n_i } \sum_{j=1}^m \frac{- \sqrt{V^\text{conf}_j + V^\text{acc}_j }}{2 n_j^{\sfrac{3}{2}}}p_j \sqrt{\frac{2}{\pi}} \exp \left( - \frac{\mu_j^2 n_j}{2 V^\text{conf}_j + 2 V^\text{acc}_j} \right) \\
    & = \sum_{i=1}^m 1 \cdot \frac{\mathrm{d}}{ \mathrm{d} n_i } \frac{- \sqrt{V^\text{conf}_i + V^\text{acc}_i }}{2 n_i^{\sfrac{3}{2}}}p_i \sqrt{\frac{2}{\pi}} \exp \left( - \frac{\mu_i^2 n_i}{2 V^\text{conf}_i + 2 V^\text{acc}_i} \right) \\
    & = \sum_{i=1}^m \underbrace{\frac{3 \sqrt{V^\text{conf}_i + V^\text{acc}_i }}{4 n_i^{\sfrac{5}{2}}}p_i \sqrt{\frac{2}{\pi}} \exp \left( - \frac{\mu_i^2 n_i}{2 V^\text{conf}_i + 2 V^\text{acc}_i} \right)}_{> 0} \\
    & \quad + \underbrace{\frac{\sqrt{V^\text{conf}_i + V^\text{acc}_i }}{2 n_i^{\sfrac{3}{2}}}p_i \sqrt{\frac{2}{\pi}} \exp \left( - \frac{\mu_i^2 n_i}{2 V^\text{conf}_i + 2 V^\text{acc}_i} \right) \frac{\mu_i^2}{2 V^\text{conf}_i + 2 V^\text{acc}_i}}_{> 0}.
\end{split}
\end{equation}

This means, in combination with the previous result, $\mu_{\left( n \right)}$ is a strictly convex and monotonically decreasing function of the data size $n_b$.
The ECE estimate is increasingly sensitive to the data size for smaller data sizes, while for larger data sizes the sensitivity vanishes.

Next, we analyze how the goodness of calibration influences this behaviour.
Recall that $\mu_i = \text{acc}_i - \text{conf}_i$, i.e. $\mu_i^2$ is the ground truth squared calibration error of bin $i$.
We have 

\begin{equation}
\begin{split}
    & \frac{\mathrm{d}^2}{\mathrm{d} n \mathrm{d} \mu_i^2} \mu_{\left( n \right)} \\
    \overset{\eqref{eq:mu_1st_deriv}}&{=} \frac{\mathrm{d}}{ \mathrm{d} \mu_i^2 } \sum_{j=1}^m \frac{- \sqrt{V^\text{conf}_j + V^\text{acc}_j }}{2 n_j^{\sfrac{3}{2}}}p_j \sqrt{\frac{2}{\pi}} \exp \left( - \frac{\mu_j^2}{2 \sigma_j^2} \right) \\
    & = \frac{\mathrm{d}}{ \mathrm{d} \mu_i^2 } \frac{- \sqrt{V^\text{conf}_i + V^\text{acc}_i }}{2 n_i^{\sfrac{3}{2}}}p_i \sqrt{\frac{2}{\pi}} \exp \left( - \frac{\mu_i^2}{2 \sigma_i^2} \right) \\
    & = \underbrace{\frac{\sqrt{V^\text{conf}_i + V^\text{acc}_i }}{4 n_i^{\sfrac{3}{2}} \sigma_i^2}p_i \sqrt{\frac{2}{\pi}} \exp \left( - \frac{\mu_i^2}{2 \sigma_i^2} \right)}_{> 0}
\label{eq:2nd_deriv_mu}
\end{split}
\end{equation}

Consequently, if $\mu_i^2$ increases (i.e. calibration gets worse), the gradient $\frac{\mathrm{d}}{\mathrm{d} n} \mathbb{E} \left[ \hat{\text{ECE}}_{\left( n \right)} \right]$ monotonically approaches zero from beneath.
Contrary, the gradient is the highest when $\mu_i = 0$.
In other words, the sensitivity of the ECE estimate w.r.t. the data size monotonically depends on the goodness of calibration.
With better calibration, the sensitivity gradually gets worse.

Further, we have

\begin{equation}
\begin{split}
    & \frac{\mathrm{d} \mu_i^2}{ \mathrm{d} \text{ECE}} \\
    & = \frac{\mathrm{d}}{ \mathrm{d} \text{ECE}} \left( \text{acc}_i - \text{conf}_i \right)^2 \\
    & = \frac{\mathrm{d}}{ \mathrm{d} \text{ECE}} \left\lvert \text{acc}_i - \text{conf}_i \right\rvert^2 \\
    & = \frac{\mathrm{d}}{ \mathrm{d} \text{ECE}} \left( \frac{\text{ECE} - \sum_{j \neq i} p_j \left\lvert \text{acc}_j - \text{conf}_j \right\rvert}{p_i} \right)^2 \\
    & = \underbrace{2 \frac{\text{ECE} - \sum_{j \neq i} p_j \left\lvert \text{acc}_j - \text{conf}_j \right\rvert}{p_i^2}}_{> 0}.
\label{eq:d_mu_i_d_ece}
\end{split}
\end{equation}

Combining equations \eqref{eq:2nd_deriv_mu} and \eqref{eq:d_mu_i_d_ece} gives $\frac{\mathrm{d}^2}{\mathrm{d} n \mathrm{d} \text{ECE}} \mu_{\left( n \right)} > 0$ as stated in the proposition.

\end{proof}

\subsection{Lemma \ref{lemma:cal_sharp_decomp}}

Let $\mathcal{P}$ be a set of arbitrary distributions for which exists a proper score $S$. Assume we have random variables $Q$ and $Y$ with $Q, \mathbb{P}_Y, \mathbb{P}_{Y \mid Q} \in \mathcal{P}$ for which $g_S \left( \mathbb{P}_Y \right), \mathbb{E} \left[ g_S \left( \mathbb{P}_{Y \mid Q} \right) \right], \mathbb{E} \left[ \left\lvert S \left(Q, Y \right) \right\rvert \right], \mathbb{E} \left[ \left\lvert S \left(\mathbb{P}_Y, Y \right) \right\rvert \right] < \infty$.
The last two expectations are required for Fubini's theorem.

\begin{equation*}
\begin{split}
    \mathbb{E} \left[ S \left(Q, Y \right) \right] & = \int S \left(q, y \right) \mathrm{d} \mathbb{P}_{Y, Q} \left(y, q \right) \\
    \overset{\text{Fubini}}&{=} \int \int S \left(q, y \right) \mathrm{d} \mathbb{P}_{Y \mid Q=q} \left(y \right) \mathrm{d} \mathbb{P}_{Q} \left(q \right) \\
    \overset{\text{def }\ref{def:s_S}}&{=} \int s_S \left(q, \mathbb{P}_{Y \mid Q=q} \right) \mathrm{d} \mathbb{P}_{Q} \left(q \right) \\
    & = \mathbb{E} \left[ s_S \left(Q, \mathbb{P}_{Y \mid Q} \right) \right] \\
    & = \mathbb{E} \left[ s_S \left(\mathbb{P}_{Y \mid Q}, \mathbb{P}_{Y \mid Q} \right) \right] + \mathbb{E} \left[ s_S \left(Q, \mathbb{P}_{Y \mid Q} \right) \right] - \mathbb{E} \left[ s_S \left(\mathbb{P}_{Y \mid Q}, \mathbb{P}_{Y \mid Q} \right) \right] \\
    \overset{\text{def }\ref{def:div_ent}}&{=} \mathbb{E} \left[ s_S \left(\mathbb{P}_{Y \mid Q}, \mathbb{P}_{Y \mid Q} \right) \right] + \mathbb{E} \left[ d_S \left(Q, \mathbb{P}_{Y \mid Q} \right) \right] \\
    & = s_S \left(\mathbb{P}_{Y}, \mathbb{P}_{Y} \right) - s_S \left(\mathbb{P}_{Y}, \mathbb{P}_{Y} \right) + \mathbb{E} \left[ s_S \left(\mathbb{P}_{Y \mid Q}, \mathbb{P}_{Y \mid Q} \right) \right] + \mathbb{E} \left[ d_S \left(Q, \mathbb{P}_{Y \mid Q} \right) \right] \\
    \overset{\text{def }\ref{def:s_S}}&{=} s_S \left(\mathbb{P}_{Y}, \mathbb{P}_{Y} \right) - \int S \left(\mathbb{P}_{Y}, y \right) \mathrm{d} \mathbb{P}_{Y} \left(y \right) + \mathbb{E} \left[ s_S \left(\mathbb{P}_{Y \mid Q}, \mathbb{P}_{Y \mid Q} \right) \right] + \mathbb{E} \left[ d_S \left(Q, \mathbb{P}_{Y \mid Q} \right) \right] \\
    & = s_S \left(\mathbb{P}_{Y}, \mathbb{P}_{Y} \right) - \int S \left(\mathbb{P}_{Y}, y \right) \underbrace{\int \mathrm{d} \mathbb{P}_{Q \mid Y=y} \left(q \right)}_{=1} \mathrm{d} \mathbb{P}_{Y} \left(y\right) + \mathbb{E} \left[ s_S \left(\mathbb{P}_{Y \mid Q}, \mathbb{P}_{Y \mid Q} \right) \right] + \mathbb{E} \left[ d_S \left(Q, \mathbb{P}_{Y \mid Q} \right) \right] \\
    & = s_S \left(\mathbb{P}_{Y}, \mathbb{P}_{Y} \right) - \int S \left(\mathbb{P}_{Y}, y \right) \mathrm{d} \mathbb{P}_{Y, Q} \left(y, q \right) + \mathbb{E} \left[ s_S \left(\mathbb{P}_{Y \mid Q}, \mathbb{P}_{Y \mid Q} \right) \right] + \mathbb{E} \left[ d_S \left(Q, \mathbb{P}_{Y \mid Q} \right) \right] \\
    \overset{\text{Fubini}}&{=} s_S \left(\mathbb{P}_{Y}, \mathbb{P}_{Y} \right) - \int \int S \left(\mathbb{P}_{Y}, y \right) \mathrm{d} \mathbb{P}_{Y \mid Q=q} \left(y \right) \mathrm{d} \mathbb{P}_{Q} \left(q\right) + \mathbb{E} \left[ s_S \left(\mathbb{P}_{Y \mid Q}, \mathbb{P}_{Y \mid Q} \right) \right] + \mathbb{E} \left[ d_S \left(Q, \mathbb{P}_{Y \mid Q} \right) \right] \\
    \overset{\text{def }\ref{def:s_S}}&{=} s_S \left(\mathbb{P}_{Y}, \mathbb{P}_{Y} \right) - \int s_S \left(\mathbb{P}_{Y}, \mathbb{P}_{Y \mid Q=q}\right) \mathrm{d} \mathbb{P}_{Q} \left(q\right) + \mathbb{E} \left[ s_S \left(\mathbb{P}_{Y \mid Q}, \mathbb{P}_{Y \mid Q} \right) \right] + \mathbb{E} \left[ d_S \left(Q, \mathbb{P}_{Y \mid Q} \right) \right] \\
    & = s_S \left(\mathbb{P}_{Y}, \mathbb{P}_{Y} \right) - \mathbb{E} \left[ s_S \left(\mathbb{P}_{Y}, \mathbb{P}_{Y \mid Q} \right) \right] + \mathbb{E} \left[ s_S \left(\mathbb{P}_{Y \mid Q}, \mathbb{P}_{Y \mid Q} \right) \right] + \mathbb{E} \left[ d_S \left(Q, \mathbb{P}_{Y \mid Q} \right) \right] \\
    \overset{\text{def }\ref{def:div_ent}}&{=} \underbrace{g_S \left(\mathbb{P}_Y \right)}_\text{generalized entropy} - \underbrace{\mathbb{E} \left[ d_S \left(\mathbb{P}_Y, \mathbb{P}_{Y \mid Q} \right) \right]}_\text{sharpness} + \underbrace{\mathbb{E} \left[ d_S \left(Q, \mathbb{P}_{Y \mid Q} \right) \right]}_\text{calibration}. \\
\end{split}
\end{equation*}

\subsection{Theorem \ref{th:ub}}

For all proper calibration errors with $\inf_{P \in \mathcal{P}} g_S \left( P \right) \in \mathbb{R}$, there exists an associated \textbf{calibration upper bound} $$ \mathcal{U}_S \left( f \right) \geq \text{CE}_S \left( f \right)$$
defined as $\mathcal{U}_S \left( f \right) \coloneqq \mathbb{E} \left[ S \left( f \left( X \right), Y \right) \right] - \inf_{P \in \mathcal{P}} g_S \left(P\right)$.
Under a classification setting and further mild conditions, it is asymptotically equal to the CE$_S$ with increasing model accuracy, i.e. $$ \lim_{\text{ACC} \left(f\right) \to 1} \mathcal{U}_S \left( f \right) - \text{CE}_S \left( f \right) = 0. $$

\begin{proof}

\textbf{Regarding existence of upper bound}

Assuming $\inf_{Q \in \mathcal{P}} g_S \left( Q \right) \in \mathbb{R}$.

\begin{equation}
\begin{split}
    \text{CE}_S \left( f \right) \overset{\text{le }\ref{le:cal_decomp}}&{=} \mathbb{E} \left[ S \left( f \left( X \right), Y \right) \right] - \mathbb{E} \left[ g_S \left( \mathbb{P}_{Y \mid f \left( X \right)} \right) \right] \\
    & \leq \mathbb{E} \left[ S \left( f \left( X \right), Y \right) \right] - \mathbb{E} \left[ \inf_{Q \in \mathcal{P}} g_S \left( Q \right) \right] \\
    & = \mathbb{E} \left[ S \left( f \left( X \right), Y \right) \right] - \inf_{Q \in \mathcal{P}} g_S \left( Q \right) \\
    \overset{\text{th } \ref{th:ub}}&{=} \mathcal{U}_S \left( f \right)
\label{eq:ub}
\end{split}
\end{equation}

\textbf{Regarding accuracy limes}

Assuming mild conditions $g_S \colon \mathcal{P}_n \to \mathbb{R}$ is continuous and $g_S \left( e_1 \right) = g_S \left( e_2 \right) = \dots = g_S \left( e_n \right)$.
See Figure 2 in \citet{gneitingscores} for an example when this is violated.
$S$ does not have to be symmetric for this to hold.

\begin{equation}
\begin{split}
    & \lim_{\text{ACC} \left(f\right) \to 1} \text{CE}_S \left( f \right) - \mathcal{U}_S \left( f \right) \\
    \overset{\text{th }\ref{th:ub}}&{=} \lim_{\text{ACC} \left(f\right) \to 1}  \text{CE}_S \left( f \right) - \mathbb{E} \left[ S \left( f \left( X \right), Y \right) \right] + \inf_{Q \in \mathcal{P}_n} g_S \left( Q \right) \\
    \overset{\text{le }\ref{le:cal_decomp}}&{=} \lim_{\text{ACC} \left(f\right) \to 1}  \mathbb{E} \left[ S \left( f \left( X \right), Y \right) \right] - \mathbb{E} \left[ g_S \left( \mathbb{P}_{Y \mid f \left( X \right)} \right) \right] - \mathbb{E} \left[ S \left( f \left( X \right), Y \right) \right] + \inf_{Q \in \mathcal{P}_n} g_S \left( Q \right) \\
    & = \lim_{\text{ACC} \left(f\right) \to 1} \inf_{Q \in \mathcal{P}_n} g_S \left( Q \right) - \mathbb{E} \left[ g_S \left( \mathbb{P}_{Y \mid f \left( X \right)} \right) \right] \\
    & = \inf_{Q \in \mathcal{P}_n} g_S \left( Q \right) - \lim_{\text{ACC} \left(f\right) \to 1} \mathbb{E} \left[ g_S \left( \mathbb{P}_{Y \mid f \left( X \right)} \right) \right] \\
    & = \inf_{Q \in \mathcal{P}_n} g_S \left( Q \right) - \mathbb{E} \left[ g_S \left( \lim_{\text{ACC} \left(f\right) \to 1} \mathbb{P}_{Y \mid f \left( X \right)} \right) \right] \\
    \overset{\text{(i)}}&{=} \inf_{Q \in \mathcal{P}_n} g_S \left( Q \right) - \mathbb{E} \left[ g_S \left( e_{i \left( X \right)} \right) \right] \\
    \overset{\text{(ii)}}&{=} \inf_{Q \in \mathcal{P}_n} g_S \left( Q \right) - \mathbb{E} \left[ g_S \left( e_1 \right) \right] \\
    & = \inf_{Q \in \mathcal{P}_n} g_S \left( Q \right) - g_S \left( e_1 \right) \\
    \overset{\text{(iii)}}&{=} \inf_{Q \in \mathcal{P}_n} g_S \left( Q \right) - \inf_{Q \in \mathcal{P}_n} g_S \left( Q \right) \\
    & = 0 \\
\end{split}
\end{equation}

(i) Perfect accuracy results in deterministic predictions, i.e. $\forall z \in \mathcal{P}_n \colon \; \lim_{\text{ACC} \left(f\right) \to 1} \mathbb{P}_{Y \mid f \left( X \right)=z} \in \left\{ e_i \mid n \geq i \in \mathbb{N} \right\}$. If we define $i \colon \mathcal{X} \to \mathbb{N}_{\leq n}$ as $i \left( X \right) \coloneqq \arg\max_k \lim_{\text{ACC} \left(f\right) \to 1} \mathbb{P} \left(Y = k \mid f \left( X \right) \right)$, then we have $e_{i \left( X \right)} = \lim_{\text{ACC} \left(f\right) \to 1} \mathbb{P}_{Y \mid f \left( X \right)}$. \\
(ii) Follows from initial condition. \\
(iii) Since $g_S$ is concave and by the definition of $\mathcal{P}_n$, we have 
\begin{equation}
\forall z \in \mathcal{P}_n \exists \lambda_1, \dots, \lambda_{n} \geq 0, \sum_k \lambda_k = 1 \colon \; g_S \left(z\right) = g_S \left(\sum_k \lambda_k e_k \right) \geq \sum_k \lambda_k g_S \left( e_k \right) = \sum_k \lambda_k g_S \left( e_1 \right) = g_S \left( e_1 \right).
\end{equation}
From this follows that $g_S \left( e_1 \right) = \inf_{Q \in \mathcal{P}_n} g_S \left( Q \right)$.

\end{proof}

\subsection{Proposition \ref{prop:ub_grad}}

    Given injective functions $h, h' \; \colon \; \mathcal{P} \to \mathcal{P}$ we have 
    $$ \mathcal{U}_S \left( h \circ f \right) - \mathcal{U}_S \left( f \right) = \text{CE}_S \left( h \circ f \right) - \text{CE}_S \left( f \right) \quad \text{, }$$
    $$ \mathcal{U}_S \left( h \circ f \right) > \mathcal{U}_S \left( h' \circ f \right) \iff \text{CE}_S \left( h \circ f \right) > \text{CE}_S \left( h' \circ f \right)$$
    and (assuming $S$ is differentiable)
    $$\frac{\mathrm{d} \mathcal{U}_S \left( h \circ f \right)}{\mathrm{d} h} = \frac{\mathrm{d} \text{CE}_S \left( h \circ f \right)}{\mathrm{d} h}.$$

\begin{proof}

\begin{equation}
\begin{split}
    & \mathcal{U}_S \left( h \circ f \right) - \mathcal{U}_S \left( h' \circ f \right) \\
    \overset{\text{th }\ref{th:ub}}&{=} \mathbb{E} \left[ S \left( h \circ f \left( X \right), Y \right) \right] - \inf_{Q \in \mathcal{P}_n} g_S \left( Q \right) - \mathbb{E} \left[ S \left( h' \circ f \left( X \right), Y \right) \right] + \inf_{Q \in \mathcal{P}_n} g_S \left( Q \right) \\
    & = \mathbb{E} \left[ S \left( h \circ f \left( X \right), Y \right) \right] - \mathbb{E} \left[ S \left( h' \circ f \left( X \right), Y \right) \right] \\
    & = \mathbb{E} \left[ S \left( h \circ f \left( X \right), Y \right) \right] - \mathbb{E} \left[ g_S \left( \mathbb{P}_{Y \mid f \left( X \right)} \right) \right] - \mathbb{E} \left[ S \left( h' \circ f \left( X \right), Y \right) \right] + \mathbb{E} \left[ g_S \left( \mathbb{P}_{Y \mid f \left( X \right)} \right) \right] \\
    \overset{\text{(i)}}&{=} \mathbb{E} \left[ S \left( h \circ f \left( X \right), Y \right) \right] - \mathbb{E} \left[ g_S \left( \mathbb{P}_{Y \mid h \circ f \left( X \right)} \right) \right] - \mathbb{E} \left[ S \left( h' \circ f \left( X \right), Y \right) \right] + \mathbb{E} \left[ g_S \left( \mathbb{P}_{Y \mid h' \circ f \left( X \right)} \right) \right] \\
    \overset{\text{le }\ref{le:cal_decomp}}&{=} \text{CE}_S \left( h \circ f \right) - \text{CE}_S \left( h' \circ f \right)
\end{split}
\end{equation}

from which follows $ \mathcal{U}_S \left( h \circ f \right) - \mathcal{U}_S \left( f \right) = \text{CE}_S \left( h \circ f \right) - \text{CE}_S \left( f \right)$ and
$ \mathcal{U}_S \left( h \circ f \right) > \mathcal{U}_S \left( h' \circ f \right) \iff \text{CE}_S \left( h \circ f \right) > \text{CE}_S \left( h' \circ f \right)$.
Further we have for differentiable $S$

\begin{equation}
\begin{split}
    & \frac{\mathrm{d} \mathcal{U}_S \left( h \circ f \right)}{\mathrm{d} h} \\
    \overset{\text{th }\ref{th:ub}}&{=} \frac{\mathrm{d} \mathbb{E} \left[ S \left( h \circ f \left( X \right), Y \right) \right] - \inf_{Q \in \mathcal{P}_n} g_S \left( Q \right)}{\mathrm{d} h} \\
    & = \frac{\mathrm{d} \mathbb{E} \left[ S \left( h \circ f \left( X \right), Y \right) \right]}{\mathrm{d} h} \\
    & = \frac{\mathrm{d} \mathbb{E} \left[ S \left( h \circ f \left( X \right), Y \right) \right] - \mathbb{E} \left[ g_S \left( \mathbb{P}_{Y \mid f \left( X \right)} \right) \right]}{\mathrm{d} h} \\
    \overset{\text{(i)}}&{=} \frac{\mathrm{d} \mathbb{E} \left[ S \left( h \circ f \left( X \right), Y \right) \right] - \mathbb{E} \left[ g_S \left( \mathbb{P}_{Y \mid h \circ f \left( X \right)} \right) \right]}{\mathrm{d} h} \\
    \overset{\text{le }\ref{le:cal_decomp}}&{=} \frac{\mathrm{d} \text{CE}_S \left( h \circ f \right)}{\mathrm{d} h} \\
\end{split}
\end{equation}

(i) Since $h$ is injective, we have $\forall z \in \mathcal{P}_n \colon \; \left\{ x \in \mathcal{X} \mid f \left( x \right) = z \right\} = \left\{ x \in \mathcal{X} \mid h \circ f \left( x \right) = h \left( z \right) \right\}$ and $\left\{ \left(x, y \right) \in \mathcal{X} \times \mathcal{Y} \mid f \left( x \right) = z \right\} = \left\{ \left(x, y\right) \in \mathcal{X} \times \mathcal{Y} \mid h \circ f \left( x \right) = h \left( z \right) \right\}$.
Consequently $\mathbb{P} \left( Y \mid f \left( X \right) = z \right) = \frac{\mathbb{P} \left( Y, f \left( X \right) = z \right)}{\mathbb{P} \left( f \left( X \right) = z \right)} = \frac{\mathbb{P} \left( Y, h \circ f \left( X \right) = h \left(z\right) \right)}{\mathbb{P} \left( h \circ f \left( X \right) = h \left(z\right) \right)} = \mathbb{P} \left( Y \mid h \circ f \left( X \right) = h \left(z\right) \right)$.
\end{proof}

\section{Recalibration transformations}
\label{app:recal_trafos}

\subsection{calibrated and accuracy-preserving}
\label{zero_ce_trafo}

The binary case is directly given in the multi-class case, but if we only have a scalar output, which is common for binary classification, deriving the transformation is not that trivial.
Consequently, we handle this case separately.

We will also make use of the following lemma.

\begin{lemma}
    For random variables $Y$ and $X$, we have
    $$ \mathbb{P} \left( Y \mid \mathbb{P} \left( Y \mid X \right) \right) = \mathbb{P} \left( Y \mid X \right). $$
\label{le:exp_exp}
\end{lemma}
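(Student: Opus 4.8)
The plan is to recognize this as the elementary fact that conditioning on the random distribution $\mathbb{P}_{Y \mid X}$ — which is itself a function of $X$ — coarsens $\sigma(X)$ in a way that leaves $\mathbb{P}_{Y\mid X}$ unchanged. Write $Z \coloneqq \mathbb{P}_{Y \mid X}$, regarded as a random variable taking values in $\mathcal{P}$ equipped with the $\sigma$-algebra generated by the evaluation maps $P \mapsto P(A)$ for measurable $A \subseteq \mathcal{Y}$. Since a regular conditional distribution may be chosen to be $\sigma(X)$-measurable, we have $\sigma(Z) \subseteq \sigma(X)$, and (by construction of the $\sigma$-algebra on $\mathcal{P}$) for each fixed measurable $A$ the random variable $Z(A)$ is $\sigma(Z)$-measurable.

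Next, for each measurable $A \subseteq \mathcal{Y}$ I would compute $\mathbb{P}(Y \in A \mid Z)$ via the tower property: because $\sigma(Z) \subseteq \sigma(X)$,
\[
\mathbb{P}(Y \in A \mid Z) = \mathbb{E}\bigl[\mathbbm{1}\{Y \in A\} \mid Z\bigr] = \mathbb{E}\bigl[\,\mathbb{E}[\mathbbm{1}\{Y \in A\} \mid X] \mid Z\,\bigr] = \mathbb{E}\bigl[Z(A) \mid Z\bigr] = Z(A),
\]
where the last equality uses that $Z(A)$ is $\sigma(Z)$-measurable, so conditioning on $Z$ acts as the identity. Hence $\mathbb{P}(Y \in A \mid Z) = Z(A) = \mathbb{P}(Y \in A \mid X)$ almost surely for every measurable $A$.

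Finally I would conclude that the two conditional distributions coincide as elements of $\mathcal{P}$: since the event space admits a countable generating $\pi$-system, agreement of $\mathbb{P}(Y \in A \mid Z)$ and $\mathbb{P}(Y \in A \mid X)$ on that system extends to all $A$ on a single almost-sure event, giving $\mathbb{P}_{Y \mid \mathbb{P}_{Y\mid X}} = \mathbb{P}_{Y\mid X}$ almost surely. The only real obstacle is the measure-theoretic bookkeeping — making precise that $\mathbb{P}_{Y\mid X}$ is a genuine $\mathcal{P}$-valued random variable with $\sigma(Z)\subseteq\sigma(X)$ and that $Z\mapsto Z(A)$ is measurable — which in this paper's setting (finite $\mathcal{Y}$, or a standard Borel target space) is automatic from the existence of regular conditional probabilities; in the finite-class case one may sidestep it entirely by noting $Z = (\mathbb{P}(Y=k\mid X))_{k=1}^n$ is an ordinary $\mathbb{R}^n$-valued random variable and applying the tower property coordinatewise.
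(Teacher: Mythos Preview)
Your proof is correct and self-contained. The paper does not actually give a proof here: it simply writes ``Proof directly follows from Proposition~1 in \citet{vaicenavicius2019evaluating} with $h \equiv \mathrm{id}$.'' Your argument via $Z \coloneqq \mathbb{P}_{Y\mid X}$, the inclusion $\sigma(Z)\subseteq\sigma(X)$, and the tower property is exactly the elementary mechanism underlying that cited result, so you have effectively unpacked what the paper leaves as a black-box citation. Your closing remarks on the measure-theoretic caveats (regular conditional probabilities, countable generating $\pi$-system, and the trivial finite-$\mathcal{Y}$ case) are appropriate and more careful than anything the paper states.
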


\begin{proof}
Proof directly follows from Proposition 1 in \citet{vaicenavicius2019evaluating} with $h \equiv \mathrm{id}$.
\end{proof}

\subsubsection{Binary case (scalar output)}

Assume we are given $f \colon \mathcal{X} \to \left[0, 1 \right]$.

Define $t^f \colon \left[0, 1 \right] \to \left[0, 1 \right]$ as
\begin{equation}
\begin{split}
    t^f \left( p \right) = 
        \begin{cases}
        \mathbb{P} \left( Y = 1 \mid f \left( X \right) < 0.5 \right) & \text{, if } p < 0.5 \\
        \mathbb{P} \left( Y = 1 \mid f \left( X \right) \geq 0.5 \right) & \text{, else}
        \end{cases}
\end{split}
\end{equation}

The first line has as unbiased estimator the precision (or positive predictive value), the second the false omission rate.

This gives 

\begin{equation}
\begin{split}
    \mathbb{P} \left( Y = 1 \mid t^f \circ f \left( X \right) \right) & = \begin{cases}
        \mathbb{P} \left( Y = 1 \mid \mathbb{P} \left( Y = 1 \mid f \left( X \right) < 0.5 \right) \right) & \text{, if } f \left( X \right) < 0.5 \\
        \mathbb{P} \left( Y = 1 \mid \mathbb{P} \left( Y = 1 \mid f \left( X \right) \geq 0.5 \right) \right) & \text{, else}
    \end{cases} \\
    & = \begin{cases}
        \mathbb{P} \left( Y = 1 \mid f \left( X \right) < 0.5 \right) & \text{, if } f \left( X \right) < 0.5 \\
        \mathbb{P} \left( Y = 1 \mid f \left( X \right) \geq 0.5 \right) & \text{, else}
        \end{cases} \\
        & = t^f \circ f \left( X \right)
\end{split}
\end{equation}

i.e. $t^f \circ f$ is calibrated.
Further, if $\mathbb{P} \left( Y = 1 \mid f \left( X \right) < 0.5 \right) < \mathbb{P} \left( Y = 1 \mid f \left( X \right) \geq 0.5 \right)$, then $t^f \circ f$ has the same accuracy as $f$.
This can be assumed as given for any meaningful classifier.
The reduction in sharpness directly follows from the analog proof in the multi-class case.

\subsubsection{Multi-class case (vector output)}

Let $r \colon \mathcal{P}_n \to A$ with $A = \left\{ a \in \left\{0, 1 \right\}^K \mid \sum_k a_k = 1 \right\}$ be defined as $r \left( p \right) \coloneqq e_{\arg\max_k p_k}$.
In words, $r$ returns a vector of only zeros except a '1' at index $\arg\max_k p_k$ for input $p \in \mathcal{P}_n$.

Define $t^f \colon \mathcal{P}_n \to \mathcal{P}_n$ as
\begin{equation}
\begin{split}
    t^f \left( p \right) = \mathbb{P} \left( Y \mid r \circ f \left( X \right) = r \left( p \right) \right)
\end{split}
\end{equation}

(For easier notation, we say $\mathbb{P} \left( Y \right) \in \mathcal{P}_n$ )

Given a dataset $\left\{ \left( X_1, Y_1 \right), \dots, \left( X_m, Y_m \right) \right\}$, an unbiased estimator of $\mathbb{P} \left( Y \mid r \circ f \left( X \right) = a \right)$ $\forall a \in A$ is
$P_a = \frac{1}{\left\lvert I_a \right\rvert} \sum_{i \in I_a} e_{Y_i}$ with $I_a = \left\{ i \in \left\{1, \dots, m \right\} \mid r \circ f \left( X_i \right) = a \right\} $.
And since $\left\lvert A \right\rvert = n$, estimation is also feasible for higher number of classes.

We also have

\begin{equation}
\begin{split}
    \mathbb{P} \left( Y \mid t^f \circ f \left( X \right) \right) & =
    \mathbb{P} \left( Y \mid \mathbb{P} \left( Y \mid r \circ f \left( X \right) = r \circ f \left( X \right) \right) \right) \\
    & = \mathbb{P} \left( Y \mid \mathbb{P} \left( Y \mid r \circ f \left( X \right) \right) \right) \\
    & = \mathbb{P} \left( Y \mid r \circ f \left( X \right) \right) \\
    & = t^f \circ f \left( X \right)
\end{split}
\end{equation}

Consequently, $t^f \circ f$ is calibrated.

If $\arg\max_k f_k \left( X \right) = \arg\max_k \mathbb{P} \left( Y = k \mid \arg\max_k f_k \left( X \right) \right)$, then $\arg\max_k f_k \left( X \right) = \arg\max_k \mathbb{P} \left( Y = k \mid r \circ f \left( X \right) \right) = \arg\max_k \mathbb{P} \left( Y = k \mid r \circ f \left( X \right) = r \circ f \left( X \right) \right) = \arg\max_k t^f_k \circ f \left( X \right)$, i.e. $t^f$ is accuracy preserving.
Recall that $\arg\max_k f_k \left( X \right)$ is the predicted top-label, making $\arg\max_k \mathbb{P} \left( Y = k \mid \arg\max_k f_k \left( X \right) \right)$ the most likely outcome given a predicted top-label.
So, we can restate the above as: $t^f$ is accuracy preserving if for every predicted top-label the most likely outcome is that label.
This should hold in every meaningful practical setting, or else $t^f$ might as well improve the accuracy.

$t^f \circ f$ has lower sharpness as $f$ w.r.t. a proper score $S$.
This is a special case of the following proposition, where we write $\text{SHARP}_S \left( f \right)$ as the sharpness of model $f$ given by the sharpness term in Lemma \ref{lemma:cal_sharp_decomp} of a proper score $S$.

\begin{proposition}
    Assume Lemma \ref{lemma:cal_sharp_decomp} holds given a proper score $S$.
    For a function $m \colon \mathcal{P}_n \to \mathcal{P}_n$ and model $f \colon \mathcal{X} \to \mathcal{P}_n$, we have
    $$ \text{SHARP}_S \left( f \right) \geq \text{SHARP}_S \left( m \circ f \right).$$
\end{proposition}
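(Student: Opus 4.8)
The plan is to identify the sharpness term explicitly and then recognise it as a quantity that can only decrease under a ``coarsening'' of the conditioning variable. By Lemma~\ref{lemma:cal_sharp_decomp} with $Q = f(X)$ we have $\text{SHARP}_S(f) = \mathbb{E}[ d_S(\mathbb{P}_Y, \mathbb{P}_{Y \mid f(X)})]$, and similarly $\text{SHARP}_S(m \circ f) = \mathbb{E}[ d_S(\mathbb{P}_Y, \mathbb{P}_{Y \mid m \circ f(X)})]$, the marginal $\mathbb{P}_Y$ being the same in both. First I would observe that $m \circ f(X)$ is a measurable function of $f(X)$, hence $\sigma(m \circ f(X)) \subseteq \sigma(f(X))$; applying the tower property of conditional expectation coordinate-wise to $\mathbb{P}(Y = k \mid \cdot) = \mathbb{E}[\mathbbm{1}\{Y = k\} \mid \cdot]$ yields the ``coarsening'' identity $\mathbb{P}_{Y \mid m \circ f(X)} = \mathbb{E}[\mathbb{P}_{Y \mid f(X)} \mid m \circ f(X)]$, where the right-hand expectation is taken entrywise in $\mathcal{P}_n \subset \mathbb{R}^n$.

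Next I would show that $Q \mapsto d_S(\mathbb{P}_Y, Q)$ is convex on $\mathcal{P}_n$. Writing $d_S(\mathbb{P}_Y, Q) = s_S(\mathbb{P}_Y, Q) - g_S(Q)$, the term $s_S(\mathbb{P}_Y, Q) = \int_{\mathcal{Y}} S(\mathbb{P}_Y, y)\,\mathrm{d}Q(y)$ is affine in $Q$, while $g_S$ is concave because $S$ is proper, so $-g_S$ is convex; the sum is therefore convex.

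Finally, conditioning on $m \circ f(X)$ and invoking the conditional Jensen inequality with this convex function gives $\mathbb{E}[ d_S(\mathbb{P}_Y, \mathbb{P}_{Y \mid f(X)}) \mid m \circ f(X)] \geq d_S(\mathbb{P}_Y, \mathbb{E}[\mathbb{P}_{Y \mid f(X)} \mid m \circ f(X)]) = d_S(\mathbb{P}_Y, \mathbb{P}_{Y \mid m \circ f(X)})$, using the identity from the first step; taking expectations of both sides then yields $\text{SHARP}_S(f) \geq \text{SHARP}_S(m \circ f)$. The main obstacle is purely measure-theoretic: justifying the coarsening identity for conditional distributions and checking the integrability needed for Jensen's inequality. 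In the classification setting this is routine, since $\mathcal{P}_n$ is a compact subset of $\mathbb{R}^n$ and $d_S(\mathbb{P}_Y, \cdot)$ is finite there under the hypotheses of Lemma~\ref{lemma:cal_sharp_decomp}; I would also remark that when $m$ is injective one has $\sigma(m \circ f(X)) = \sigma(f(X))$, so the inequality becomes an equality, recovering the sharpness preservation of injective recalibration discussed in the main text.
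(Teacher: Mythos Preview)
Your argument is correct and is essentially the paper's proof: both rely on the coarsening identity $\mathbb{P}_{Y\mid m\circ f(X)}=\mathbb{E}[\mathbb{P}_{Y\mid f(X)}\mid m\circ f(X)]$ together with conditional Jensen applied via the concavity of $g_S$. The only cosmetic difference is that the paper first rewrites $\text{SHARP}_S(f)=g_S(\mathbb{P}_Y)-\mathbb{E}[g_S(\mathbb{P}_{Y\mid f(X)})]$ (using a short Fubini computation) and then applies Jensen to $g_S$ directly, whereas you keep the divergence form and note that $d_S(\mathbb{P}_Y,\cdot)$ is convex because its $s_S$ part is affine; the two are equivalent.
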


\begin{proof}

Since we assumed Lemma \ref{lemma:cal_sharp_decomp} holds, the conditions for Fubini's theorem are met.
We will use:

\begin{equation}
\begin{split}
    & \text{SHARP}_S \left( f \right) \\
    \overset{\text{le }\ref{lemma:cal_sharp_decomp}}&{=} \mathbb{E} \left[ d_S \left( \mathbb{P}_{Y}, \mathbb{P}_{Y \mid f \left( X \right)} \right) \right] \\
    \overset{\text{def }\ref{def:div_ent}}&{=} \mathbb{E} \left[ s_S \left( \mathbb{P}_{Y}, \mathbb{P}_{Y \mid f \left( X \right)} \right) \right] - \mathbb{E} \left[ g_S \left( \mathbb{P}_{Y \mid f \left( X \right)} \right) \right] \\
    \overset{\text{def }\ref{def:s_S}}&{=} \int \int S \left( \mathbb{P}_{Y}, y \right) \mathrm{d} \mathbb{P}_{Y \mid f \left( X \right)=z} \left( y \right) \mathrm{d} \mathbb{P}_{f \left( X \right)} \left( z \right) - \mathbb{E} \left[ g_S \left( \mathbb{P}_{Y \mid f \left( X \right)} \right) \right] \\
    & = \int S \left( \mathbb{P}_{Y}, y \right) \mathrm{d} \mathbb{P}_{Y, f \left( X \right)} \left( y, z \right) - \mathbb{E} \left[ g_S \left( \mathbb{P}_{Y \mid f \left( X \right)} \right) \right] \\
    \overset{\text{Fubini}}&{=} \int S \left( \mathbb{P}_{Y}, y \right) \int \mathrm{d} \mathbb{P}_{f \left( X \right) \mid Y=y} \left( z \right) \mathrm{d} \mathbb{P}_{Y} \left( y \right) - \mathbb{E} \left[ g_S \left( \mathbb{P}_{Y \mid f \left( X \right)} \right) \right] \\
    & = \int S \left( \mathbb{P}_{Y}, y \right) \mathrm{d} \mathbb{P}_{Y} \left( y \right) - \mathbb{E} \left[ g_S \left( \mathbb{P}_{Y \mid f \left( X \right)} \right) \right] \\
    \overset{\text{def }\ref{def:div_ent}}&{=} g_S \left( \mathbb{P}_{Y} \right) - \mathbb{E} \left[ g_S \left( \mathbb{P}_{Y \mid f \left( X \right)} \right) \right] \\
\label{eq:sharp_dec}
\end{split}
\end{equation}

Now, we can show

\begin{equation}
\begin{split}
    & \text{SHARP}_S \left( f \right) \\
    \overset{\text{eq }\eqref{eq:sharp_dec}}&{=} g_S \left( \mathbb{P}_{Y} \right) - \mathbb{E} \left[ g_S \left( \mathbb{P}_{Y \mid f \left( X \right)} \right) \right] \\
    & = g_S \left( \mathbb{P}_{Y} \right) - \mathbb{E}_{m \circ f \left( X \right)} \left[ \mathbb{E}_{f \left( X \right)} \left[ g_S \left( \mathbb{P}_{Y \mid f \left( X \right)} \right) \mid m \circ f \left( X \right) \right] \right] \\
    \overset{\text{Jensen}}&{\geq} g_S \left( \mathbb{P}_{Y} \right) - \mathbb{E}_{m \circ f \left( X \right)} \left[ g_S \left( \mathbb{E}_{f \left( X \right)} \left[ \mathbb{P}_{Y \mid f \left( X \right)} \mid m \circ f \left( X \right) \right] \right) \right] \\
    & = g_S \left( \mathbb{P}_{Y} \right) - \mathbb{E}_{m \circ f \left( X \right)} \left[ g_S \left( \mathbb{E}_{f \left( X \right)} \left[ \mathbb{E} \left[ e_Y \mid f \left( X \right) \right] \mid m \circ f \left( X \right) \right] \right) \right] \\
    & = g_S \left( \mathbb{P}_{Y} \right) - \mathbb{E}_{m \circ f \left( X \right)} \left[ g_S \left( \mathbb{E} \left[ e_Y \mid m \circ f \left( X \right) \right] \right) \right] \\
    & = g_S \left( \mathbb{P}_{Y} \right) - \mathbb{E}_{m \circ f \left( X \right)} \left[ g_S \left( \mathbb{P}_{Y \mid m \circ f \left( X \right)} \right) \right] \\
    \overset{\text{eq }\eqref{eq:sharp_dec}}&{=} \text{SHARP}_S \left( m \circ f \right) \\
\end{split}
\end{equation}

\end{proof}

If the underlying score is the log score, then the sharpness is the mutual information between predictions and target random variable.
Consequently, we can interpret the sharpness as generalized mutual information.
This gives the proposition the following intuitive meaning:
There exists no function, that can transform a random variable in a way such that the mutual information with another random variable is increased.
Or, in other words, we cannot add 'information' to a random variable by transforming it in a deterministic way.

\section{Proper U-scores}
\label{app:u-scores}

In this section we introduce a generalization of proper scores.
Based on U-statistics, we define proper U-scores.
This allows us to naturally extend the definition of proper calibration errors to be based on proper U-scores instead of just proper scores.
Consequently, we can cover more calibration errors with desired properties.
For example, we can show that the squared KCE \citep{widmann2019calibration} is a proper calibration error based on a U-score (but not on a conventional score).
The squared KCE has an unbiased estimator, thus, this extension of the definition of proper calibration errors has substantial practical value.

\subsection{Background}

Let $X_1, \dots, X_n$ be $n$ iid random variables and $\phi \left( x_1, \dots, x_r \right)$ a function with $r \leq n$.
Let $\mathbf{P} = \left\{ a \in \left\{1, \dots, n \right\}^r \mid a_1 < \dots < a_r \right\}$ be the set of $r$ sized ordered permutations out of $n$, i.e. $\left\lvert \mathbf{P} \right\rvert = \binom{n}{r}$.
Then $U = \frac{1}{\left\lvert \mathbf{P} \right\rvert} \sum_{a \in \mathbf{P}} \phi \left( X_{a_1}, \dots, X_{a_r} \right)$ is a unbiased minimum-variance estimator (UMVE) of $\mathbb{E} \left[ \phi \left( X_1, \dots, X_r \right) \right]$ and called U-statistic \citep{10.1214/aoms/1177730196}.

\subsection{Contributions}

Assume we have two measure spaces $\left( \mathcal{X}, \mathcal{F}_X \right)$ and $\left( \mathcal{Y}, \mathcal{F}_Y \right)$, and corresponding $\mathcal{P}_X$ and $\mathcal{P}_Y$ sets of possible probability measures.
We want to score a conditional distribution $P \colon \mathcal{X} \to \mathcal{P}_Y$ given another conditional distribution $Q \colon \mathcal{X} \to \mathcal{P}_Y$.

\begin{definition}
    A \textbf{U-scoring rule} $S$ is a function of the form $$S \; \colon \; \mathcal{P}_Y^r \times \mathcal{Y}^r \to \overline{\mathbb{R}}$$
    with $r \in \mathbb{N}$ and $\overline{\mathbb{R}} \coloneq \mathbb{R} \cup \left\{- \infty, \infty \right\}.$
\end{definition}

It takes $r$ predictions and events and returns a score.
For $r=1$, U-scoring rules are scoring rules in the common definition.

\begin{definition}
    A \textbf{U-scoring function} $s_S$ based on a U-scoring rule $S$ is defined as 
    \begin{equation}
    \begin{split}
        s_S \; \colon \; \mathcal{P}_Y^{2r} & \to \overline{\mathbb{R}} \\
        \left(P_1, \dots, P_r, Q_1, \dots, Q_r \right) & \mapsto \int_{\mathcal{Y}^r} S \left(P_1, \dots, P_r, y_1, \dots, y_r \right) \mathrm{d} \left(Q_1 \times \dots \times Q_r \right) \left( y \right)
    \end{split}
    \end{equation}
\end{definition}

For $r=1$, U-scoring functions are scoring functions in the common definition.
If $Q_1, \dots, Q_r$ are the distributions of $Y_1, \dots, Y_r$ we can also write $s \left(P_1, \dots, P_r, Q_1, \dots, Q_r \right) = \mathbb{E} \left[ S \left(P_1, \dots, P_r, Y_1, \dots, Y_r \right) \right]$.

\begin{definition}
    A U-scoring function $s_S$ (and its U-scoring rule $S$) is defined to be \textbf{proper} if and only if 
    \begin{equation}
    \begin{split}
        & \forall \mathbb{P} \in \mathcal{P}_X, \quad X_1, \dots, X_r \overset{iid}{\sim} \mathbb{P}, \quad \forall P,Q \colon \mathcal{X} \to \mathcal{P}_Y \; \colon \\
        & \quad \quad \mathbb{E} s_S \left(P \left( X_1 \right), \dots, P \left( X_r \right), Q \left( X_1 \right), \dots, Q \left( X_r \right) \right) \\
        & \quad \quad \geq \mathbb{E} s_S \left(Q \left( X_1 \right), \dots, Q \left( X_r \right), Q \left( X_1 \right), \dots, Q \left( X_r \right) \right)
    \end{split}
    \end{equation}
    and \textbf{strictly proper} if and only if additionally
    \begin{equation}
    \begin{split}
        & \forall \mathbb{P} \in \mathcal{P}_X, \quad X_1, \dots, X_r \overset{iid}{\sim} \mathbb{P}, \quad \forall P,Q \colon \mathcal{X} \to \mathcal{P}_Y \; \colon \\
        & Q \neq P \\
        & \implies \mathbb{E} s_S \left(P \left( X_1 \right), \dots, P \left( X_r \right), Q \left( X_1 \right), \dots, Q \left( X_r \right) \right) \\
        & \quad \quad > \mathbb{E} s_S \left(Q \left( P_1 \right), \dots, Q \left( P_r \right), Q \left( P_1 \right), \dots, Q \left( P_r \right) \right) \\
    \label{eq:strct_prop_u}
    \end{split}
    \end{equation}
\end{definition}

In words, $s_S$ (or $S$) is proper if comparing $Q$ with itself gives the best expected value, and strictly proper if no other $P \neq Q$ can achieve this value.
The U-statistic of a proper $s_S$ is a UMVE \citep{10.1214/aoms/1177730196}.
For $r=1$, proper U-scores are identical to proper scores if $\mathcal{P}_X$ is sufficiently large.
This holds since for function $f \colon \mathcal{X} \to \mathbb{R}$ and appropriate $\mathcal{P}_X$ we have: $\left( \forall \mu \in \mathcal{P}_X \colon \int f \mathrm{d} \mu = 0 \right) \iff f = 0$.

\begin{definition}
    $g \left( Q_1, \dots, Q_r \right) = s \left(Q_1, \dots, Q_r, Q_1, \dots, Q_r  \right)$ is called the (generalized or associated) entropy.
\end{definition}

\begin{definition}
    Given a proper U-score $S$, the associated \textbf{U-divergence} $d$ is defined as
    \begin{equation}
    \begin{split}
        d_S \; \colon \; \mathcal{P}_Y^{2r} & \to \overline{\mathbb{R}}_{\geq 0} \\
        \left(P_1, \dots, P_r, Q_1, \dots, Q_r \right) & \mapsto s_S \left(P_1, \dots, P_r, Q_1, \dots, Q_r \right) - g_S \left( Q_1, \dots, Q_r \right). \\
    \end{split}
    \end{equation}
\end{definition}

If $S$ is a strictly proper U-score, $Q_1, \dots, Q_r$ iid and $P_1, \dots, P_r$ iid, then $\mathbb{E} d_S$ is zero if and only if $\forall i \in \left\{1, \dots, r \right\} \colon \; Q_i \overset{a.s.}{=} P_i$. This follows directly by setting $P_i = P \left( X_i \right)$ and $Q_i = Q \left( X_i \right)$ in equation \eqref{eq:strct_prop_u}. \\

Assuming $P_1, \dots, P_r$ are random variables and $\mathbb{P}_{Y \mid P_1}, \dots, \mathbb{P}_{Y \mid P_r} \in \mathcal{P}_Y$ are the conditional distribution of independent random variables $Y_1, \dots, Y_r \sim \mathbb{P}_Y$, where each $Y_i$ only depends on $P_i$.
Under the condition that $g_S \left( \mathbb{P}_Y, \dots, \mathbb{P}_Y \right), \mathbb{E} \left[ g_S \left( \mathbb{P}_{Y \mid P_1}, \dots, \mathbb{P}_{Y \mid P_r} \right) \right], \mathbb{E} \left[ \left\lvert S \left(P_1, \dots, P_r, Y_1, \dots, Y_r \right) \right\rvert \right], \mathbb{E} \left[ \left\lvert S \left(\mathbb{P}_Y, \dots, \mathbb{P}_Y, Y_1, \dots, Y_r \right) \right\rvert \right] < \infty$, we have the decomposition

\begin{equation}
\begin{split}
    & \mathbb{E} \left[ S \left(P_1, \dots, P_r, Y_1, \dots, Y_r \right) \right] \\
    & = \mathbb{E} \left[ s_S \left( P_1, \dots, P_r, \mathbb{P}_{Y \mid P_1}, \dots, \mathbb{P}_{Y \mid P_r} \right) \right] \\
    & = g_S \left( \mathbb{P}_{Y}, \dots, \mathbb{P}_{Y} \right) \\
    & \quad \quad + \mathbb{E} \left[ d_S \left( P_1, \dots, P_r, \mathbb{P}_{Y \mid P_1}, \dots, \mathbb{P}_{Y \mid P_r} \right) \right] \\
    & \quad \quad - \mathbb{E} \left[ d_S \left( \mathbb{P}_{Y}, \dots, \mathbb{P}_{Y}, \mathbb{P}_{Y \mid P_1}, \dots, \mathbb{P}_{Y \mid P_r} \right) \right]. \\
\end{split}
\end{equation}

Proof is identical to proof of Lemma \ref{lemma:cal_sharp_decomp}.
The first term is the generalized entropy, the second the calibration, and the third the sharpness term.

Thus, every proper U-score $S$ induces a proper calibration error defined as

\begin{equation}
\begin{split}
    & \text{CE}_S \left( f \right) \\
    & = \mathbb{E} \left[ d_S \left( f \left( X_1 \right), \dots, f \left( X_r \right), \mathbb{P}_{Y \mid f \left( X_1 \right)}, \dots, \mathbb{P}_{Y \mid f \left( X_r \right)} \right) \right] \\
    & \text{ with iid } X_1, \dots, X_r.
\end{split}
\end{equation}

Since proper U-scores are identical to proper scores for $r=1$, this definition of proper calibration errors does not contradict definitions or findings in the main paper.
For any strictly proper U-score $S$, CE$_S$ of model $f$ is zero if and only if $f$ is calibrated.
This directly follows from the property of the U-divergence.
But, it should be noted that we cannot assume every property holding for $r=1$ also holds for $r \in \mathbb{N}$.
Investigating this can be seen as potential future work.

An example with $r=2$:

For positive definite kernel matrix $k$, define

\begin{equation}
\begin{split}
    & S \left( P_1, P_2, y_1, y_2 \right) = \left( P_1 - e_{y_1} \right)^\intercal k \left( P_1, P_2 \right) \left( P_2 - e_{y_2} \right)
\end{split}
\end{equation}

which gives

\begin{equation}
\begin{split}
    & g_S \left( Q_1, Q_2 \right) = 0\\
\end{split}
\end{equation}

and

\begin{equation}
\begin{split}
    & d_S \left( P_1, P_2, Q_1, Q_2 \right) \\
    & = \left( P_1 - Q_1 \right)^\intercal k \left( P_1, P_2 \right) \left( P_2 - Q_2 \right) \\
\end{split}
\end{equation}

and the calibration term

\begin{equation}
\begin{split}
    & \mathbb{E} \left[ d_S \left( P_1, P_2, \mathbb{P}_{Y \mid P_1}, \mathbb{P}_{Y \mid P_2} \right) \right] \\
    & = \mathbb{E} \left[ \left( P_1 - \mathbb{P}_{Y \mid P_1} \right)^\intercal k \left( P_1, P_2 \right) \left( P_2 -  \mathbb{P}_{Y \mid P_2} \right) \right] \\
\end{split}
\end{equation}

If $P_1, P_2 \sim \mathbb{P}_{f \left( X \right)}$, then this is the squared KCE (SKCE) of $f$ \citep{widmann2019calibration}.
\citet{widmann2019calibration} proved that the SKCE is zero if and only if $f$ is calibrated, and $f$ is calibrated if $f \left( X \right) = \mathbb{P}_{Y \mid f \left( X \right)}$, which includes $f \left( X \right) = \mathbb{P}_{Y \mid X}$.
Consequently, the associated divergence is not uniquely minimized by the target distribution.
Thus, the score of the SKCE is proper but not strictly proper.

Interestingly, $\mathbb{E} \left[ d_S \left( \mathbb{P}_{Y}, \mathbb{P}_{Y}, \mathbb{P}_{Y \mid f \left( X \right)}, \mathbb{P}_{Y \mid f \left( X^\prime \right)} \right) \right] = 0$ for $X, X^\prime$ iid, i.e. the score of the SKCE only measures calibration and ignores sharpness.
This fact is consistent with all previous findings of the SKCE.

\section{Extended experiments}
\label{app:plots}

In this section, we provide further details of the experimental setup and report additional results.
This includes results in the squared space, where the upper bound estimator is minimum-variance unbiased.
Further, we present results on the Friedman 1 regression problem, which is also used by \citet{widmann2021calibration}.

\subsection{Details on the ECE estimator simulation in Figure \ref{fig:ece_sim}}

We simulate model predictions of a 100 class classification problem with validation set size of 10'000 and test set size of 10'000.
For this, we sample the model predictions from a multivariate logistic normal distribution \citep{10.2307/2335470}, since it is a lot more flexible in its covariance matrix than a dirichlet distribution.
This brings the samples closer to real-world model predictions.
We sample the covariance matrix from an inverse-wishart distribution with a scale matrix of $I_{100} / 0.01$.
The scale matrix was tempered in such a way to receive model predictions with $\approx$ 87.6 \% classification accuracy.
We will explain the label sampling in the following.
Again, we aimed for realistic values.

Now, we want a model-target relation of which we know that the model is calibrated.
For this, we iterate over every model prediction and use each model prediction as a categorical distribution from which we sample the label.
Consequently, each model prediction is the ground truth distribution of each label.
This gives us calibrated prediction-target pairs, which we used to estimate the ECE of the perfectly calibrated 'model' in Figure \ref{fig:ece_sim} (blue line).
Next, to gradually decrease the level of calibration, we scale the predictions via different temperatures in the logit space.
Thus, we know that the 'model' of mediocre calibration (orange line) is worse than the 'model' of perfect calibration, and better than the 'model' of bad calibration (green line).

\subsection{Details on experimental setup of Section \ref{sec:exp}}
The experiments are conducted across several model-dataset combinations, for which logit sets are openly accessible \footnote{https://github.com/markus93/NN\_calibration/ and https://github.com/AmirooR/IntraOrderPreservingCalibration} \citep{kull2019beyond, rahimi2020intra}.
This includes the models LeNet 5 \citep{lecun1998gradient}, ResNet 50 (with and without pretraining), ResNet 50 NTS, ResNet 101 (with and without pretraining) ResNet 110, ResNet 110 SD, ResNet 152, ResNet 152 SD \citep{he2016deep}, Wide ResNet 32 \citep{zagoruyko2016wide}, DenseNet 40, DenseNet 161 \citep{huang2017densely}, and PNASNet5 Large \citep{liu2018progressive} and the datasets CIFAR10, CIFAR100 \citep{krizhevsky2009learning}, and ImageNet \citep{deng2009imagenet}.
We did not conduct model training by ourselves, and refer to \citep{kull2019beyond} and \citep{rahimi2020intra} for further details.
Validation and test set splits are predefined in every logit set.
We include TS, ETS, and DIAG as injective recalibration methods.
For optimization of TS and ETS, we modified the available implementation of \citet{zhang2020mix} and used the validation set as calibration set.
For DIAG, we used the exact implementation of \citet{rahimi2020intra}.

For every dataset we investigate ten ticks of different (sampled) test set sizes.
The ticks are determined to be equally apart in the $\log_2$ space.
The minimum is always 100 and the maximum the full available test set size.
We use repeated sampling with subsequent averaging to counteract the increased estimation variance for low test set sizes.
The estimated standard errors are also shown in the plots, but they are often barely visible.
The number of samples in each tick is along the following:

\begin{itemize}
    \item Tick 1 ($n=100$): 20000
    \item Tick 2: 15842
    \item Tick 3: 12168
    \item Tick 4: 8978
    \item Tick 5: 6272
    \item Tick 6: 4050
    \item Tick 7: 2312
    \item Tick 8: 1058
    \item Tick 9: 288
    \item Tick 10 (full test set): 2
\end{itemize}

The seeds for the sampling of the experiments have been saved.
Since we choose the amount of samples such that the estimation standard error is low, we expect similar results no matter the chosen seed.

All experiments have been computed on a machine with 1007 GB RAM and two Intel(R) Xeon(R) Gold 6230R CPU @ 2.10GHz.

\subsection{Estimated model calibration}

Calibration errors according to different estimators and for different model-dataset combinations are shown in figure \ref{fig:ce_root_1} and first row of figure \ref{fig:ce_squared_1} (in squared space).
These experiments confirm that the proposed upper bound is stable across a multitude of settings.

\begin{figure*}[h]
\vskip 0.2in
\centering
    \begin{subfigure}{.33\textwidth}
    \centering
    \includegraphics[width=\columnwidth]{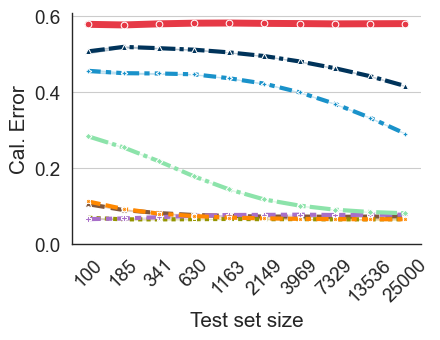}
    \caption{ResNet 152 on ImageNet}
    \end{subfigure}%
    \begin{subfigure}{.33\textwidth}
    \centering
    \includegraphics[width=\columnwidth]{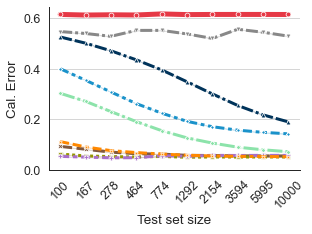}
    \caption{LeNet 5 on CIFAR10}
    \end{subfigure}%
    \begin{subfigure}{.33\textwidth}
    \centering
    \includegraphics[width=\columnwidth]{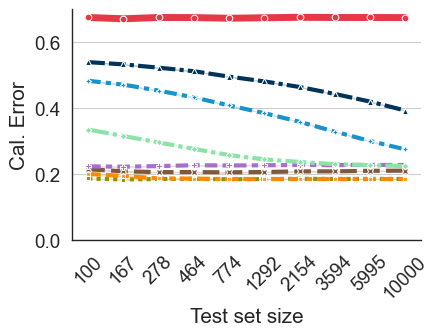}
    \caption{ResNet 110 on CIFAR100}
    \end{subfigure} \\
    \begin{subfigure}{.33\textwidth}
    \centering
    \includegraphics[width=\columnwidth]{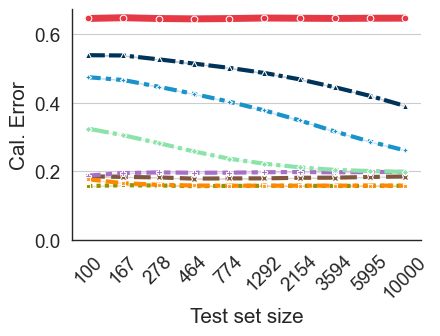}
    \caption{ResNet 110 SD on CIFAR100}
    \end{subfigure}%
    \begin{subfigure}{.33\textwidth}
    \centering
    \includegraphics[width=\columnwidth]{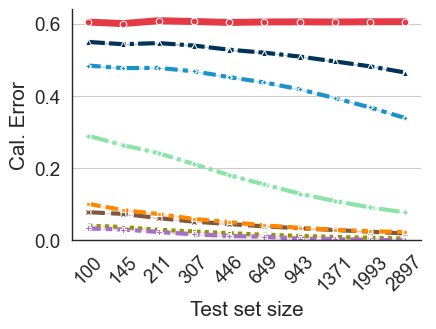}
    \caption{ResNet 50 on BIRDS}
    \end{subfigure}%
    \begin{subfigure}{.33\textwidth}
    \centering
    \includegraphics[width=\columnwidth]{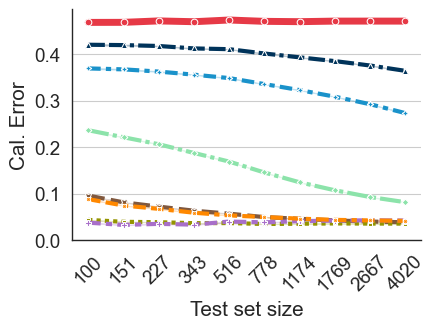}
    \caption{ResNet 101 on CARS}
    \end{subfigure} \\
    \begin{subfigure}{.33\textwidth}
    \centering
    \includegraphics[width=\columnwidth]{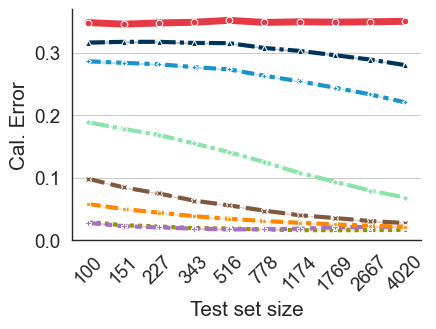}
    \caption{Pretrained ResNet 101 on CARS}
    \end{subfigure}%
    \begin{subfigure}{.33\textwidth}
    \centering
    \includegraphics[width=\columnwidth]{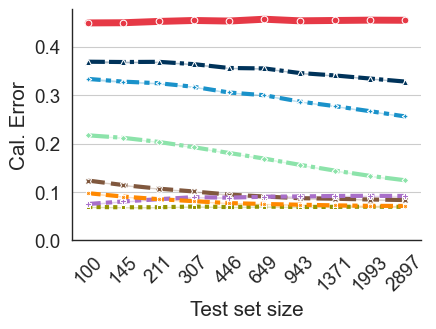}
    \caption{ResNet 50 NTS on BIRDS}
    \end{subfigure} %
    \begin{subfigure}{.33\textwidth}
    \centering
    \includegraphics[width=\columnwidth]{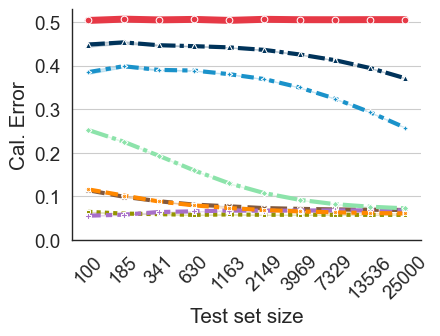}
    \caption{PNASNet5 Large on ImageNet}
    \end{subfigure} \\
    \begin{subfigure}{.33\textwidth}
    \centering
    \includegraphics[width=\columnwidth]{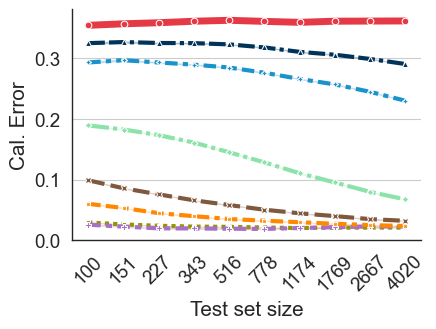}
    \caption{Pretrained ResNet 50 on CARS}
    \end{subfigure}%
    \begin{subfigure}{.33\textwidth}
    \centering
    \includegraphics[width=\columnwidth]{figures/all_resnet110_SD_c100.png}
    \caption{ResNet 110 SD on CIFAR100}
    \end{subfigure}%
    \begin{subfigure}{.33\textwidth}
    \centering
    \includegraphics[width=\columnwidth]{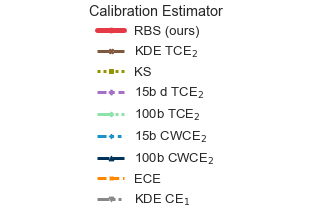}
    \end{subfigure}
\caption{Different calibration error estimates versus the test set size. The red line corresponds to the square root of the Brier score which is an upper bound of $\text{CE}_2$. The other estimators are lower bounds.}
\vskip -0.2in
\label{fig:ce_root_1}
\end{figure*}

\subsection{Recalibration improvement}
In the main text we investigated recalibration improvement of common estimators for the calibration error and compared their reliability to RBS. According to Proposition \ref{prop:ub_grad} and since RBS is asymptotically unbiased and consistent, it can be regarded as a reliable approximation of the real improvement of the recalibration methods. However, if we move to the squared space, our proposed upper bound is even provably reliable since it has a minimum-variance unbiased estimator. This motivates further experiments comparing existing calibration errors in the squared space, which we describe in the following. Here, we first report additional results comparing common estimators to RBS; we then report results in the squared space. We start with a formal description of the problem and experimental setup.\\

Let $D$ be a sampled subset of the full test set.
Let $f$ be the underlying model and $h$ an optimized recalibration method.
Let $e$ be an calibration error estimator taking a dataset and a model as inputs.
The recalibration improvement according to estimator $e$ is estimated via $e \left(D, f \right) - e \left(D, h \circ f \right)$.

\paragraph{Recalibration improvement of common estimators} We compute the recalibration improvement of common estimators on several test set samples of a given size and plot the average of these on the y-axis. We extend the results reported in the main text by covering additional datasets, models and architectures. These extended experiments confirm the findings reported in the main text, namely that only RBS reliably quantifies the improvement in calibration error after recalibration (Fig. \ref{fig:RC_delta_1}; standard errors are shown).
\paragraph{Recalibration improvement in the squared space}The recalibration improvement in the squared space according to estimator $e$ is estimated via $\left(e \left(D, f \right)\right)^2 - \left(e \left(D, h \circ f \right) \right)^2$.
The results are depicted in Figure \ref{fig:ce_squared_1}.
For CIFAR10, we also included the KDE estimator for CE$_2^2$ according to \citep{popordanoska2022}.
Only the Brier score (square of RBS) yields provably unbiased estimates of the true recalibration improvement w.r.t. CE$_2^2$.
In contrast to our approach, all other estimators are sensitive to test set size and/or substantially underestimate the true recalibration improvement in squared space.

For larger subsets of the CIFAR100 test set, the automatic bandwidth optimization for KDE CE$_2$ does not return a valid bandwidth.
These cases are omitted from Figure \ref{fig:sq_ETS_r32_c100} and \ref{fig:sq_ETS_RC_r32_c100}.
We also omitted KDE CE$_1$ as it shows similar behaviour as KDE CE$_2$ but is shifted substantially towards the negative like in the CIFAR10 case (Fig. \ref{fig:sq_TS_RC_d40_c10}).

\begin{figure*}[h]
\vskip 0.2in
\centering
    \begin{subfigure}{.33\textwidth}
    \centering
    \includegraphics[width=\columnwidth]{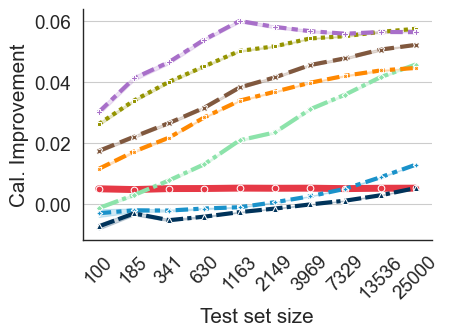}
    \caption{TS applied to ResNet 152 \\ on ImageNet}
    \end{subfigure}%
    \begin{subfigure}{.33\textwidth}
    \centering
    \includegraphics[width=\columnwidth]{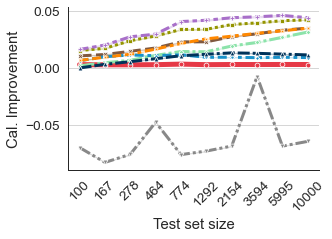}
    \caption{TS applied to LeNet 5 \\ on CIFAR10}
    \end{subfigure}%
    \begin{subfigure}{.33\textwidth}
    \centering
    \includegraphics[width=\columnwidth]{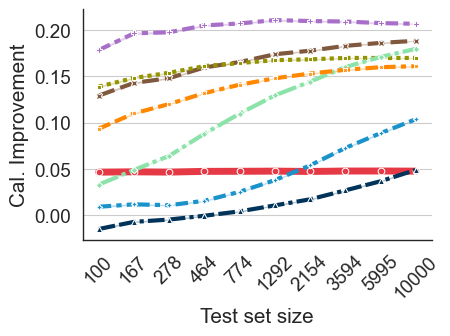}
    \caption{TS applied to ResNet 110 \\ on CIFAR100}
    \end{subfigure} \\
    \begin{subfigure}{.33\textwidth}
    \centering
    \includegraphics[width=\columnwidth]{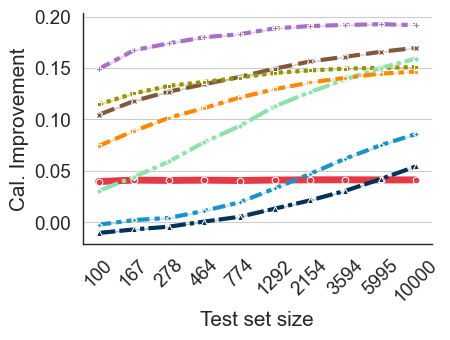}
    \caption{TS applied to ResNet 110 SD \\ on CIFAR100}
    \end{subfigure}%
    \begin{subfigure}{.33\textwidth}
    \centering
    \includegraphics[width=\columnwidth]{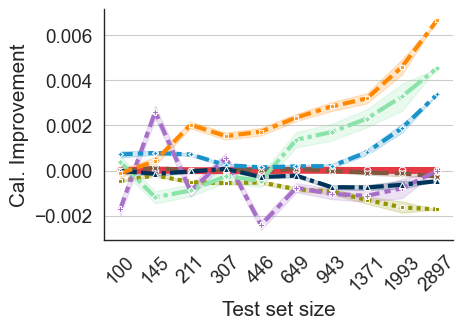}
    \caption{TS applied to ResNet 50 \\ on BIRDS}
    \end{subfigure}%
    \begin{subfigure}{.33\textwidth}
    \centering
    \includegraphics[width=\columnwidth]{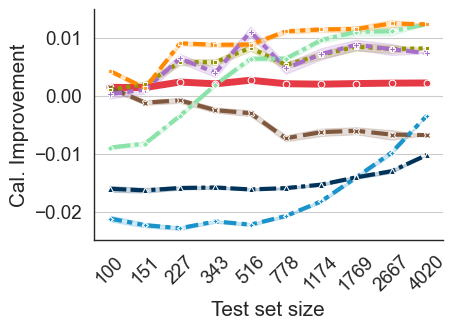}
    \caption{DIAG applied to ResNet 101 \\ on CARS}
    \end{subfigure} \\
    \begin{subfigure}{.33\textwidth}
    \centering
    \includegraphics[width=\columnwidth]{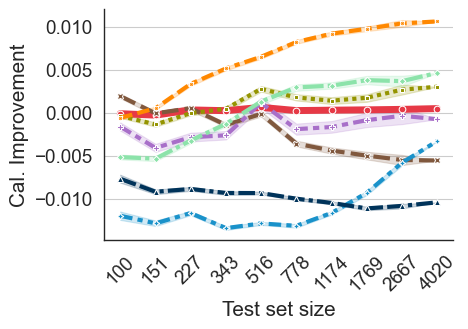}
    \caption{DIAG applied to \\ pretrained ResNet 101 on CARS}
    \end{subfigure}%
    \begin{subfigure}{.33\textwidth}
    \centering
    \includegraphics[width=\columnwidth]{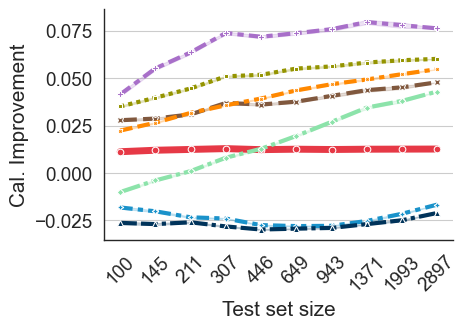}
    \caption{DIAG applied to \\ ResNet 50 NTS on BIRDS}
    \end{subfigure} %
    \begin{subfigure}{.33\textwidth}
    \centering
    \includegraphics[width=\columnwidth]{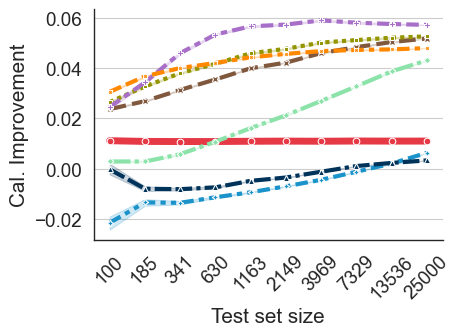}
    \caption{DIAG applied to PNASNet5 Large \\ on ImageNet}
    \end{subfigure} \\
    \begin{subfigure}{.33\textwidth}
    \centering
    \includegraphics[width=\columnwidth]{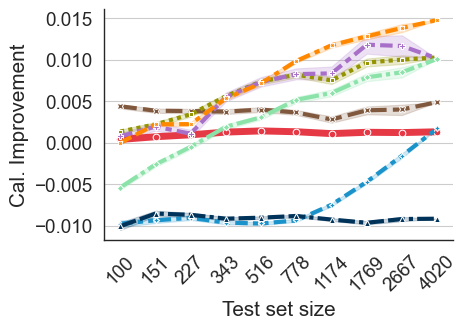}
    \caption{DIAG applied to \\ pretrained ResNet 50 on CARS}
    \end{subfigure}%
    \begin{subfigure}{.33\textwidth}
    \centering
    \includegraphics[width=\columnwidth]{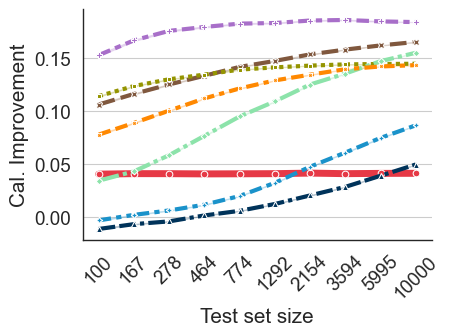}
    \caption{ETS applied to ResNet 110 SD \\ on CIFAR100}
    \end{subfigure}%
    \begin{subfigure}{.33\textwidth}
    \centering
    \includegraphics[width=\columnwidth]{figures/legend.png}
    \end{subfigure}
\caption{Different calibration improvement estimates versus the test set size. The red line corresponds to the square root of the Brier score.}
\vskip -0.2in
\label{fig:RC_delta_1}
\end{figure*}

\begin{figure*}[h]
\vskip 0.2in
\centering
    \begin{subfigure}{.33\textwidth}
    \centering
    \includegraphics[width=\columnwidth]{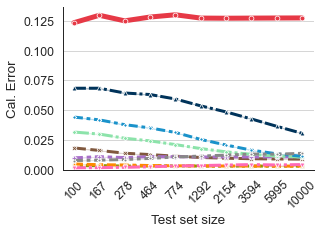}
    \caption{DenseNet 40 on CIFAR10}
    \end{subfigure}%
    \begin{subfigure}{.33\textwidth}
    \centering
    \includegraphics[width=\columnwidth]{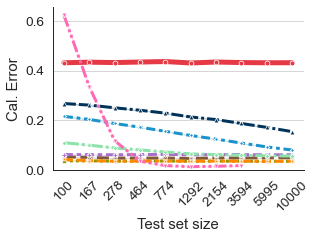}
    \caption{Wide ResNet 32 on CIFAR100}
    \label{fig:sq_ETS_r32_c100}
\end{subfigure}%
    \begin{subfigure}{.33\textwidth}
    \centering
    \includegraphics[width=\columnwidth]{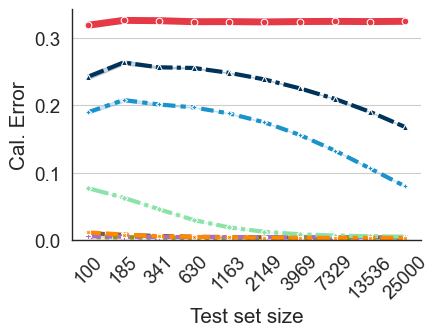}
    \caption{DenseNet 161 on ImageNet}
    \end{subfigure} \\
    \begin{subfigure}{.33\textwidth}
    \centering
    \includegraphics[width=\columnwidth]{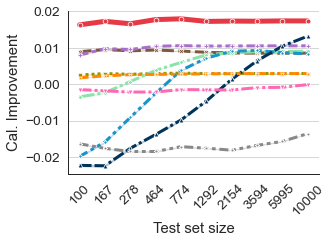}
    \caption{DenseNet 40 on CIFAR10}
    \label{fig:sq_TS_RC_d40_c10}
    \end{subfigure}%
    \begin{subfigure}{.33\textwidth}
    \centering
    \includegraphics[width=\columnwidth]{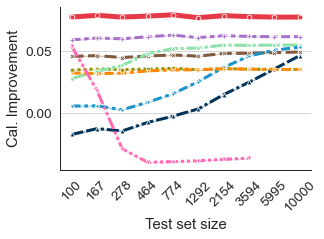}
    \caption{Wide ResNet 32 on CIFAR100}
    \label{fig:sq_ETS_RC_r32_c100}
    \end{subfigure}%
    \begin{subfigure}{.33\textwidth}
    \centering
    \includegraphics[width=\columnwidth]{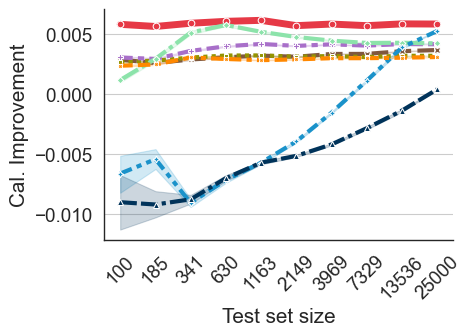}
    \caption{DenseNet 161 on ImageNet}
    \end{subfigure} \\
    \begin{subfigure}{.33\textwidth}
    \centering
    \includegraphics[width=\columnwidth]{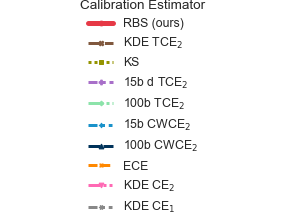}
    \end{subfigure}
\caption{
\textbf{First row:} Different squared calibration error estimates versus the test set size. The red line corresponds to the Brier score which is an upper bound of $\text{CE}_2^2$. The other errors are lower bounds. \textbf{Second row:} Estimated improvements in the squared space of injective recalibration methods in different settings. Our approach captures the true improvement w.r.t. $\text{CE}_2^2$ in an unbiased manner.
}
\vskip -0.2in
\label{fig:ce_squared_1}
\end{figure*}

\subsection{Variance regression}

In the following, we give more details on the variance regression experiment in the main paper, but also add further results of the Friedman 1 regression problem.

In all following scenarios, we are interested in the effect of recalibration towards predictive uncertainty.
For this, we use Platt scaling ($x \to wx + b$ with parameters $w,b \in \mathbb{R}$) of the variance output and optimize $w$ and $b$ with the L-BFGS optimizer on the validation set.
Further, since Platt scaling is injective, we apply Theorem \ref{th:ub} and Proposition \ref{prop:ub_grad} to treat the DSS score as an calibration error for recalibration.
Consequently, optimizing Platt scaling with the DSS score is equivalent to optimizing the associated calibration error.

We will use this recalibration procedure in each iteration during model training, but without modifying the model for the next training step.

\citet{widmann2021calibration} used the MSE as training objective, while we use the DSS, as it is a natural extension of the MSE to variance regression.

We repeat each experiment with five distinct seeds and aggregate the results, giving the characteristic error bands in each figure.

\subsubsection{UCI dataset \textit{Residential Building}}

The Residential Building dataset consists of 107 features and 372 data instances.
To have similar conditions as the Friedman 1 regression problem in the next section, we split the dataset into a training, validation, and test set with sizes 100, 100, and 173.
We use a fully-connected mixture density network as \citet{widmann2021calibration}, except we are also using an output node for the variance prediction, and reduce its size for a more stable training.
Specifically, it consists of 107 input nodes, 200 hidden nodes, and 2 output nodes.
Similar to \citet{widmann2021calibration}, we use Adam as model optimizer with default parameters ($0.001$ learning rate, $0.9$ first momentum decay, $0.999$ second momentum decay).

We show similar results as in Figure \ref{fig:recal_regr} but with aggregations from different runs with distinct seeds.
The evaluations are depicted in \ref{fig:app_recal_regr} and repeat the findings in the main paper.

\begin{figure*}[t]
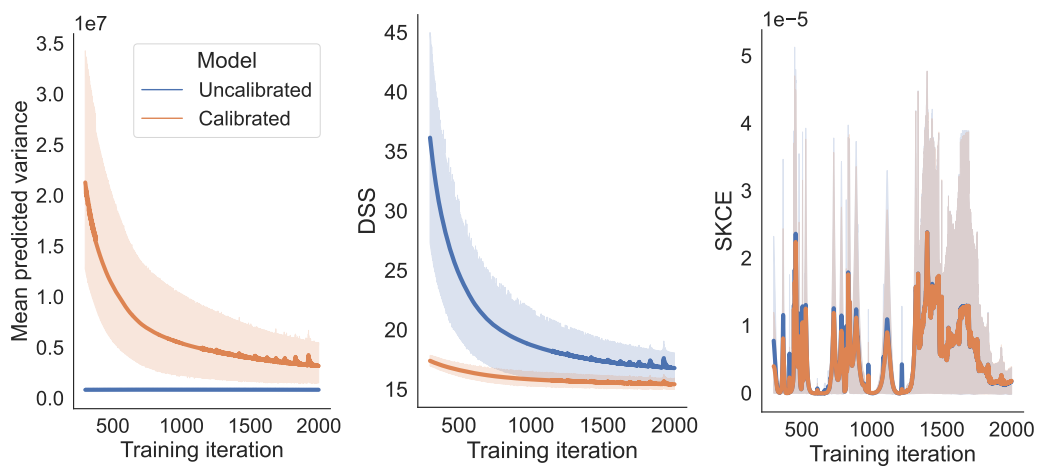

\vskip 0.2in
\centering
    \begin{subfigure}{.33\textwidth}
    \centering
    \includegraphics[width=\columnwidth]{figures/fig_avg_var_ResBuild.pdf}
    \end{subfigure}%
    \begin{subfigure}{.34\textwidth}
    \centering
    \includegraphics[width=\columnwidth]{figures/fig_dss_ResBuild.pdf}
    \end{subfigure}%
    \begin{subfigure}{.32\textwidth}
    \centering
    \includegraphics[width=\columnwidth]{figures/fig_skce_ResBuild.pdf}
    \end{subfigure}
\caption{
    \textbf{Left:} Average predicted variance throughout model training before and after recalibration.
    Initially, due to a bad fit, recalibration adjusts the variance accordingly for better communicated uncertainty.
    Once the model fit improves, the predicted variance requires less adjustment due to less uncertainty in each prediction.
    \textbf{Middle:} DSS communicates reasonably changes in the variance due to recalibration.
    \textbf{Right:} SKCE fails to capture the variance trend and behaves erratically.
}
\label{fig:app_recal_regr}
\vskip -0.2in
\end{figure*}

\subsubsection{Friedman 1}

The Friedman 1 regression problem consists of ten feature variables but only five influence the target variable \citep{friedman1991multivariate}.
The target variable is given by

\begin{equation}
    Y = 10 \sin( \pi X_1 X_2) + 20 (X_3 - 0.5)^2 + 10 X_4 + 5 X_5 + \epsilon
\end{equation}

with input $X_i \sim U\left(0, 1 \right)$ independently uniformly distributed for $i=1, \dots, 10$, and noise $\epsilon \sim \mathcal{N} \left(0, 1 \right)$.
It was used lately to investigate model calibration in the regression case \citep{widmann2021calibration}.
We slightly modify the Friedman 1 regression problem to have varying variance depending on the sixth feature variable, i.e. $\epsilon \sim \mathcal{N} \left(0, 0.5 + X_6 \right)$.
This makes it non-trivial to give an estimate of the predictive uncertainty in the form of the predicted variance.
We sample a training, validation, and test set, each of size 100 similar to \citet{widmann2021calibration}.

We use the same fully-connected mixture density network as \citet{widmann2021calibration}, except we are also using an output node for the variance prediction.
Further, we use the same training details as \citet{widmann2021calibration}.
We repeat each run three times and aggregate the results.

We again compare DSS, SKCE, and average predicted variance throughout model training with and without recalibration.
We depict the performance according to various errors during model training in Figure \ref{fig:recal_regr_f}.
As can be seen, recalibration adjusts overfitting of the predicted variance.
Consequently, the uncertainty communication of the model is improved.
Further, the SKCE seems to be less influenced by the variance calibration and more so by the mean calibration.
This is a significant drawback when uncertainty communication is done via the predicted variance.

\begin{figure*}[t]
\vskip 0.2in
\centering
    \begin{subfigure}{.33\textwidth}
    \centering
    \includegraphics[width=\columnwidth]{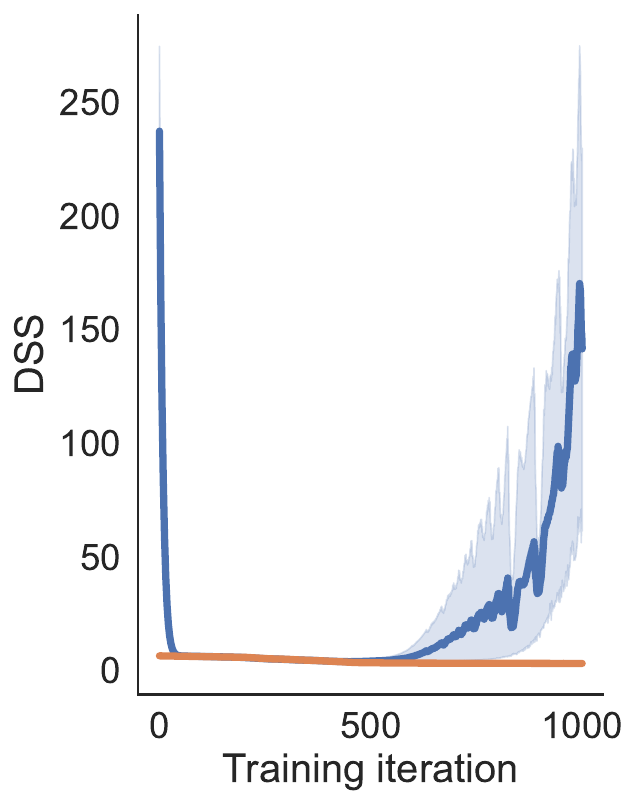}
    \end{subfigure}%
    \begin{subfigure}{.34\textwidth}
    \centering
    \includegraphics[width=\columnwidth]{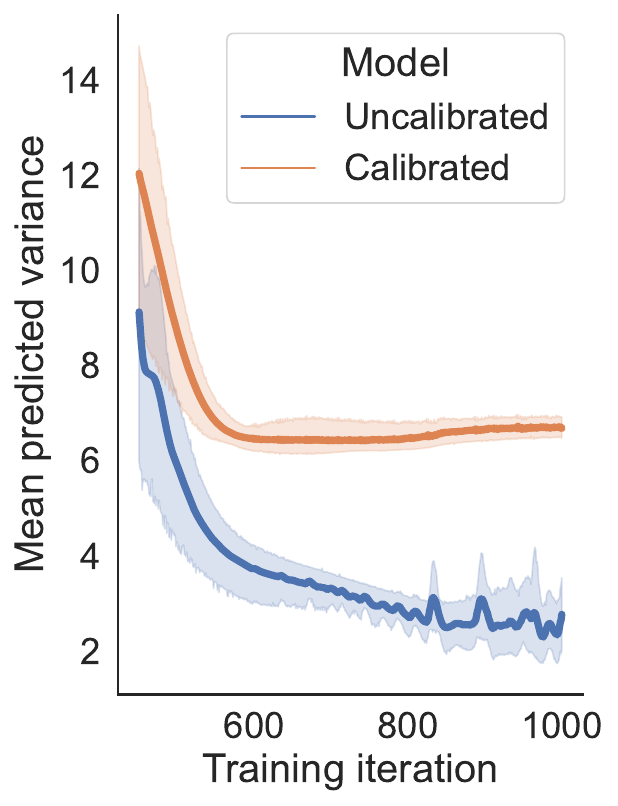}
    \end{subfigure} \\
    \begin{subfigure}{.32\textwidth}
    \centering
    \includegraphics[width=\columnwidth]{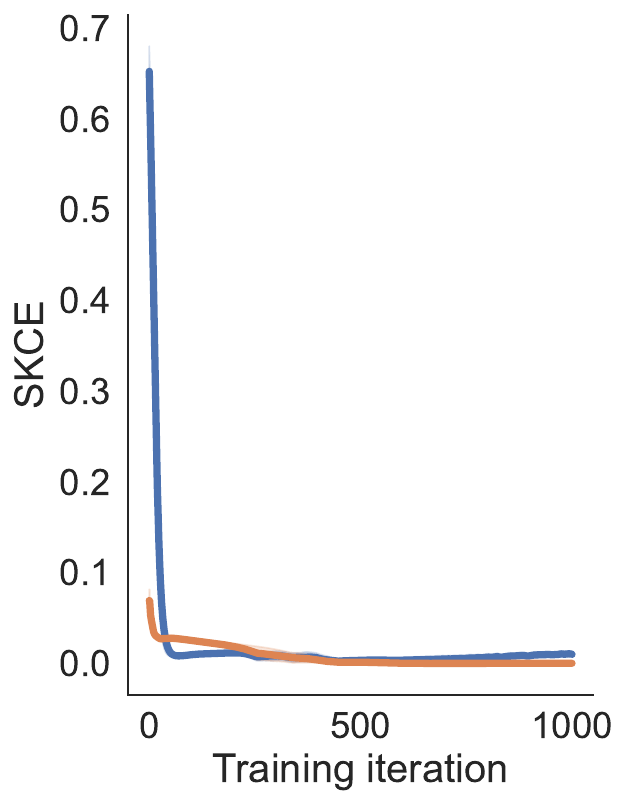}
    \end{subfigure}
    \begin{subfigure}{.32\textwidth}
    \centering
    \includegraphics[width=\columnwidth]{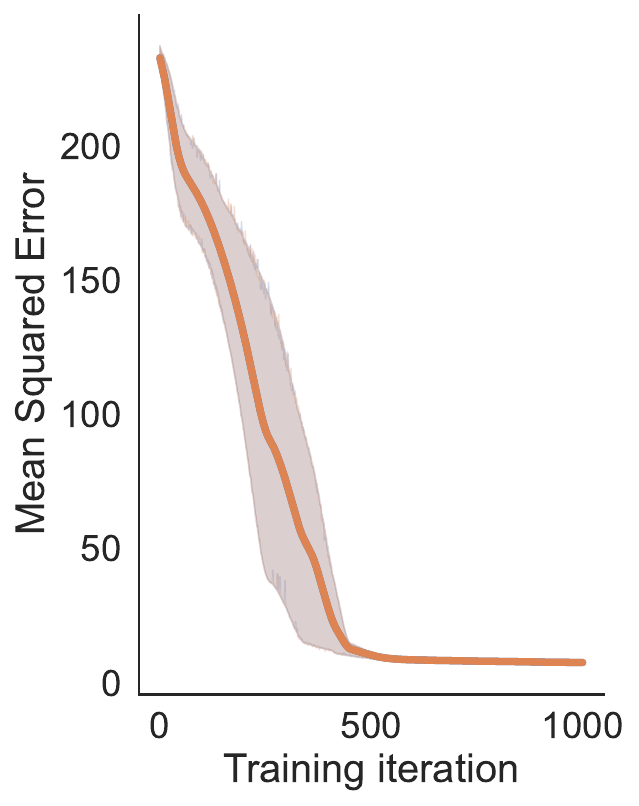}
    \end{subfigure}
\caption{
    \textbf{Upper left:} DSS shows that overfitting occurs at some point during training.
    Recalibration successfully adjusts this overfit. This indicates that most of the overfit is regarding variance and not mean prediction.
    \textbf{Upper right:} Average predicted variance starting from the point of overfitting.
    Recalibration adjusts the steadily decreasing predicted variance to a constant level.
    \textbf{Lower left:} SKCE signals improved calibration at the start of training but remains relatively unchanged by the variance overfit.
    \textbf{Lower right:} The MSE curve confirms that the predicted mean is not overfitted and suggests the SKCE is more sensitive to the calibration of the mean than the calibration of the variance estimate.
    Our recalibration does not influence the predicted mean, thus we omit the recalibrated model from this subfigure.
}
\label{fig:recal_regr_f}
\vskip -0.2in
\end{figure*}

\end{document}